\pgfplotsset{compat=1.15}
\newtheorem{theorem}{Theorem}
\newtheorem{proposition}[theorem]{Proposition}
\newtheorem{lemma}[theorem]{Lemma}
\newtheorem{corollary}[theorem]{Corollary}
\newtheorem*{definition}{Definition}
\newtheorem{example}{Example}
\newtheorem*{remark}{Remark}
\newcommand{\prob}[1]{\mathrm{Pr}\left(#1\right)}
\newcommand{\expecto}[2]{\mathrm{E}_{#1}\left[#2\right]}
\newcommand{\id}[1]{\mathbb{I}\left(#1\right)}
\def\R{\mathbb{R}}
\def\suchthat{\;:\;}
\title{How Far Can Fairness Constraints Help Recover From Biased Data?}
\author{Mohit Sharma$^1$\thanks{Work done during internship at Microsoft Research India. The author is supported by Microsoft Research India Joint PhD fellowship.} \and Amit Deshpande$^2$}
\date{%
    $^1$IIIT Delhi\\%
    $^2$Microsoft Research India\\[2ex]%
    \today
}
\begin{document}

\maketitle

\begin{abstract}

A general belief in fair classification is that fairness constraints incur a trade-off with accuracy, which biased data may worsen. Contrary to this belief, Blum \& Stangl \cite{blum2019recovering} show that fair classification with equal opportunity constraints even on extremely biased data can recover optimally accurate and fair classifiers on the original data distribution. Their result is interesting because it demonstrates that fairness constraints can implicitly rectify data bias and simultaneously overcome a perceived fairness-accuracy trade-off. Their data bias model simulates under-representation and label bias in underprivileged population, and they show the above result on a stylized data distribution with i.i.d.~label noise, under simple conditions on the data distribution and bias parameters.

We propose a general approach to extend the result of Blum \& Stangl \cite{blum2019recovering} to different fairness constraints, data bias models, data distributions, and hypothesis classes. We strengthen their result, and extend it to the case when their stylized distribution has labels with Massart noise instead of i.i.d.~noise. We prove a similar recovery result for arbitrary data distributions using fair reject option classifiers. We further generalize it to arbitrary data distributions and arbitrary hypothesis classes, i.e., we prove that for any data distribution, if the optimally accurate classifier in a given hypothesis class is fair and \emph{robust}, then it can be recovered through fair classification with equal opportunity constraints on the biased distribution whenever the bias parameters satisfy certain simple conditions. Finally, we show applications of our technique to time-varying data bias in classification and fair machine learning pipelines.
\end{abstract}

\section{Introduction}

Fairness in machine learning has been an important qualitative and quantitative research topic for more than a decade \cite{barocas-hardt-narayanan,mehrabi2021survey}. Several quantitative fairness metrics and various bias mitigation techniques developed over the years are available to practitioners as open-source fairness toolkits \cite{aif360-oct-2018}. Fairness-accuracy trade-offs often seem inevitable, and bias mitigation guarantees can fail when there is a mismatch between train and test data distributions; it is a double whammy when we get neither the desired accuracy nor the desired fairness on deployment. Hence, it is important to understand the role of fair classification and fairness constraints when the training data in machine learning pipelines has systematic biases \cite{blum2019recovering,konstantinov2022impossibility, schrouff2022maintaining}.

Blum \& Stangl \cite{blum2019recovering} propose a model for data bias that simulates systematic under-representation of the underprivileged group and label bias/flip on the underprivileged positive population. 
Applying this data bias model to a stylized data distribution with i.i.d.~label noise, they prove a recovery result that makes the following high-level point: careful choice of fairness constraints can implicitly rectify even extreme data bias and overcome a perceived fairness-accuracy trade-off. Formally, for their stylized distribution with i.i.d.~label noise, they prove that the optimal fair classifier satisfying equal opportunity on a biased (or bias-induced) data distribution coincides with the Bayes optimal classifier (that also happens to be perfectly fair) on the original data distribution, if the bias parameters satisfy certain simple conditions. Their proof is tailored to equal opportunity constraints and uses a careful case analysis and iterative argument that is not amenable to arbitrary distributions or hypothesis classes. 
They also show a specific distribution where a similar result fails to hold if we use demographic parity constraints instead of equal opportunity constraints. 

We take a significantly different approach and observe that the regression function for group-aware classification (i.e., the positive class probability conditioned on the input features and sensitive attribute) undergoes a linear fractional transformation when we apply various data bias models, including the one proposed by Blum \& Stangl \cite{blum2019recovering}. This observation plays an important role in our proofs because the optimal group-aware fair classifier among the hypothesis class of all binary classifiers can be mathematically characterized by group-dependent thresholds on the regression function \cite{menon2018cost,chzhen2019leveraging}. Under the conditions on the data distribution and bias parameters as in Blum \& Stangl \cite{blum2019recovering}, we show that the fairness-corrected thresholds applied to a linear fractional transformation recover the Bayes optimal classifier on the stylized distribution of Blum \& Stangl \cite{blum2019recovering} with i.i.d.~label noise. Our proof uncovers a stronger version of their result that these conditions on the bias parameters are both necessary and sufficient. Moreover, our proof readily extends to the case when the above distribution has Massart or malicious label noise \cite{massart2006risk,rivest1994formal,sloan1996} instead of i.i.d.~label noise. We extend our recovery result to arbitrary data distributions by considering reject option classifiers \cite{bartlett2008classification,cortes2016learning} that can abstain from prediction on a small fraction of inputs by paying a penalty. We further generalize our results to allow arbitrary data distribution as well as arbitrary hypothesis class. Generalizing our ideas beyond threshold-based arguments, we show that on arbitrary data distributions, if the optimal classifier in a given arbitrary hypothesis class is fair and \emph{robust}, then it can be recovered through fair classification on the biased distribution, whenever the bias parameters satisfy certain simple conditions. Finally, we propose multi-step models to capture time-varying data bias and machine learning pipeline, and investigate necessary conditions on the data distribution and bias parameters for similar recovery results in the finite and infinite time horizons. Now we outline the organization of our results in the paper.
\begin{itemize}
\item Section \ref{sec:prelim} contains our theoretical setup for data bias and group-aware fair classification. In Subsection \ref{subsec:data_bias_models}, we describe some recently studied data bias models in fair classification \cite{blum2019recovering,dai2020label,wang2021fair, biswas2019quantifying} (see Examples \ref{eg:blumStangl}, \ref{eg:wang_model}, \& \ref{eg:biswas_model}), and show that all of them result in a linear fractional transformation of the regression function. Subsection \ref{subsec:biased-TPR-TNR} captures how fairness constraints and threshold-based characterizations of optimal fair classifiers change under data bias. We focus on equal opportunity constraints and the data bias model of Blum \& Stangl \cite{blum2019recovering} (Examples \ref{eg:blumStangl}) as an illustrative example running through the rest of our paper.
\item In Section \ref{sec:blumStanglMassart}, we extend the result of Blum \& Stangl \cite{blum2019recovering} to the case when their stylized distribution has Massart label noise instead of i.i.d.~label noise (Theorem \ref{thm:blum_stangl_eo_massart}). We strengthen their result (Theorem \ref{thm:blumStangl-reprove}) and prove its analog for demographic parity constraints replacing equal opportunity (Theorem \ref{thm:dp-recovery}).
\item In Section \ref{sec:reject_option}, we show that our proof technique based on threshold classifiers extends to \emph{arbitrary} data distributions, if we allow reject option classifiers that can abstain from prediction on a small fraction of inputs.
\item In Section \ref{sec:robust}, we invent clever workarounds to generalize our results further to recover the optimal fair and \emph{robust} hypothesis in an \emph{arbitrary} hypothesis class simply by fair classification on the biased version of an \emph{arbitrary} data distribution (Theorem \ref{thm:eo-robust-recovery}).
\item Finally, in Section \ref{sec:time_varying}, we propose time-varying data bias models (also applicable to multi-stage machine learning pipelines) and investigate necessary condition for extending our recovery results above to the finite and infinite time horizon (Theorems \ref{thm:uniform-time-varying-eo} \& \ref{thm:eo-time-varying}). 
\end{itemize}

\section{Related Work} \label{sec:related}
There has been a plethora of recent work on fairness-accuracy trade-offs and data bias \cite{menon2018cost, wick2019unlocking, blum2019recovering, dutta2020there, maity2021does}, but the closest to our work is the result of Blum \& Stangl \cite{blum2019recovering} that we strengthen and generalize in many ways. Though we take equal opportunity constraints \cite{hardt2016} and the data bias model of Blum \& Stangl \cite{blum2019recovering} for under-representation and label bias as an illustrative example running through our paper, our techniques readily extend to other popular fairness constraints such as demographic parity \cite{dwork2012} and other recent data bias models \cite{dai2020label,wang2021fair,biswas2021ensuring}; many possible extensions are covered in the Appendix. Our proof techniques in Section \ref{sec:blumStanglMassart} \& \ref{sec:reject_option} lean heavily on threshold-based characterizations of optimal fair classifiers known in previous work \cite{menon2018cost,chzhen2019leveraging,zeng2022bayes,zeng2022fair}.


Now we describe recent related works that complement our approach to rectify data bias in fair classification. The feasibility of fair classification under data corruption and malicious noise in training data has been studied in \cite{konstantinov2022impossibility, blum2023vulnerability}. Recent work has studied fair classification with noisy sensitive attributes \cite{lamy2019noise,ghosh2023fair,celis2021fair}, noisy labels \cite{fogliato2020fairness}, feature-dependent label bias \cite{jiang2020identifying}, sample selection bias \cite{du2021fair,zhu2023consistent}, subpopulation shift \cite{maity2021does}, and causal models of data bias \cite{plecko2022causal, madras2019fairness, cheong2023causal}. All of them propose algorithmic modifications to the vanilla fair classification to rectify noisy or biased data. 
Complementing the theoretical aspects, recent work has also empirically investigated the effect of data bias and choice of fairness constraints on the accuracy and fairness of various fair classifiers \cite{islam2022through,akpinar2022sandbox,sharma2023testing,ghosh2023fair}.

\section{Data Bias Models \& Fair Classification} \label{sec:prelim}
Let $(X, A, Y)$ be a random data point from the joint distribution $D$ over $\mathcal{X} \times \mathcal{A} \times \mathcal{Y}$, where $\mathcal{X}, \mathcal{A}, \mathcal{Y}$ denote the set of features, the set of sensitive attributes, and the set of class labels, respectively. We assume the feature space $\mathcal{X}$ to be discrete. We consider group-aware classifiers $h: \mathcal{X} \times \mathcal{A} \rightarrow \mathcal{Y}$ and assume, for simplicity, binary sensitive attributes $\mathcal{A} = \{0, 1\}$ and binary class labels $\mathcal{Y} = \{0, 1\}$. The binary classifier of maximum accuracy $h^{*} = \mathrm{argmax}_{h} \prob{h(X, A)=Y}$ is known as the Bayes optimal classifier, and is given by $h^{*}(x, a) = \id{\eta(x, a) \geq 1/2}$, where $\eta(x, a) = \prob{Y=1|X=x, A=a}$ \cite{zeng2022bayes}. This function $\eta$ is known as the \emph{regression function} in statistical machine learning; see Chapter 2 of \cite{devroye1996}. Binary classifiers that apply a threshold on the regression function $\eta$ play a key role in our work as well as other recent works on classification beyond accuracy \cite{elkan2001foundations,singh2022optimal} and fair classification \cite{menon2018cost,chzhen2019leveraging,zeng2022fair}. 
%
%

We use $\tilde{D}$ to denote the biased joint distribution over features, sensitive attributes and class labels, and use $(\tilde{X}, \tilde{A}, \tilde{Y})$ to denote a random data point from the biased distribution $\tilde{D}$. If $X$ (or $A$) in the joint distribution remains unchanged when we go from $D$ to $\tilde{D}$, then we simply use $X$ (or $A$) in the place of $\tilde{X}$ (or $\tilde{A}$).

\subsection{Regression Functions on Biased Data} \label{subsec:data_bias_models}
Below we list some data bias models from recent works on fair classification from biased data \cite{blum2019recovering,dai2020label,biswas2021ensuring}. We make a key observation that for all of these data bias models, the regression function on the biased distribution $\tilde{\eta}(x, a) = \prob{\tilde{Y}=1|\tilde{X}=x, \tilde{A}=a}$ can be expressed as a linear fractional transformation of the regression function on the original distribution $\eta(x, a) = \prob{Y=1|X=x, A=a}$. In other words,
\[
\tilde{\eta}(x, a) = \frac{P \eta(x, a) + Q}{R \eta(x, a) + S}, \quad \text{for some $P, Q, R, S \in \R$}.
\]
%
Please refer to Propositions \ref{prop:blum_stangl_example}, \ref{prop:wang_example} and \ref{prop:biswas_example} in Appendix \ref{appndx: lf_example_transform}.
\begin{example} \label{eg:blumStangl}
\cite{blum2019recovering} Consider a biased distribution $\tilde{D}$ obtained from the original distribution $D$ by the following process defined by under-representation and label bias parameters $\beta_{p}, \beta_{n}, \nu \in (0, 1)$. A random data point $(X, A, Y)$ from $D$ with $A=1$ remains unchanged, the points with $A=0, Y=1$ survive independently with probability $\beta_{p}$, and the points with $A=0, Y=0$ survive independently with probability $\beta_{n}$. Finally, the survived points with $A=0, Y=1$ keep their class label $1$ with probability $1-\nu$, and it gets flipped to $0$ with probability $\nu$. For the privileged group $A=1$, we have $\tilde{\eta}(x, 1) = \eta(x, 1)$, for all $x \in \mathcal{X}$, whereas for the underprivileged group $A=0$, we prove that
~$\tilde{\eta}(x, 0) = \dfrac{(1-\nu) \eta(x, 0)}{(1 - c)\eta(x, 0) + c}$, where $c = \dfrac{\beta_{n}}{\beta_{p}}$ in Proposition \ref{prop:blum_stangl_example}.
\end{example}

\begin{example} \label{eg:wang_model}
\cite{dai2020label, wang2021fair} Consider a biased distribution $\tilde{D}$ obtained from the original distribution $D$ by introducing a group-dependent label flip from $(X, A, Y)$ to $(X, A, \tilde{Y})$ where $\epsilon_{a}^{1} = \prob{\tilde{Y}=0 | Y=1, A=a}$ and $\epsilon_{a}^{0} = \prob{\tilde{Y}=1 | Y=0, A=a}$, with $0 \leq \epsilon_{a}^{1} + \epsilon_{a}^{0} < 1$. 
For this data bias model, we show that $\tilde{\eta}(x,a) = (1 - \epsilon^{1}_{a} - \epsilon^{0}_{a})\eta(x,a) + \epsilon^{0}_{a}$ in Proposition \ref{prop:wang_example}.
\end{example}

\begin{example} \label{eg:biswas_model}
\cite{biswas2021ensuring} Consider a biased distribution $\tilde{D}$ obtained from the original distribution $D$ by introducing a group-dependent prior probability shift such that $\tilde{A} = A$ and $\prob{\tilde{X}=x|\tilde{Y}=i, A=a}=\prob{X=x|Y=i, A=a}$, for any $i, a \in \{0, 1\}$, but $\prob{\tilde{Y}=i|A=a} \neq \prob{Y=i|A=a}$. For this data bias model, we prove that
~$\tilde{\eta}(x, a) = \dfrac{\eta(x, a)}{(1-\alpha) \eta(x, a) + \alpha}$, where $\alpha = \dfrac{\prob{\tilde{Y}=0|A=a} \prob{Y=1|A=a}}{\prob{\tilde{Y}=1|A=a} \prob{Y=0|A=a}}$
in Proposition \ref{prop:biswas_example}.
\end{example}
%
For classifiers that apply a threshold on $\eta(x, a)$ or $\tilde{\eta}(x, a)$, it is important to understand when the above linear fractional transformations are order-preserving.
\begin{proposition} \label{prop:order-preserving}
Suppose $S \geq 0$, $R+S \geq 0$, and $PS - QR \geq 0$, then the transformation $\tilde{\eta}(x, a) = \dfrac{P \eta(x, a) + Q}{R \eta(x, a) + S}$ is order-preserving, i.e., $\eta(x_{1}, a) \leq \eta(x_{2}, a)$ iff $\tilde{\eta}(x_{1}, a) \leq \tilde{\eta}(x_{2}, a)$.  
\end{proposition}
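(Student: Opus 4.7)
The plan is to reduce the statement to a purely one-variable claim: the rational function $f(t) = \frac{Pt+Q}{Rt+S}$ is monotone non-decreasing on $[0,1]$. Since $\eta(x,a) \in [0,1]$ and $\tilde\eta(x,a) = f(\eta(x,a))$, order preservation as stated in the proposition is exactly equivalent to monotonicity of $f$ on the unit interval, so this reduction loses nothing.

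First I would check well-definedness, i.e., that the denominator $Rt + S$ is non-negative on $[0,1]$. Because $t \mapsto Rt + S$ is affine, its extrema on $[0,1]$ are attained at the endpoints, with values $S$ (at $t = 0$) and $R + S$ (at $t = 1$). The two hypotheses $S \geq 0$ and $R + S \geq 0$ therefore immediately give $Rt + S \geq 0$ for all $t \in [0,1]$, and strict positivity wherever $f$ is evaluated in the nondegenerate case.

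Then monotonicity follows from the standard cross-multiplication identity. For any $t_1, t_2 \in [0,1]$,
\[
f(t_2) - f(t_1) \;=\; \frac{(Pt_2+Q)(Rt_1+S) - (Pt_1+Q)(Rt_2+S)}{(Rt_1+S)(Rt_2+S)} \;=\; \frac{(PS - QR)(t_2 - t_1)}{(Rt_1+S)(Rt_2+S)}.
\]
By the third hypothesis $PS - QR \geq 0$, and by the previous paragraph the denominator is positive, so the sign of $f(t_2) - f(t_1)$ matches the sign of $t_2 - t_1$. This gives $\eta(x_1,a) \leq \eta(x_2,a) \Rightarrow \tilde\eta(x_1,a) \leq \tilde\eta(x_2,a)$; the converse direction follows by applying the same identity with the roles of $t_1$ and $t_2$ swapped.

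There is no genuine obstacle here; the only subtlety is the mild edge case where $PS - QR = 0$, in which $f$ is constant on $[0,1]$ and the equivalence holds trivially (both sides become $\tilde\eta(x_1,a) = \tilde\eta(x_2,a)$). The three examples in Section \ref{subsec:data_bias_models} each satisfy the three inequalities strictly, so the proposition applies and the induced linear fractional transformations are in fact strictly order-preserving, which is what later threshold-based arguments will need.
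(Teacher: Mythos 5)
Your proof is correct and uses essentially the same argument as the paper: cross-multiply, observe that the difference $f(t_2)-f(t_1)$ collapses to $(PS-QR)(t_2-t_1)$ divided by a product of positive denominators, and read off the sign. The only cosmetic difference is that you phrase it as monotonicity of a one-variable rational function and spell out why $S\geq 0$ and $R+S\geq 0$ control the affine denominator on $[0,1]$, whereas the paper asserts that non-negativity directly; your explicit note on the degenerate case $PS-QR=0$ is a reasonable addition, though strictly speaking the stated ``iff'' only holds when $PS-QR>0$.
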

The proof is provided in Appendix \ref{appndx:proof_order}. Note that all of the above data bias models satisfy the order-preservation property. For the rest of the paper, we exclusively focus on the under-representation and label bias model in Example \ref{eg:blumStangl} \cite{blum2019recovering} as an illustrative example. Our techniques are flexible and can be applied to obtain similar results for other data bias models.

\subsection{Fair Classification on Biased Data} \label{subsec:biased-TPR-TNR}
Demographic Parity (equal group-wise positivity rates) and Equal Opportunity (equal group-wise true positive rates) are two most popular fairness constraints in classification. It is easy to see that the true positive rates (TPRs) and the true negative rates (TNRs) for a classifier on the biased data distribution can be expressed as linear combinations of its TPRs and TNRs on the original data distribution.

\begin{proposition} \label{prop:tilde-tpr-tnr}
Let $D$ be any distribution on $\mathcal{X} \times \mathcal{A} \times \mathcal{Y}$ and let $\tilde{D}$ be its biased version defined using any of the data bias models defined in Subsection \ref{subsec:data_bias_models}. Let $h$ be any hypothesis, and let $TPR_{a}(h)$ and $\widetilde{TPR}_{a}(h)$ be its true positive rates according to $D$ and $\tilde{D}$, respectively, conditioned on the underprivileged group $A=a$. Let $TNR_{a}(h)$ and $\widetilde{TNR}_{a}(h)$ be the true negative rates defined similarly. Then
\begin{enumerate}
    \item $\widetilde{TPR}_{a}(h) = \prob{Y=1 | \tilde{Y}=1, A=a}~ TPR_{a}(h) + \prob{Y=0 | \tilde{Y}=1, A=a}~ FPR_{a}(h)$,~ and
    \item $\widetilde{TNR}_{a}(h) = \prob{Y=0 | \tilde{Y}=0, A=a}~ TNR_{a}(h) + \prob{Y=1 | \tilde{Y}=0, A=a}~ FNR_{a}(h).$
\end{enumerate}
\end{proposition}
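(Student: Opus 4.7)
The plan is to apply the law of total probability, splitting the event $\{\tilde{Y}=1\}$ (respectively $\{\tilde{Y}=0\}$) into its $\{Y=1\}$ and $\{Y=0\}$ components, and then to invoke a conditional independence between the classifier output $h(\tilde{X}, A)$ and the biased label $\tilde{Y}$ given $(Y, A)$. Concretely, I would write
\[
\widetilde{TPR}_{a}(h) \;=\; \prob{h(\tilde{X},A)=1 \mid \tilde{Y}=1, A=a}
\]
and decompose this as
\[
\prob{h=1 \mid Y=1, \tilde{Y}=1, A=a}\,\prob{Y=1 \mid \tilde{Y}=1, A=a} + \prob{h=1 \mid Y=0, \tilde{Y}=1, A=a}\,\prob{Y=0 \mid \tilde{Y}=1, A=a}.
\]
The task then reduces to showing that the two inner conditional probabilities simplify to $TPR_a(h)$ and $FPR_a(h)$ respectively.

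The key step is the conditional independence $h(\tilde{X},A) \perp \tilde{Y} \mid (Y, A)$ under each of the bias models from Subsection~\ref{subsec:data_bias_models}. For the Blum--Stangl model (Example~\ref{eg:blumStangl}), the features are unaltered ($\tilde{X}=X$) and the label-flipping step uses an independent coin conditional on $(Y,A)$; the sub-sampling step similarly depends only on $(Y,A)$ and is independent of everything else. For the label-flip model (Example~\ref{eg:wang_model}), the flip from $Y$ to $\tilde{Y}$ uses a coin depending only on $(Y,A)$ and is independent of $X$. For the prior-shift model (Example~\ref{eg:biswas_model}), the distributional assumption $\prob{\tilde{X}=x \mid \tilde{Y}=i, A=a} = \prob{X=x \mid Y=i, A=a}$ directly gives the required independence after a short Bayes-rule manipulation. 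In each case, once we establish $\prob{h=1 \mid Y=y, \tilde{Y}=\cdot, A=a} = \prob{h=1 \mid Y=y, A=a}$, the first factors become $TPR_a(h)$ and $FPR_a(h)$, and the claim for $\widetilde{TPR}_a$ follows.

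The identity for $\widetilde{TNR}_a(h)$ is proved by the same argument with $\tilde{Y}=0$ in place of $\tilde{Y}=1$ and with $\prob{h=0 \mid Y=0, A=a} = TNR_a(h)$, $\prob{h=0 \mid Y=1, A=a} = FNR_a(h)$.

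The main obstacle is really just the bookkeeping of the conditional independence claim across the three different bias models, since the statement of the proposition treats them uniformly. Once that is verified model-by-model (a short and mostly routine check), the rest is a one-line application of total probability, so there is no substantive technical difficulty beyond setting up the correct factorization of the joint distribution of $(X, A, Y, \tilde{X}, \tilde{Y})$.
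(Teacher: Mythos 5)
Your proof is correct and follows essentially the same route as the paper's: both apply the law of total probability over $Y$ and invoke the conditional independence of $h$'s output and $\tilde{Y}$ given $(Y, A)$ (the paper phrases this as ``$\tilde{Y}$ depends only on $A$ and $Y$''). Your version decomposes $\prob{h=1 \mid \tilde{Y}=1, A=a}$ directly rather than going through the ratio $\prob{h=1,\tilde{Y}=1 \mid A=a}/\prob{\tilde{Y}=1 \mid A=a}$, and you are more explicit about checking the conditional independence model by model, but the key step and structure are identical.
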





The proof of Proposition \ref{prop:tilde-tpr-tnr} is given in Appendix \ref{appndx:tpr_tnr_proofs}, and we get the following interesting corollary for the data bias model in Example \ref{eg:blumStangl} \cite{blum2019recovering}.
%
\begin{corollary} \label{corr:tpr_tnr_sp}
Let $D$ be any distribution and $\tilde{D}$ be its biased version as in Example \ref{eg:blumStangl}. Given any hypothesis class $\mathcal{H}$, let $\mathcal{H}_{\text{fair, EO}}$ be its subset that satisfies equal opportunity on the original distribution $D$, i.e., $\mathcal{H}_{\text{fair, EO}} = \{f \in \mathcal{H} \suchthat TPR_{0}(f) = TPR_{1}(f)\}$.
Similarly, let $\widetilde{\mathcal{H}}_{\text{fair, EO}} = \{f \in \mathcal{H} \suchthat \widetilde{TPR}_{0}(f) = \widetilde{TPR}_{1}(f)\}$ be the subset of $\mathcal{H}$ satisfying equal opportunity on the biased distribution $\tilde{D}$. 
%
Then $\widetilde{TPR}_{0}(h) = TPR_{0}(h)$ and $\widetilde{TPR}_{1}(h) = TPR_{1}(h)$, and hence, $\mathcal{H}_{\text{fair, EO}} = \widetilde{\mathcal{H}}_{\text{fair, EO}}$.
\end{corollary}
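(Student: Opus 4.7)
The plan is to apply Proposition \ref{prop:tilde-tpr-tnr} separately to each group and then exploit the one-directional nature of the label flip in the Blum \& Stangl model to conclude that no mixing between $TPR_0$ and $FPR_0$ actually takes place.

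First, I would handle the privileged group. By construction in Example \ref{eg:blumStangl}, the joint distribution conditioned on $A=1$ is preserved when passing from $D$ to $\tilde D$ (no subsampling, no label flip), so tautologically $\widetilde{TPR}_1(h) = TPR_1(h)$ for every $h$. Next, for the underprivileged group $A=0$, Proposition \ref{prop:tilde-tpr-tnr} gives
\[
\widetilde{TPR}_{0}(h) \;=\; \prob{Y=1 \mid \tilde Y=1, A=0}\, TPR_{0}(h) \;+\; \prob{Y=0 \mid \tilde Y=1, A=0}\, FPR_{0}(h).
\]
The key observation to pin down the two coefficients is that in the Blum \& Stangl generative process, the only label modification is $Y=1 \mapsto \tilde Y=0$ with probability $\nu$ on surviving $A=0$ positives; there is no mechanism that turns a true negative into a biased positive. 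Hence $\{\tilde Y=1, A=0\} \subseteq \{Y=1, A=0\}$, so $\prob{Y=1 \mid \tilde Y=1, A=0} = 1$ and $\prob{Y=0 \mid \tilde Y=1, A=0} = 0$. Substituting yields $\widetilde{TPR}_{0}(h) = TPR_{0}(h)$.

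Combining the two groups, the equal opportunity constraint on $\tilde D$ reads $\widetilde{TPR}_0(h) = \widetilde{TPR}_1(h)$, which by the identities above is literally the same equation as $TPR_0(h) = TPR_1(h)$, the equal opportunity constraint on $D$. Therefore the two subsets of $\mathcal H$ cut out by these constraints coincide, i.e., $\mathcal H_{\text{fair, EO}} = \widetilde{\mathcal H}_{\text{fair, EO}}$.

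I do not anticipate a genuine obstacle: the only conceptual step is recognizing that the label-flip in Example \ref{eg:blumStangl} is one-way (positives may be relabeled negative, but negatives are never relabeled positive), which collapses the convex combination from Proposition \ref{prop:tilde-tpr-tnr} to the identity. The same trick fails for $\widetilde{TNR}_0(h)$, which is why the analogous identity for $TNR$ (and hence demographic parity) does not hold under this model, matching the negative demographic parity example highlighted in the introduction.
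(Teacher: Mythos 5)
Your proof is correct and follows exactly the route the paper intends: apply Proposition~\ref{prop:tilde-tpr-tnr} and note that in Example~\ref{eg:blumStangl} the label flip is one-directional (positives to negatives only), so $\tilde Y=1$ implies $Y=1$ for $A=0$ (and trivially $\tilde Y=Y$ for $A=1$), which collapses the convex combination to the identity. The paper states this as an immediate corollary without a written-out argument; your proof supplies the missing details and your closing remark about why the same collapse fails for $\widetilde{TNR}_0$ is a correct and useful observation.
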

\begin{remark}
Corollary \ref{corr:tpr_tnr_sp} can be easily extended to other fairness metrics such as demographic parity as well as the hypothesis class of \emph{approximately} fair classifiers that satisfy fairness constraints up to a small additive or multiplicative error. However, we focus on exact equal opportunity as in Blum \& Stangl \cite{blum2019recovering} for a direct, illustrative application. Please see Appendix \ref{appndx:tpr_tnr_proofs} for additional results.
\end{remark}

The optimal fair classifier for equal opportunity (similarly, demographic parity) on a given data distribution can be expressed by group-dependent thresholds applied to the regression function \cite{menon2018cost, chzhen2019leveraging}. Since the regression function $\tilde{\eta}(x, a)$ on the biased distribution $\tilde{D}$ is an order-preserving linear fractional transformation of $\eta(x, a)$, the optimal fair classifier for equal opportunity on $\tilde{D}$ is equivalent to applying group-dependent thresholds to $\eta(x, a)$. 
\begin{proposition} \label{prop:biased-eo-threshold}
For any distribution $D$ and its biased version $\tilde{D}$ described in Example \ref{eg:blumStangl}, let $\tilde{h}_{EO}$ be a classifier of the maximum accuracy among all binary classifiers that satisfy equal opportunity on $\tilde{D}$. Then there exists $\lambda^{*} \in \R$ such that $\tilde{h}_{EO}(x, a) = \id{\eta(x, a) \geq t_{a}}$, where
\begin{align*}
t_{a} = \begin{cases} \dfrac{1}{1 + \dfrac{1-2\nu}{c} + \dfrac{\lambda^{*}}{\beta_{n} \prob{Y=1, A=0}}}, & \text{for $a=0$} \\
\dfrac{1}{2 - \dfrac{\lambda^{*}}{\prob{Y=1, A=1}}}, & \text{for $a=1$}.
\end{cases}
\end{align*}
\end{proposition}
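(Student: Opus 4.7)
The plan is to prove Proposition \ref{prop:biased-eo-threshold} in three stages: first, characterize the optimal fair classifier on $\tilde{D}$ as group-dependent thresholds on the biased regression function $\tilde{\eta}$; second, invert the linear fractional transformation of Example \ref{eg:blumStangl}, using order-preservation (Proposition \ref{prop:order-preserving}), to rewrite these as thresholds on $\eta$; third, absorb normalization factors into a reparametrization $\lambda^{*}$ of the Lagrange multiplier so that both $t_0$ and $t_1$ take the stated forms.

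For the first stage, I would write the constrained program $\max_h \prob{h(\tilde{X}, \tilde{A}) = \tilde{Y}}$ subject to $\widetilde{TPR}_0(h) = \widetilde{TPR}_1(h)$ and form the Lagrangian $L(h, \lambda) = \prob{h = \tilde{Y}} + \lambda\bigl(\widetilde{TPR}_0(h) - \widetilde{TPR}_1(h)\bigr)$. Because $L$ decomposes as a sum over $(x, a)$ of a contribution linear in $h(x, a)$, its maximizer over $\{0,1\}$-valued hypotheses is a pointwise rule. Expanding the bracket for the $(x, 0)$ term and dividing by $\prob{\tilde{X} = x, A=0}$ gives $h(x, 0) = 1$ iff $\tilde{\eta}(x, 0) \geq 1/\bigl(2 + \lambda/\prob{\tilde{Y}=1, A=0}\bigr)$, and analogously $h(x, 1) = 1$ iff $\tilde{\eta}(x, 1) \geq 1/\bigl(2 - \lambda/\prob{\tilde{Y}=1, A=1}\bigr)$. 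Existence of a $\lambda$ that makes $\widetilde{TPR}_0 = \widetilde{TPR}_1$ at the optimum follows from a standard continuity / intermediate-value argument on the Lagrange multiplier; this is the threshold characterization of \cite{menon2018cost, chzhen2019leveraging} specialized to $\tilde{D}$, which I would cite rather than rederive.

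For the second stage, I would substitute the relation $\tilde{\eta}(x, 0) = (1-\nu) \eta(x, 0) / \bigl((1-c) \eta(x, 0) + c\bigr)$ from Example \ref{eg:blumStangl} into the biased threshold inequality for $a = 0$. Order-preservation (Proposition \ref{prop:order-preserving}) applies because $c \geq 0$, $(1-c) + c = 1 > 0$, and $(1-\nu) c - 0 \geq 0$, so cross-multiplying the linear-in-$\eta$ inequality and solving gives $\eta(x, 0) \geq c / \bigl[1 - 2\nu + c + (1-\nu)\lambda/\prob{\tilde{Y}=1, A=0}\bigr]$. Dividing numerator and denominator by $c$ puts this in the form $1 / \bigl(1 + (1-2\nu)/c + (1-\nu)\lambda/(c \prob{\tilde{Y}=1, A=0})\bigr)$. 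The $a = 1$ branch is immediate since $\tilde{\eta}(\cdot, 1) = \eta(\cdot, 1)$, so the threshold on $\eta(x, 1)$ already equals $1/\bigl(2 - \lambda/\prob{\tilde{Y}=1, A=1}\bigr)$.

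The main obstacle is the bookkeeping in the third stage: verifying that one scalar $\lambda^{*}$ simultaneously matches both stated thresholds. Letting $Z = \prob{\text{survival}}$ denote the normalizer of $\tilde{D}$, the bias model of Example \ref{eg:blumStangl} gives $\prob{\tilde{Y}=1, A=0} = (1-\nu) \beta_p \prob{Y=1, A=0} / Z$ and $\prob{\tilde{Y}=1, A=1} = \prob{Y=1, A=1} / Z$. Plugging the first into the $a=0$ threshold and using $c = \beta_n / \beta_p$ cancels the factor $(1-\nu)/c \cdot \beta_p = (1-\nu)/\beta_n$ against $(1-\nu)$, leaving $\lambda Z / \bigl(\beta_n \prob{Y=1, A=0}\bigr)$. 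Plugging the second into the $a=1$ threshold produces $\lambda Z / \prob{Y=1, A=1}$. Both reduce to the single reparametrization $\lambda^{*} := \lambda Z$, which ranges over $\R$ as $\lambda$ does, yielding the stated forms for $t_0$ and $t_1$ and completing the proof.
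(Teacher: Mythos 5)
Your proposal is correct and follows essentially the same route as the paper: form the equal-opportunity Lagrangian, invoke the standard threshold characterization to get group-dependent thresholds on $\tilde{\eta}$, invert the order-preserving linear fractional transformation from Example \ref{eg:blumStangl} to rewrite these as thresholds on $\eta$, and then simplify the coefficients. One small point where you are actually more careful than the paper: you explicitly track the survival normalizer $Z$ and absorb it into $\lambda^{*} := \lambda Z$, which is what justifies replacing $\prob{\tilde{Y}=1, A=1}$ by $\prob{Y=1, A=1}$ in the $a=1$ threshold and collapsing $(1-\nu)\lambda/\bigl(c\,\prob{\tilde{Y}=1, A=0}\bigr)$ to $\lambda^{*}/\bigl(\beta_n \prob{Y=1, A=0}\bigr)$ in the $a=0$ threshold; the paper's own proof performs the same algebra but silently treats $\prob{\tilde{Y}=1, A=a}$ as an unnormalized survival mass, which gives the same answer but is less explicit about why a single scalar reparametrization works for both groups.
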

The Proof of Proposition is given in Appendix \ref{appndx:tpr_tnr_proofs}. We can similarly derive the optimal threshold with the biased distribution for the Demographic parity constraint (Proposition \ref{prop:biased-dp-threshold} in Appendix \ref{appndx:tpr_tnr_proofs}).

\section{Recovering Optimal Classifier from Biased Data for Massart Label Noise} \label{sec:blumStanglMassart}
Blum \& Stangl \cite{blum2019recovering} consider a stylized distribution $D$ with i.i.d.~label noise and show that the optimal fair classifier $\tilde{h}_{EO}$ on the biased distribution $\tilde{D}$ (defined in Example \ref{eg:blumStangl}) recovers the Bayes optimal (and fair) classifier $h^{*}$ on the original distribution $D$, if the bias parameters satisfy certain simple conditions. Note that this does not require knowing, estimating, or correcting for data bias explicitly, and their result holds even for extreme under-representation and label bias in $\tilde{D}$. We first demonstrate the utility of our technique by generalizing the recovery result of Blum \& Stangl \cite{blum2019recovering} to the case of Massart noise \cite{massart2006risk}. We describe the distribution setup below, give a sketch of our proof, and point out the generality of our technique compared to Blum \& Stangl \cite{blum2019recovering}.

\subsection{Generalizing Blum \& Stangl \cite{blum2019recovering} Recovery Result for Massart Noise} \label{subsec:massart}
Assume any arbitrary data distribution $D$ on $\mathcal{X} \times \mathcal{A}$. Let $\prob{A=0}=r$ and $\prob{A=1}=1-r$, for some $0<r<1$. Let $h: \mathcal{X} \times \mathcal{A} \rightarrow \mathcal{Y}$ be any hypothesis that satisfies $\prob{h(X, A)=1|A=0} = \prob{h(X, A)=1|A=1}$. Let $\delta < 1/2$, and extend the distribution to $\mathcal{X} \times \mathcal{A} \times \mathcal{Y}$ as follows. $Y|X=x, A=a$ takes value $h(x, a)$ with probability $1 - \delta(x, a)$, and $\neg h(x, a)$ with probability $\delta(x, a)$, for some $\delta(x, a) \leq \delta$. Let $D$ be the resulting distribution on $\mathcal{X} \times \mathcal{A} \times \mathcal{Y}$. This type of bounded noise in class label is popularly known as Massart noise\footnote{Equivalently known as malicious classification noise in previous work \cite{rivest1994formal,sloan1996}.} in literature, based on a noise model proposed by Massart and N\'{e}d\'{e}lec\cite{massart2006risk}. We assume that the Massart noise is added in a way that equalizes the base rates on the two protected groups, i.e., $ Pr(Y=1|A=0) = Pr(Y=1|A=1) = q$. Since $\delta < 1/2$, the Bayes optimal classifier $h^{*}$ on the distribution $D$ coincides with $h$ and satisfies Equal Opportunity.
%
\begin{theorem} \label{thm:blum_stangl_eo_massart}
For any distribution $D$ defined as above and its biased version $\tilde{D}$ defined as in Example \ref{eg:blumStangl} using the bias parameters $\beta_{p}, \beta_{n}, \nu \in (0, 1)$. If the data distribution and bias parameters satisfy
\begin{align*}
    (1-r)(1-2\delta) + r\left((1-\delta) \beta_{p} (1-2\nu) - \delta \beta_{n}\right) > 0 \\
    \text{ and }  \quad \quad \quad \quad \quad \quad \quad \quad \quad \quad \\
    (1-r)(1-2\delta) + r\left((1-\delta) \beta_{n} - \delta \beta_{p} (1-2\nu)\right) > 0,
\end{align*}
then the optimal equal opportunity classifier on the biased distribution $\tilde{D}$ recovers the Bayes optimal classifier on the original distribution $D$, i.e., $\tilde{h}_{EO} \equiv h^{*}$.
\end{theorem}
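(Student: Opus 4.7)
The plan is to use Proposition~\ref{prop:biased-eo-threshold}, which says the optimal EO classifier on $\tilde{D}$ has the form $\tilde{h}_{EO}(x,a) = \id{\eta(x,a) \geq t_a}$ with group-dependent thresholds $t_a$ governed by a single Lagrange multiplier $\lambda^{*}$. First I would exploit the Massart structure: since $Y$ agrees with $h$ up to flips of probability at most $\delta < 1/2$, the original regression function satisfies $\eta(x,a) \in [0,\delta] \cup [1-\delta, 1]$, and $h^{*}(x,a) = 1$ iff $\eta(x,a) \geq 1-\delta$. Therefore any threshold $t_a$ placed in the ``gap'' $(\delta,\, 1-\delta]$ reproduces $h^{*}$ exactly on group $a$. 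Moreover, Corollary~\ref{corr:tpr_tnr_sp} tells us that $h^{*}$, which satisfies EO on $D$ by hypothesis, automatically satisfies EO on $\tilde{D}$ as well, so it is a feasible candidate for the constrained optimization that defines $\tilde{h}_{EO}$.

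The heart of the proof is then to show that the multiplier $\lambda^{*}$ in Proposition~\ref{prop:biased-eo-threshold} can (and must) be chosen so that both $t_0(\lambda^{*}) \in (\delta,\, 1-\delta]$ and $t_1(\lambda^{*}) \in (\delta,\, 1-\delta]$ simultaneously. Using the equalized base rate $\prob{Y=1, A=0}=rq$ and $\prob{Y=1, A=1}=(1-r)q$ in the formulae of Proposition~\ref{prop:biased-eo-threshold}, each of the four inequalities $t_a > \delta$ and $t_a \leq 1-\delta$ is linear in $\lambda$, yielding admissible intervals $I_0, I_1 \subset \mathbb{R}$. I would then verify by direct algebra that the two displayed inequalities in the theorem are exactly the statements ``upper endpoint of $I_1$ exceeds lower endpoint of $I_0$'' and ``upper endpoint of $I_0$ exceeds lower endpoint of $I_1$'', so together they encode $I_0 \cap I_1 \neq \emptyset$. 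For every $\lambda$ in that intersection the resulting threshold classifier equals $h^{*}$ pointwise, giving $\tilde{h}_{EO} \equiv h^{*}$. As a sanity check, sending $\delta \to 0$ should collapse these to the conditions of Blum \& Stangl for i.i.d.\ noise.

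The main obstacle I expect is twofold. Algebraically, $\lambda^{*}$ is not sign-restricted and the quantity $(1-2\nu)/c$ in the denominator of $t_0$ can take either sign, so careful bookkeeping is needed when converting ``$t_a$ in the gap'' into ``$\lambda$ in an interval'' --- it is easy to lose an inequality flip. Conceptually, one must argue that $\lambda^{*}$ is \emph{forced} to lie in $I_0 \cap I_1$, not merely that such a $\lambda$ exists: this follows because $h^{*}$ is EO-feasible on $\tilde{D}$ by Corollary~\ref{corr:tpr_tnr_sp}, while any threshold outside the gap trivializes the corresponding group (predicting a constant), and under the stated sign conditions one can check that such a degenerate threshold classifier has strictly smaller accuracy on $\tilde{D}$ than $h^{*}$; so by the optimality half of Proposition~\ref{prop:biased-eo-threshold} the multiplier must lie in $I_0 \cap I_1$, completing the identification $\tilde{h}_{EO} \equiv h^{*}$.
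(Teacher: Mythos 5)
Your opening moves --- characterizing $\tilde{h}_{EO}$ as a group-wise threshold classifier via Proposition~\ref{prop:biased-eo-threshold}, exploiting the Massart gap $\eta(x,a)\in[0,\delta]\cup[1-\delta,1]$ so that thresholds in $(\delta,1-\delta)$ reproduce $h^*$, noting via Corollary~\ref{corr:tpr_tnr_sp} that $h^*$ is EO-feasible on $\tilde D$, and verifying algebraically that the two displayed inequalities are exactly equivalent to $I_0\cap I_1\neq\emptyset$ --- match the paper's Step~1 essentially verbatim.

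The gap is in your forcing argument. You claim that any threshold outside $(\delta,1-\delta]$ ``trivializes the corresponding group (predicting a constant).'' This is false under Massart noise: since $\eta(x,a)$ can take \emph{any} values in $[0,\delta]\cup[1-\delta,1]$ (not only the two points $\{\delta,1-\delta\}$, which is the i.i.d.\ special case), a threshold $t_a\in(0,\delta]$ produces $\id{\eta(x,a)\ge t_a}$, which flips to $1$ only those $x$ with $h^*(x,a)=0$ and $\delta(x,a)\ge t_a$. The result is generally neither constant nor equal to $h^*$. Your follow-up claim that such a classifier has strictly smaller accuracy on $\tilde D$ than $h^*$ is also unsupported: $h^*$ is Bayes optimal on $D$, not on $\tilde D$, and on group $0$ the bias can make $h^*$ far from accuracy-optimal on $\tilde D$, so an accuracy comparison there cannot close the argument. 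Finally, ``the optimality half of Proposition~\ref{prop:biased-eo-threshold}'' only asserts that \emph{some} $\lambda^*$ realizes $\tilde h_{EO}$; it gives no mechanism to exclude candidate $\lambda$ values.

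The paper closes the gap differently, by a case analysis on where the two thresholds land. If both lie on the same side ($\le\delta$ or $\ge 1-\delta$), the very inequalities you derived in Step~1 are shown to be violated, so this case is impossible. If the thresholds lie on opposite sides, no accuracy comparison is made; instead one observes that, because $\eta$ never lands in $(\delta,1-\delta)$ and thresholds move monotonically with $\lambda$, one group's TPR under $\tilde h_\lambda$ weakly exceeds $h^*$'s while the other's is weakly below. Since $h^*$ satisfies EO on $D$, and since EO on $\tilde D$ coincides with EO on $D$ by Corollary~\ref{corr:tpr_tnr_sp}, the EO constraint on $\tilde h_\lambda$ then squeezes both TPRs to equal those of $h^*$, forcing $\tilde h_\lambda\equiv h^*$. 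Replacing your accuracy/constancy argument with this TPR-monotonicity-plus-EO argument is the fix; you should also verify that the remaining case of one threshold inside the gap and one outside reduces to the same reasoning.
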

The proof of Theorem \ref{thm:blum_stangl_eo_massart} is given in Appendix \ref{appndx:blumStanglProofs}. The same proof also works for group-dependent Massart noise, i.e., there exist $\delta_{0}, \delta_{1} < 1/2$ such that $\delta(x, a) \leq \delta_{a}$, for all $(x, a) \in \mathcal{X} \times \mathcal{A}$.

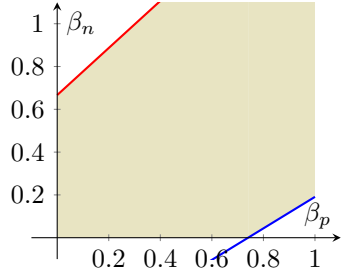
\begin{figure}[!h]
\centering
\begin{tikzpicture}[scale=0.6]
    \begin{axis}[
    axis line style={->},
    axis lines=middle,
    xlabel=$\beta_{p}$,
    ylabel=$\beta_{n}$,
    xmin=-0.1, xmax=1.1,
    ymin=-0.1, ymax=1.1,
    axis on top=true,
    domain=0:1,
    ]
    \addplot[draw=red,thick,name path=A] {(0.495*x+0.3)/0.45}; 
    \addplot[draw=blue,thick,name path=B] {(0.405*x-0.3)/0.55}; 
    \addplot[name path=C, draw=none] {0};
    \addplot[olive!20] fill between[of=A and C,soft clip={domain=0:0.3/0.405}];
    \addplot[olive!20] fill between[of=A and B,soft clip={domain=0.3/0.405:1}];
    \end{axis}
\end{tikzpicture}
\vspace{-.5cm}
\caption{Recovery region for $\beta_{p}, \beta_{n} \in (0, 1]$ given by the constraints $(1-r)(1-2\delta) + r\left((1-\delta) \beta_{p} (1-2\nu) - \delta \beta_{n}\right) > 0$ and $(1-r)(1-2\delta) + r\left((1-\delta) \beta_{n} - \delta \beta_{p} (1-2\nu)\right) > 0$ as in Theorem \ref{thm:blum_stangl_eo_massart}, when $r=0.25$, $\nu=0.05$, and $\delta=0.45$. We can recover optimal and fair classifiers for a large range of data biases, including extreme under-representation, i.e., region close to the origin $(0, 0)$, by applying just equal opportunity constraints.}
\label{fig:blumStanglMassart}
\end{figure}

The conditions in Theorem \ref{thm:blum_stangl_eo_massart} above are identical to those in the recovery result of Blum \& Stangl \cite{blum2019recovering} (Theorem 4.1 in their paper) that only works for the special case when $\delta(x, a) = \delta$, for all $(x, a) \in \mathcal{X} \times \mathcal{A}$. Their proof is arguably less flexible to other models of label noise and data bias, as it relies on clever, iterative modifications of an initial fair classifier until its accuracy cannot be improved further. For their special case $\delta(x, a) = \delta$, for all $(x, a) \in \mathcal{X} \times \mathcal{A}$, our technique gives a stronger statement that the conditions in Theorem \ref{thm:blum_stangl_eo_massart} are in fact both necessary and sufficient. Figure \ref{fig:blumStanglMassart} illustrates the recovery conditions in Theorem \ref{thm:blum_stangl_eo_massart} for reasonably chosen group proportion parameter $r$, label bias $\nu$ and $\delta$.
\begin{theorem} (a slightly stronger version of Theorem 4.1 in Blum \& Stangl \cite{blum2019recovering}) \label{thm:blumStangl-reprove}
For the data distribution $D$ described above with $\delta(x, a) = \delta$, for all $(x, a) \in \mathcal{X} \times \mathcal{A}$, and its biased version $\tilde{D}$ as described in Example \ref{eg:blumStangl}, the data distribution and bias parameters satisfy
\begin{align*}
(1 - r)(1 - 2\delta) + r((1 - \delta)\beta_{p}(1 - 2\nu) - \delta\beta_{n}) > 0 \\ 
\text{ and } \quad \quad \quad \quad \quad \quad \quad \quad \quad \quad\\
(1 - r)(1 - 2\delta) + r((1 - \delta)\beta_{n} - (1 - 2\nu)\delta\beta_{p}) > 0,
\end{align*}
if and only if the optimal equal opportunity classifier on $\tilde{D}$ recovers the Bayes optimal classifier on $D$, i.e., $\tilde{h}_{EO} \equiv h_{EO} \equiv h^{*}$.
\end{theorem}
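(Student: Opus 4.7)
The plan is to reduce everything to the threshold characterization from Proposition \ref{prop:biased-eo-threshold} and exploit the fact that under i.i.d.~noise, the regression function $\eta(x,a)$ takes only the two values $\delta$ and $1-\delta$. Concretely, since $\delta(x,a) = \delta$ for all $(x,a)$, we have $\eta(x,a) = 1-\delta$ whenever $h(x,a) = 1$ and $\eta(x,a) = \delta$ whenever $h(x,a) = 0$, so $h^{*}(x,a) = \id{\eta(x,a) \geq 1/2} = h(x,a)$. Hence the condition $\tilde{h}_{EO} \equiv h^{*}$ is equivalent, via Proposition \ref{prop:biased-eo-threshold}, to the pair of inequalities $\delta < t_{0} \leq 1-\delta$ and $\delta < t_{1} \leq 1-\delta$ for some $\lambda^{*} \in \R$.

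Next, I would translate these threshold inequalities into interval constraints on the Lagrange multiplier $\lambda^{*}$. Substituting $\prob{Y=1, A=0} = rq$ and $\prob{Y=1, A=1} = (1-r)q$ (which holds because the Massart noise was assumed to equalize base rates) into the formulas of Proposition \ref{prop:biased-eo-threshold} and inverting the fractional expressions for $t_{0}$ and $t_{1}$, the condition $\delta < t_{a} \leq 1-\delta$ becomes a two-sided linear constraint on $\lambda^{*}$. Call the resulting interval for the $a=0$ constraints $I_{0}$ and for $a=1$ constraints $I_{1}$. The recovery condition $\tilde{h}_{EO} \equiv h^{*}$ is then equivalent to $I_{0} \cap I_{1} \neq \emptyset$.

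The key algebraic step is to show that $I_{0} \cap I_{1} \neq \emptyset$ is exactly equivalent to the two displayed bias-parameter inequalities in the theorem. Intuitively, one inequality comes from requiring the upper endpoint of $I_{1}$ (from the $t_{1} \leq 1-\delta$ side) to exceed the lower endpoint of $I_{0}$ (from the $t_{0} > \delta$ side), and the other from the symmetric pairing. Clearing denominators $\beta_{n}, \beta_{p}, \delta, 1-\delta$ and simplifying using $c = \beta_{n}/\beta_{p}$ should produce the two stated linear combinations of $r, 1-r, \beta_{p}(1-2\nu), \beta_{n}, \delta, 1-\delta$. Finally, whenever $\lambda^{*} \in I_{0} \cap I_{1}$, the resulting threshold classifier equals $h$ pointwise and hence satisfies equal opportunity on $D$; by Corollary \ref{corr:tpr_tnr_sp}, the EO constraint is the same on $D$ and $\tilde{D}$, so this $\lambda^{*}$ is consistent with the EO-constrained optimization on $\tilde{D}$.

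The main obstacle will be the bookkeeping in the third step: the threshold formulas for $t_{0}$ and $t_{1}$ look asymmetric (one is fractional-linear in $\lambda^{*}$, the other linear in its reciprocal), and obtaining the clean symmetric form of the two bias-parameter inequalities requires careful sign tracking, particularly to argue that both directions (sufficiency and necessity) give exactly the same pair of inequalities rather than a weaker or stronger variant. Handling the boundary cases where $t_{a}$ equals $\delta$ or $1-\delta$ cleanly — which is what allows the \emph{if and only if} rather than just a one-directional implication — will need to be addressed explicitly, and this is precisely the strengthening over Blum \& Stangl \cite{blum2019recovering}.
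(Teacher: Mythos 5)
Your reduction to Proposition \ref{prop:biased-eo-threshold} and the computation of the two interval constraints on $\lambda^{*}$ match Step 1 of the paper's proof. However, the central claim in your third paragraph --- that ``$\tilde{h}_{EO} \equiv h^{*}$ is equivalent to $I_{0} \cap I_{1} \neq \emptyset$'' --- is asserted without justification, and the forward direction of that equivalence is exactly what needs the most work. Proposition \ref{prop:biased-eo-threshold} only says that \emph{some} $\lambda^{*}$ (namely the Lagrange saddle-point value) yields $\tilde{h}_{EO}$ as a group-wise threshold classifier. Knowing that the intervals $I_{0}$ and $I_{1}$ intersect tells you that $h^{*}$ is expressible as $h_{\lambda}$ for \emph{some} $\lambda$, but it does not tell you that the $\lambda^{*}$ arising from the min--max optimization actually lies in that intersection. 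Your closing sentence --- that such a $\lambda$ is ``consistent with'' the EO-constrained optimization because the resulting classifier is feasible --- conflates feasibility with optimality: many $\lambda$'s give feasible EO classifiers, and the Lagrangian could select a different one.

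This is precisely what the paper's Steps 2 and 3 handle and what your proposal is missing. Step 2 shows that under the stated inequalities there is no $\lambda$ whose pair of thresholds are both $\le \delta$ or both $\ge 1-\delta$; Step 3 shows that any $\lambda$ with thresholds on opposite sides of $(\delta, 1-\delta)$ produces a classifier that cannot satisfy equal opportunity on $\tilde{D}$ unless it already equals $h^{*}$ (this uses the two-valued structure of $\eta$ together with Corollary \ref{corr:tpr_tnr_sp}). Only after exhausting those cases can one conclude that the saddle-point $\lambda^{*}$ must give $\tilde{h}_{EO} \equiv h^{*}$. Without Steps 2 and 3 the ``if'' direction of the theorem is unproven. (A small additional point: your intervals are written as $\delta < t_a \le 1-\delta$, but Lemma \ref{thm:eta-lemma} requires $t_a \in (\delta, 1-\delta)$ open on both sides; worth being careful about which endpoint conventions the $\eta$-values $\{\delta, 1-\delta\}$ actually force.)
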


Similarly, we can also obtain necessary and sufficient conditions for when the optimal demographic parity classifier on $\tilde{D}$ recovers $h^{*}$. Blum \& Stangl \cite{blum2019recovering} only give a specific example where such a recovery is impossible via demographic parity constraints (see Subsection 3.1 of \cite{blum2019recovering}) but do not prove any analog of Theorem \ref{thm:blumStangl-reprove} (see Table 1 in \cite{blum2019recovering}).

\begin{theorem} \label{thm:dp-recovery}
For the data distribution $D$ described above with $\delta(x, a) = \delta$, for all $(x, a) \in \mathcal{X} \times \mathcal{A}$, and its biased version $\tilde{D}$ as described in Example \ref{eg:blumStangl}, the data distribution and bias parameters satisfy $\beta_{p}(1 - \delta)(1 - 2\delta - 2r(\nu - \delta)) + \delta\beta_{n}(1 - 2\delta -2r(1 - \delta)) > 0$ and $\beta_{p}\delta(1 - 2r(1 - \nu) - 2\delta(1 - r)) + (1 - \delta)\beta_{n}(1 - 2\delta(1 - r)) > 0$,
if and only if the optimal demographic parity classifier on $\tilde{D}$ recovers the Bayes optimal classifier on $D$, i.e., $\tilde{h}_{DP} \equiv h_{DP} \equiv h^{*}$.
\end{theorem}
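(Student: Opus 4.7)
The plan is to mirror the proof of Theorem \ref{thm:blumStangl-reprove}, replacing equal opportunity by demographic parity and Proposition \ref{prop:biased-eo-threshold} by its DP analog (Proposition \ref{prop:biased-dp-threshold} in the appendix). I would first characterize $\tilde h_{DP}$ as a threshold classifier $\mathbb{I}(\tilde\eta(x, a) \geq \tilde t_a)$ with $\tilde t_0 = 1/2 + \lambda^{*}/(2\,\widetilde{Pr}(A = 0))$ and $\tilde t_1 = 1/2 - \lambda^{*}/(2\,\widetilde{Pr}(A = 1))$ for an optimal Lagrange multiplier $\lambda^{*}$. Then, using order-preservation (Proposition \ref{prop:order-preserving}) together with $\tilde\eta(x, 0) = (1 - \nu)\eta(x, 0) / ((1 - c)\eta(x, 0) + c)$ and $c = \beta_n / \beta_p$ from Example \ref{eg:blumStangl}, I would pull these back into thresholds on $\eta$: explicitly $t_0 = \tilde t_0\, c / ((1 - \nu) - \tilde t_0(1 - c))$, while for group $A = 1$ the pullback is trivial, $t_1 = \tilde t_1$.

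Next, I would exploit the bi-valued structure of $\eta$ under Massart noise with $\delta(x, a) = \delta$: namely $\eta(x, a) \in \{\delta, 1 - \delta\}$ and $h^{*}(x, a) = \mathbb{I}(\eta(x, a) = 1 - \delta)$. Hence $\tilde h_{DP} \equiv h^{*}$ if and only if the joint containment $\delta < t_a \leq 1 - \delta$ holds for both $a \in \{0, 1\}$. I would pin down $\lambda^{*}$ by enforcing tightness of the DP constraint on $\tilde D$ at $h^{*}$, using the biased marginal $\widetilde{Pr}(A = 0)$ in terms of the surviving-mass normalizer $\beta_p q + \beta_n(1 - q)$ with $q = \delta + p(1 - 2\delta)$ (and $p = Pr(h(X, A) = 1 \mid A = 0) = Pr(h(X, A) = 1 \mid A = 1)$). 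Substituting this $\lambda^{*}$ back into the two containment requirements and clearing denominators, the resulting four inequalities should collapse to the two stated in the theorem; the iff then follows because the map $\lambda \mapsto (t_0(\lambda), t_1(\lambda))$ moves the two thresholds monotonically in opposite directions, so emptiness of the feasible $\lambda$-range is equivalent to violation of one of the two range conditions.

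The main obstacle is the algebra in the third step. Unlike the equal-opportunity case, where Corollary \ref{corr:tpr_tnr_sp} makes DP feasibility of $h^{*}$ on $\tilde D$ essentially immediate, the DP-tightness equation here couples $\lambda^{*}$ nontrivially with $\beta_p$, $\beta_n$, $r$, and $q$ through both the biased marginal on $A$ and the nonlinearity of the linear fractional transformation. Tracking these couplings and landing precisely on the coefficient groupings $(1 - 2\delta - 2r(\nu - \delta))$, $(1 - 2\delta - 2r(1 - \delta))$, $(1 - 2r(1 - \nu) - 2\delta(1 - r))$, and $(1 - 2\delta(1 - r))$ is where I expect most of the bookkeeping to sit. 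I would finally verify the strict-versus-non-strict boundary cases $t_a = \delta$ and $t_a = 1 - \delta$ to establish both directions of the iff simultaneously, thereby also yielding the analogue, for demographic parity, of the recovery picture in Figure \ref{fig:blumStanglMassart}.
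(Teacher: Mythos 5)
Your high-level plan — threshold characterization of $\tilde h_{DP}$ via Proposition \ref{prop:biased-dp-threshold}, pullback through the linear fractional transformation, bi-valued $\eta$ under Massart noise, and containment $\delta < t_a \le 1-\delta$ — is the same skeleton the paper uses. But there are three concrete gaps.

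\textbf{First, your threshold formulas do not match the paper's Proposition \ref{prop:biased-dp-threshold}.} You write $\tilde t_a$ in terms of the \emph{biased} marginal $\widetilde{\mathrm{Pr}}(A=a)$ and then propose to expand it via the surviving-mass normalizer $\beta_p q + \beta_n(1-q)$. The paper's Proposition \ref{prop:biased-dp-threshold} and its proof deliberately keep $\mathrm{Pr}(A=a)=r$ and $1-r$ in both the accuracy term and the DP constraint, i.e.\ they treat only the label channel $Y \mapsto \tilde Y$ as corrupted while keeping the original $(X,A)$-marginal, so that $\tilde t_0 = \tfrac12 - \lambda^*/(2r)$ and $\tilde t_1 = \tfrac12 + \lambda^*/(2(1-r))$ (your $\lambda^*$ also has the opposite sign convention, which is harmless). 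If you instead normalize by the retained mass as you propose, the ratio $\widetilde{\mathrm{Pr}}(A=0)/\widetilde{\mathrm{Pr}}(A=1) = r[q\beta_p+(1-q)\beta_n]/(1-r)$ replaces $r/(1-r)$, the extra factor $q\beta_p+(1-q)\beta_n$ does not cancel (unlike the normalizer that cancels in the EO case via $\mathrm{Pr}(\tilde Y=1,A=a)$), and the resulting conditions will not be the ones stated in the theorem. So before the algebra, you need to align on the convention: the paper's recovery conditions are derived under the $r$-convention.

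\textbf{Second, ``pinning down $\lambda^*$ by enforcing DP tightness at $h^*$'' is not a well-posed step.} DP tightness of $h^*$ on $\tilde D$ is a statement about data and bias parameters, not about $\lambda^*$; it does not yield an equation for $\lambda^*$. The paper never solves for $\lambda^*$. Instead, Step 1 of the paper's argument shows the existence of \emph{some} $\lambda$ for which both pulled-back thresholds land in $(\delta,1-\delta)$ (an interval of admissible $\lambda$'s, not a point), and that this interval is non-empty if and only if the two stated conditions hold. Substituting a uniquely chosen $\lambda^*$ and hoping four inequalities ``collapse to two'' is the wrong expectation; the two conditions arise as the pairwise feasibility comparisons between the $\lambda$-intervals coming from group $0$ and group $1$, exactly your monotonicity observation, applied carefully.

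\textbf{Third, the monotonicity observation alone does not close the ``conditions $\Rightarrow$ recovery'' direction.} Opposite monotonicity of $t_0(\lambda)$ and $t_1(\lambda)$ does show that, under the two conditions, the admissible interval is non-empty and that both thresholds cannot simultaneously exit $(\delta,1-\delta)$ on the same side (this subsumes the paper's Step 2). But it does \emph{not} rule out $\lambda$'s for which the two thresholds exit on \emph{opposite} sides: such $\lambda$ values always exist for extreme $\lambda$. You still must show that the resulting classifier, which is maximally lenient on one group and maximally strict on the other, cannot satisfy demographic parity unless it already coincides with $h^*$. This is the paper's Step 3, and it rests on an analog of Corollary \ref{corr:tpr_tnr_sp} for positive rates (the remark after Corollary \ref{corr:tpr_tnr_sp} claims such an extension). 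Without this step you cannot conclude that the saddle-point $\lambda^*$ is forced into the admissible interval, and hence you cannot conclude $\tilde h_{DP}\equiv h^*$ even when the interval is non-empty. Add this missing argument (or replace it with an explicit convexity argument on the dual, checking the sign of the DP-violation subgradient on either side of the admissible interval).
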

Appendix \ref{appndx:blumStanglProofs} contains the proofs of Theorems \ref{thm:blumStangl-reprove} \& \ref{thm:dp-recovery}.



\subsection{Proof Sketches}
We briefly outline our proof technique for Theorems \ref{thm:blum_stangl_eo_massart}, \ref{thm:blumStangl-reprove}, \ref{thm:dp-recovery} to explain an important technical contribution of our paper. We write fairness constrained accuracy maximization using a Lagrange multiplier $\lambda$. Proposition \ref{prop:biased-eo-threshold} characterizes $\tilde{h}_{EO}$ as $\tilde{h}_{EO}(x, a) = \id{\eta(x, a) \geq t_{a}}$ that applies group-dependent thresholds $t_{a}$ on $\eta(x, a)$, where the threshold $t_{a}$ is actually a function of the optimal Lagrange multiplier $\lambda^{*}$, the data distribution parameters, and the bias parameters. We show (see Lemma \ref{thm:eta-lemma}) that as long as these thresholds $t_{a}$ applied to $\eta(x, a)$ for both the groups $a=0$ and $a=1$ lie within the interval $(\delta, 1-\delta)$, we have $\tilde{h}_{EO} \equiv h^{*}$. We show that the possible choices of $\lambda^{*}$ are narrowed down to allow only $\tilde{h}_{EO} \equiv h^{*}$ using the given conditions on the data distribution and bias parameters, and the fairness constraint on the resulting threshold classifier. We prove that the conditions in Theorem \ref{thm:blumStangl-reprove} are necessary and sufficient for the optimal $\lambda^{*}$ parameter in Proposition \ref{prop:biased-eo-threshold} to satisfy that the group-dependent thresholds $t_{0}$ and $t_{1}$ lie in the interval $(\delta, 1-\delta)$, and equivalently, $\tilde{h}_{EO} \equiv h^{*}$.


\section{Recovery of Optimal Reject Option Classifiers from Biased Data for Arbitrary Data Distributions} \label{sec:reject_option}

A major limitation of Theorems \ref{thm:blum_stangl_eo_massart} and \ref{thm:blumStangl-reprove} is that they work only on stylized distributions, where the label noise is either i.i.d. or Massart. In this section, we remove this limitation by proving a similar recovery result for arbitrary data distributions. Massart or i.i.d. label noise creates a clear separation between $\eta(x, a)$ values (or high-risk and low-risk), and allows a small interval margin for the group-wise thresholds to recover $h^{*}$. To mimic this in an arbitrary data distribution, we consider \emph{reject option classifiers} that are allowed to abstain from prediction by paying a penalty \cite{bartlett2008classification, cortes2016learning, charoenphakdee2021classification, schreuder2021classification, franc2023optimal}.
Models that abstain from prediction play an important role in responsible machine learning, as predictions of high uncertainty can be overseen by a human-in-the-loop. As a result, many recent papers have studied reject option classifiers for fair classification \cite{madras2018predict, schreuder2021classification, shah2022selective}.

Let $D$ be an arbitrary distribution on $\mathcal{X} \times \mathcal{A} \times \mathcal{Y}$, with $\mathcal{A} = \{0, 1\}$ and $\mathcal{Y} = \{0, 1\}$. A reject option classifier $g: \mathcal{X} \times \mathcal{A} \rightarrow \{0, 1, \bot\}$ either rejects or abstains from prediction on an input $(x, a)$, denoted by $g(x, a) = \bot$, or it predicts $g(x, a) = h(x, a)$ using a binary classifier $h: \mathcal{X} \times \mathcal{A} \rightarrow \{0, 1\}$. Let $\mathcal{H}$ denote the hypothesis class of all binary classifiers $h: \mathcal{X} \times \mathcal{A} \rightarrow \{0, 1\}$, and let $\mathcal{H}^{\text{rej}}$ be the hypothesis class of all $g: \mathcal{X} \times \mathcal{A} \rightarrow \{0, 1, \bot\}$. For rejection penalty given by $\delta > 0$, the optimal reject option classifier is defined as 
\begin{align*}
h^{\text{rej}} & = \underset{g \in \mathcal{H}^{\text{rej}}}{\mathrm{argmin}}~ \prob{g(X, A) \neq Y, g(X, A) \neq \bot} \\
& \qquad \qquad \qquad \qquad + \delta~ \prob{g(X, A) = \bot}.
\end{align*}
Proposition \ref{prop:opt-rej} characterizes the optimal reject option classifier on any distribution $D$, which is a generalization of the known forms of Optimal reject option classifiers (Section $1$ in \cite{bartlett2008classification}, Section $2$ in \cite{cortes2016learning}).
\begin{proposition} \label{prop:opt-rej}
Let $D$ be any distribution on $\mathcal{X} \times \mathcal{A} \times \mathcal{Y}$, with $\mathcal{A} = \{0, 1\}$ and $\mathcal{Y} = \{0, 1\}$. Let $\delta \in [0, 1/2)$ denote the rejection penalty and $h^{\text{rej}}$ be the optimal reject option classifier defined as above. Then
\[
h^{\text{rej}}(x, a) = \begin{cases} 0, \quad \text{if $\eta(x, a) \leq \delta$} \\
\bot, \quad \text{if $\eta(x, a) \in (\delta, 1-\delta)$} \\
1, \quad \text{if $\eta(x, a) \geq 1-\delta$},
\end{cases}
\]
where $\eta(x, a) = \prob{Y=1|X=x, A=a}$.
\end{proposition}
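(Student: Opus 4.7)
The plan is to reduce the optimization to a pointwise one by conditioning on $(X, A)$. Write the objective as
\begin{align*}
\mathcal{R}(g) &= \prob{g(X, A) \neq Y, g(X, A) \neq \bot} + \delta \prob{g(X, A) = \bot} \\
&= \expecto{(X, A)}{\E{\id{g(X, A) \neq Y, g(X, A) \neq \bot} + \delta \id{g(X, A) = \bot} \;\Big|\; X, A}}.
\end{align*}
Since the inner expectation, for any fixed $(x, a)$, depends on $g$ only through the single value $g(x, a) \in \{0, 1, \bot\}$, the minimization over $g \in \mathcal{H}^{\text{rej}}$ decouples and we may minimize this inner conditional risk separately for each $(x, a)$.

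Next I would compute the conditional risk for each of the three possible actions at $(x, a)$. Predicting the label $1$ incurs conditional risk $\prob{Y \neq 1 \mid X=x, A=a} = 1 - \eta(x, a)$; predicting $0$ incurs $\eta(x, a)$; and abstaining incurs $\delta$. Thus the pointwise optimal action is any minimizer of $\min\{\eta(x, a),\; 1 - \eta(x, a),\; \delta\}$.

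Using $\delta < 1/2$ to break cases cleanly, I would then verify:\ predicting $0$ is optimal exactly when $\eta(x, a) \leq \delta$ (in which case $\eta(x,a) \leq \delta < 1 - \delta \leq 1 - \eta(x,a)$); predicting $1$ is optimal exactly when $\eta(x, a) \geq 1 - \delta$ by the symmetric argument; and abstaining is optimal when $\eta(x, a) \in (\delta, 1 - \delta)$, since then $\delta < \min\{\eta(x, a), 1 - \eta(x, a)\}$. Assigning the boundary values $\eta(x, a) = \delta$ to prediction $0$ and $\eta(x, a) = 1 - \delta$ to prediction $1$ (a tie-breaking convention matching the statement) yields the claimed form of $h^{\text{rej}}$.

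There is essentially no hard step here; the argument is a standard Bayes-rule decoupling, and the only mild subtleties are measurability of the pointwise selector (immediate since $\mathcal{X}$ is discrete) and the boundary tie-breaking, which is purely conventional and does not affect the value of $\mathcal{R}(g)$.
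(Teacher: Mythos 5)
Your proof is correct. It takes a somewhat different and more direct route than the paper's: the paper first re-encodes a reject-option classifier $g$ as a pair of binary classifiers $(\rho, h)$, where $\rho$ marks the rejection region, then solves a nested $\mathrm{argmax}$—first over $h$ (obtaining the Bayes rule $\id{\eta(x,a)\geq 1/2}$ on the non-rejected region), then over $\rho$ pointwise (obtaining $\rho^{*}(x,a)=\id{\eta(x,a)\in(\delta,1-\delta)}$). You instead condition on $(X,A)$, observe that the risk decouples pointwise into a three-way comparison of the conditional risks $\eta(x,a)$, $1-\eta(x,a)$, and $\delta$, and directly read off the minimizer using $\delta<1/2$. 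Both arguments rest on the same pointwise-decoupling idea and both handle the boundary tie-breaking consistently with the stated form of $h^{\mathrm{rej}}$. Your version is more elementary and shorter; the paper's $(\rho,h)$ decomposition adds structure that would be natural if one wanted to restrict $h$ or $\rho$ to a smaller hypothesis class, but since both range over all classifiers here, nothing is lost by your direct treatment. Your remark on measurability is a small bonus, though immaterial given the paper's standing assumption that $\mathcal{X}$ is discrete.
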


The proof of Proposition \ref{prop:opt-rej} is given in Appendix \ref{appndx:rejectionProofs}. Proposition \ref{prop:opt-rej} shows how reject option induces a separation between high and low $\eta(x, a)$ values similar to the case of i.i.d. or Massart label noise. A larger separation has an obvious trade-off with a larger fraction of inputs being turned away for model prediction.

Now assume that the optimal reject option classifier $h^{\text{rej}}$ satisfies equal opportunity on the non-rejected part of the distribution $D$, i.e., $\prob{h^{\text{rej}}(X, A)=1|Y=1, A=0} = \prob{h^{\text{rej}}(X, A)=1|Y=1, A=1}$. Theorem \ref{thm:reject_recovery} shows that the optimal equal opportunity classifier on the biased distribution $\tilde{D}$ recovers $h^{\text{rej}}$ on the non-rejected inputs, if the data distribution and bias parameters satisfy the same conditions as in Theorems \ref{thm:blum_stangl_eo_massart} \& \ref{thm:blumStangl-reprove}.
\begin{theorem} \label{thm:reject_recovery}
Let $D$ be an arbitrary distribution on $\mathcal{X} \times \mathcal{A} \times \mathcal{Y}$, with $\mathcal{A} = \{0, 1\}$ and $\mathcal{Y} = \{0, 1\}$. Let $\delta \in [0, 1/2)$ be the rejection penalty and suppose the optimal reject option classifier $h^{\text{rej}}$ defined above satisfies equal opportunity on the non-rejected part of distribution $D$. Let $\tilde{D}$ be a biased version of $D$ defined as in Example \ref{eg:blumStangl} using bias parameters $\beta_{p}, \beta_{n}, \nu \in (0, 1)$, and let $\tilde{h}_{EO}$ be the optimal equal opportunity classifier on $\tilde{D}$. If the data distribution and bias parameters satisfy
\begin{align*}
    (1-r)(1-2\delta) + r\left((1-\delta) \beta_{p} (1-2\nu) - \delta \beta_{n}\right) > 0 \\
    \text{ and }  \quad \quad \quad \quad \quad \quad \quad \quad \quad \quad \\
    (1-r)(1-2\delta) + r\left((1-\delta) \beta_{n} - \delta \beta_{p} (1-2\nu)\right) > 0,
\end{align*}
then $\tilde{h}_{EO}(x, a) = h^{\text{rej}}(x, a)$ whenever $h^{\text{rej}}(x, a) \neq \bot$. 
\end{theorem}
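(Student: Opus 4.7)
The plan is to adapt the threshold-based Lagrangian analysis used for Theorem \ref{thm:blum_stangl_eo_massart} to a setting where the clean separation of $\eta$-values no longer holds on the whole input space, but only on non-rejected inputs. By Proposition \ref{prop:biased-eo-threshold}, the optimal equal-opportunity classifier $\tilde{h}_{EO}$ on $\tilde{D}$ has the form $\tilde{h}_{EO}(x,a) = \id{\eta(x,a) \geq t_a}$, where the group-dependent thresholds $t_0, t_1$ are explicit functions of an optimal Lagrange multiplier $\lambda^*$. By Proposition \ref{prop:opt-rej}, every non-rejected input satisfies $\eta(x,a) \in [0,\delta] \cup [1-\delta, 1]$ with $h^{\text{rej}}(x,a) = \id{\eta(x,a) \geq 1-\delta}$. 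Therefore, as soon as I can show that the optimal $\lambda^*$ yields $t_0, t_1 \in (\delta, 1-\delta)$, the two classifiers agree on every non-rejected input and the theorem follows.

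Next I would invoke Corollary \ref{corr:tpr_tnr_sp} to identify EO on $\tilde{D}$ with EO on $D$, and decompose the constraint $TPR_0(\tilde{h}_{EO}) = TPR_1(\tilde{h}_{EO})$ into contributions coming from non-rejected inputs (where $\eta \notin (\delta, 1-\delta)$) and rejected inputs (where $\eta \in (\delta, 1-\delta)$). The hypothesis that $h^{\text{rej}}$ satisfies EO on the non-rejected part of $D$ makes the non-rejected TPR contributions equal across groups for any threshold classifier with $t_a \in (\delta, 1-\delta)$, collapsing the EO constraint to the requirement $\prob{\eta(X,0) \in [t_0, 1-\delta) \mid Y=1, A=0} = \prob{\eta(X,1) \in [t_1, 1-\delta) \mid Y=1, A=1}$. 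This cuts out a one-dimensional feasibility curve in $(\delta, 1-\delta)^2$ along which accuracy on $\tilde{D}$ is then optimized, and the resulting accuracy gradient is directly computable from the threshold formulas in Proposition \ref{prop:biased-eo-threshold}.

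The main obstacle, as in the proof of Theorem \ref{thm:blum_stangl_eo_massart}, is showing that the accuracy-maximizing $\lambda^*$ lies strictly in the interior, rather than at a boundary where some $t_a$ equals $\delta$ or $1-\delta$. This is precisely where the stated conditions $(1-r)(1-2\delta) + r\bigl((1-\delta)\beta_p(1-2\nu) - \delta\beta_n\bigr) > 0$ and $(1-r)(1-2\delta) + r\bigl((1-\delta)\beta_n - \delta\beta_p(1-2\nu)\bigr) > 0$ enter: plugging the boundary threshold values into the formulas of Proposition \ref{prop:biased-eo-threshold} and comparing the accuracy of the boundary classifier against a small interior perturbation along the EO feasibility curve should, after elementary algebra, reduce exactly to these two inequalities as the conditions that rule out boundary optima. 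I expect the computation to mirror the Massart case almost line for line, with the assumed EO of $h^{\text{rej}}$ on non-rejected inputs substituting for the automatic cancellation provided by Massart noise in Theorem \ref{thm:blum_stangl_eo_massart}; the novelty lies entirely in identifying the right "reduced" EO constraint so that the boundary analysis remains unchanged.
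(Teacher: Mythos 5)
Your overall skeleton is right and matches the paper's: use Proposition \ref{prop:biased-eo-threshold} to write $\tilde{h}_{EO}$ as a group-wise threshold classifier parametrized by a Lagrange multiplier $\lambda$, observe via Proposition \ref{prop:opt-rej} that agreement with $h^{\text{rej}}$ on non-rejected inputs is exactly the statement that $t_0, t_1 \in (\delta, 1-\delta)$, and then argue the optimal $\lambda^*$ must land there. Your ``reduced EO constraint'' (canceling the $\prob{\eta \ge 1-\delta \mid Y=1, A=a}$ terms using the $h^{\text{rej}}$-EO hypothesis) is a correct and clean observation, though the paper never writes it down explicitly.

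The deviation --- and where a real gap opens up --- is in how you propose to pin down $\lambda^*$. You frame it as ruling out boundary optima by comparing accuracy at a boundary point against a small interior perturbation along the EO feasibility curve in $(t_0, t_1)$-space. That is not how the paper proceeds, and it is not obvious it can be made to work for an arbitrary $D$: the accuracy gradient along the feasibility curve depends on the conditional density of $\eta(X,a)$ near the boundary, which is arbitrary, whereas the theorem's conditions involve only $r, \delta$ and the bias parameters. The paper instead does a pure $\lambda$-side case analysis identical to the Massart proof: Step 1 shows the two stated inequalities are exactly equivalent to the existence of some $\lambda$ with both $t_0(\lambda), t_1(\lambda) \in (\delta, 1-\delta)$ (this is ``plugging boundary values into the formulas,'' no accuracy comparison involved); Step 2 shows the same inequalities preclude any $\lambda$ with both thresholds on the same side outside $(\delta, 1-\delta)$; Step 3 uses the $h^{\text{rej}}$-EO hypothesis together with Corollary \ref{corr:tpr_tnr_sp} to rule out the opposite-sides case by a direct TPR inequality (not your ``reduced constraint''). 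Since the Lagrangian-optimal $\lambda^*$ must produce a fair classifier, Steps 2 and 3 leave only the interior case. So the argument is elimination over $\lambda$, not a first-order/perturbation argument in $(t_0, t_1)$; your plan as stated would require you to prove a strict accuracy improvement under a perturbation, which is distribution-dependent and not what the stated conditions control.

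In short: your lemma and setup are right and match the paper, but the ``perturbation along the feasibility curve'' step is a different mechanism that is not verified and is harder to salvage than the paper's direct $\lambda$-case-analysis; you should replace it with the three-step $\lambda$-range argument that parallels the Massart proof verbatim.
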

A complete proof of Theorem \ref{thm:reject_recovery} can be found in Appendix \ref{appndx:rejectionProofs}. Note that if we consider the entire distribution $D$ instead of only the non-rejected inputs, then $\prob{\tilde{h}_{EO}(X, A) \neq h^{\text{rej}}(X, A)} \leq \prob{h^{\text{rej}}(X, A) = \bot}$. In other words, if $\prob{h^{\text{rej}}(X, A) = \bot}$ is small, then $\tilde{h}_{EO}$ matches $h^{\text{rej}}$ on distribution $D$ with high probability.

\section{Recovering Robust Hypothesis under Data Bias for Arbitrary Data Distributions and Arbitrary Hypothesis Classes} \label{sec:robust}
In this section, we remove the restriction on hypothesis class $\mathcal{H}$, assumed to be the class of all group-aware binary classifiers in Sections \ref{sec:blumStanglMassart} \& \ref{sec:reject_option}. Note that the characterization of optimal fair classifiers using group-aware thresholds on the regression function $\eta(x, a)$ plays an important role in our proofs from Sections \ref{sec:blumStanglMassart} \& \ref{sec:reject_option}. For an arbitrary hypothesis class $\mathcal{H}$, even the classifier $h^{*} \in \mathcal{H}$ that maximizes accuracy on $D$ need not be a threshold classifier on $\eta(x, a)$. To work around this, we make an assumption that the optimal (and fair) classifier that we want to recover under data bias must be \emph{robust} under small perturbations to the data distribution $D$. Our definition of $\epsilon$-robustness is motivated by the linear fractional transformations of regression function observed in various data bias models earlier (see Section \ref{sec:prelim}).



Let $D$ be any distribution on $\mathcal{X} \times \mathcal{A} \times \mathcal{Y}$, with $\mathcal{A} = \{0, 1\}$ and $\mathcal{Y} = \{0, 1\}$. Let $\prob{A=0} = r$, $\prob{A=1} = 1-r$, and let $\prob{Y=1|A=0} = \prob{Y=1|A=1} = q$. Let $h^{*}$ be the most accurate classifier in $\mathcal{H}$, i.e., 
\begin{align*} 
h^{*} & = \underset{h \in \mathcal{H}}{\mathrm{argmax}}~ \prob{h(X, A) = Y} \\
& = \underset{h \in \mathcal{H}}{\mathrm{argmax}}~ \expecto{(X, A)}{h(X, A) (2\eta(X, A)-1)}.
\end{align*} 
Note that, for an arbitrary hypothesis class $\mathcal{H}$, the optimal $h^{*}$ need not be a threshold classifier on $\eta(x, a)$. As in the previous sections, we assume that $h^{*}$ satisfies equal opportunity on the distribution $D$, i.e., $\prob{h^{*}(X, A)=1|Y=1, A=0} = \prob{h^{*}(X, A)=1|Y=1, A=1}$. 

\begin{definition} \label{def:eps-robust-property}
We define $h^{*} \in \mathcal{H}$ to be \emph{$\epsilon$-robust} if, for any distribution $D'$ with random data points $(X, A, Y')$ s.t.
\[
\eta'(x, a) = \prob{Y'=1|X=x, A=a} = \frac{P_{a} \eta(x, a) + Q_{a}}{R_{a} \eta(x, a) + S_{a}},
\]
with $S_{a} = 1, |R_{a}| \leq \epsilon, |Q_{a}| \leq \epsilon, 1-\epsilon \leq P_{a} \leq 1+\epsilon$, and $P_{a}S_{a} - Q_{a}R_{a} \geq 0$, the optimal classifier $h^{*}$ remains unchanged, i.e., $h^{*} = h' = \underset{h \in \mathcal{H}}{\mathrm{argmax}}~ \prob{h(X, A) = Y'}$. 
\end{definition}
The above conditions give a scale-invariant proxy to say that the linear fractional transformation defined by $P_{a}, Q_{a}, R_{a}, S_{a}$ is order-preserving and when it is appropriately scaled to make $S_{a}=1$, it is $\epsilon$-close to the identity transformation. Definition \ref{def:eps-robust-property} says that the classifier $h^{*}$ of maximum accuracy in $\mathcal{H}$ is robust to small near-identity perturbations of the data distribution $D$.

Now we are ready to state our result for recovering robust, fair hypothesis from biased data on arbitrary data distributions and arbitrary hypothesis classes. 
\begin{theorem} \label{thm:eo-robust-recovery}
For any distribution $D$ and any hypothesis class $\mathcal{H}$, if the optimal classifier $h^{*} \in \mathcal{H}$ is $\epsilon$-robust and the bias parameters $\beta_{p}, \beta_{n}, \nu$ satisfy $(1-\epsilon)\beta_{n} \leq \beta_{p} \leq (1+\epsilon)\beta_{n}$,
\begin{align*}
    r \left((1-\nu)\beta_{p} - (1-\epsilon)\beta_{n}\right) + \epsilon (1-r) \geq 0 \\
    \text{ and }  \quad \quad \quad \quad \quad \quad \quad \quad\\
    r \left((1+\epsilon)\beta_{n} - (1-\nu)\beta_{p}\right) + \epsilon (1-r) \geq 0,
\end{align*}
then the optimal equal opportunity classifier from $\mathcal{H}$ on the biased distribution $\tilde{D}$ recovers the optimal classifier from $\mathcal{H}$ on the original distribution $D$, i.e., $\tilde{h}_{EO} \equiv h^{*}$.
\end{theorem}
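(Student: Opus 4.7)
The plan is to combine a Lagrangian reformulation with the $\epsilon$-robustness hypothesis on $h^{*}$. Corollary \ref{corr:tpr_tnr_sp} gives $\widetilde{TPR}_a(h) = TPR_a(h)$ under the Blum--Stangl bias, so $h^{*}$ satisfies equal opportunity on $\tilde{D}$ and is therefore primal feasible for the EO-constrained problem on $\tilde{D}$. Define $L(h,\lambda) = \prob{h(\tilde{X},A)=\tilde{Y}} - \lambda(\widetilde{TPR}_0(h) - \widetilde{TPR}_1(h))$; on the EO-feasible set this coincides with the accuracy $\prob{h(\tilde{X},A)=\tilde{Y}}$. Consequently, if I can exhibit a single $\lambda \in \R$ with $h^{*} \in \mathrm{argmax}_{h \in \mathcal{H}} L(h,\lambda)$, then for every EO-feasible $h'$ I have $\prob{h'(\tilde{X},A)=\tilde{Y}} = L(h',\lambda) \leq L(h^{*},\lambda) = \prob{h^{*}(\tilde{X},A)=\tilde{Y}}$, forcing $\tilde{h}_{EO} \equiv h^{*}$.

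To reduce the search for such a $\lambda$ to Definition \ref{def:eps-robust-property}, I absorb the Lagrangian into a modified regression function. Collecting the coefficient of $h(x,a)$ yields
\[
L(h,\lambda) = \mathrm{const} + \sum_{x,a} \prob{\tilde{X}=x, A=a}\, h(x,a)\, (2\eta_\lambda(x,a) - 1),
\]
where $\eta_\lambda(x,0) = \left(1 - \lambda/(2\prob{\tilde{Y}=1,A=0})\right) \tilde{\eta}(x,0)$ and $\eta_\lambda(x,1) = \left(1 + \lambda/(2\prob{\tilde{Y}=1,A=1})\right) \eta(x,1)$, so maximizing $L(\cdot,\lambda)$ over $\mathcal{H}$ coincides with maximizing accuracy on the distribution $D'_\lambda$ whose regression function is $\eta_\lambda$. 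Substituting the Blum--Stangl form $\tilde{\eta}(x,0) = (1-\nu)\eta(x,0)/((1-c)\eta(x,0) + c)$ with $c = \beta_n/\beta_p$ from Example \ref{eg:blumStangl}, and rescaling numerator and denominator by $c$ so that $S_a = 1$, I read off the LFT parameters: $P_0 = (1 - \mu_0/2)(1-\nu)\beta_p/\beta_n$, $Q_0 = 0$, $R_0 = (\beta_p - \beta_n)/\beta_n$, and $P_1 = 1 + \lambda/(2\prob{\tilde{Y}=1,A=1})$, $Q_1 = R_1 = 0$, where $\mu_0 = \lambda/\prob{\tilde{Y}=1,A=0}$. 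The bound $(1-\epsilon)\beta_n \leq \beta_p \leq (1+\epsilon)\beta_n$ instantly gives $|R_0| \leq \epsilon$, while $|Q_a| = 0 \leq \epsilon$ and the order-preserving condition $P_a S_a - Q_a R_a = P_a \geq 0$ are automatic once $P_a \in [1-\epsilon, 1+\epsilon]$ is established.

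The heart of the proof is choosing $\lambda$ so that $|P_a - 1| \leq \epsilon$ holds simultaneously for both $a \in \{0,1\}$. Group $1$ produces the symmetric interval $|\lambda| \leq 2\epsilon\, \prob{\tilde{Y}=1,A=1}$, and for group $0$, substituting $\prob{\tilde{Y}=1,A=0} = (1-\nu)\beta_p r q/Z$ and $\prob{\tilde{Y}=1,A=1} = (1-r)q/Z$ with a common normalizer $Z$ and canceling $Z$ turns the group-$0$ constraint into
\[
\frac{2rq}{Z}\bigl((1-\nu)\beta_p - (1+\epsilon)\beta_n\bigr) \leq \lambda \leq \frac{2rq}{Z}\bigl((1-\nu)\beta_p - (1-\epsilon)\beta_n\bigr).
\]
A direct comparison of endpoints shows that the two $\lambda$-intervals intersect if and only if $r((1-\nu)\beta_p - (1+\epsilon)\beta_n) \leq \epsilon(1-r)$ and $r((1-\nu)\beta_p - (1-\epsilon)\beta_n) \geq -\epsilon(1-r)$, which are precisely the two bias inequalities in the theorem. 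For any $\lambda$ in this nonempty intersection, $\eta_\lambda$ is an order-preserving LFT of $\eta$ whose parameters lie in the $\epsilon$-window of Definition \ref{def:eps-robust-property}; $\epsilon$-robustness of $h^{*}$ then places $h^{*} \in \mathrm{argmax}_{\mathcal{H}} L(\cdot,\lambda)$, and the saddle-point argument from the first paragraph yields $\tilde{h}_{EO} \equiv h^{*}$. The main obstacle is the algebraic bookkeeping that reduces the two-interval intersection to the stated bias inequalities; everything else is a clean composition of Lagrangian duality with the robustness definition.
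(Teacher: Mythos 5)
Your proposal follows the same linear-fractional-transformation route as the paper's proof: reduce the Lagrangian to group-wise modified regression functions, read off $P_a, Q_a, R_a, S_a$, and show that the two stated bias inequalities are exactly the conditions for the two $\lambda$-intervals (group $0$ and group $1$) to intersect. Your identification $P_0 = (1-\mu_0/2)(1-\nu)\beta_p/\beta_n$, $Q_0=0$, $R_0 = (\beta_p-\beta_n)/\beta_n$, $P_1 = 1 + \lambda/(2\prob{\tilde{Y}=1,A=1})$ coincides with the paper's $P_0 = (1-\nu)\beta_p/\beta_n + \lambda/(2\beta_n rq)$, $P_1 = 1 - \lambda/(2(1-r)q)$ up to the opposite sign convention for $\lambda$ and the explicit normalizer $Z$ (which you carry and the paper drops — a harmless scale that cancels in the endpoint comparison). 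The endpoint algebra reproduces the theorem's two inequalities verbatim. Where you genuinely improve on the paper is in the duality logic. The paper performs a three-step case analysis on the \emph{optimal} multiplier $\lambda^*$: Step 1 exhibits a good $\lambda$, Step 2 rules out both $P_a$ falling on the same bad side of the $\epsilon$-window, and Step 3 handles opposite bad sides via a TPR-sandwich argument. You replace all of that with a clean Lagrangian-sufficiency step: since $h^*$ is EO-feasible on $\tilde D$ by Corollary~\ref{corr:tpr_tnr_sp}, exhibiting \emph{any} $\lambda$ with $h^* \in \mathrm{argmax}_h L(h,\lambda)$ already forces $h^*$ to be the EO-optimum. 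That collapses Steps 2--3 and avoids any implicit saddle-point/minimax appeal, which is welcome since $\mathcal{H}$ is arbitrary and Sion's theorem is not obviously available here.

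One caveat, which you share with the paper: identifying $\mathrm{argmax}_h L(h,\lambda)$ with the accuracy maximizer on a tilted distribution $D'_\lambda$ only plugs into Definition~\ref{def:eps-robust-property} if $D'_\lambda$ has the \emph{same $(X,A)$-marginal as $D$}, since that definition fixes $(X,A)$ and replaces only $Y'$. Your display $\sum_{x,a}\prob{\tilde X=x, A=a}\, h(x,a)\,(2\eta_\lambda(x,a)-1)$ carries the $(X,A)$-marginal of $\tilde D$, which under the Blum--Stangl model is $D$'s marginal reweighted on group $0$ by $\beta_p\eta(x,0)+\beta_n(1-\eta(x,0))$; this factor is precisely $c\,(R_0\eta(x,0)+S_0)$, so the LFT denominator in $\tilde\eta$ cancels and the correct coefficient of $h(x,0)$ is \emph{affine} in $\eta(x,0)$ with a nonzero intercept, rather than a fractional transform with $Q_0=0$. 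The paper obtains the LFT form by silently swapping to $D$'s marginal (it substitutes $r$ for $\prob{A=0}$ and keeps the denominator), and you inherit that same step. Since this is exactly what the paper's own proof does, it is a shared weak point rather than a gap introduced by your argument, but it would be worth making explicit if this were written up.
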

We prove Theorem \ref{thm:eo-robust-recovery} in Appendix \ref{appndx:proof_eo-robust-recovery}. Our proof reuses the basic characterization of optimal fair classifiers using Lagrange multipliers, and although it uses the class probabilities $\eta(x, a)$'s in a crucial way, it circumvents the need for threshold-based arguments completely. Figure \ref{fig:eps-robust} illustrates the recovery conditions in Theorem \ref{thm:eo-robust-recovery}, for reasonably chosen group proportion parameter $r$, label bias $\nu$ and hypothesis robustness parameter $\epsilon$.

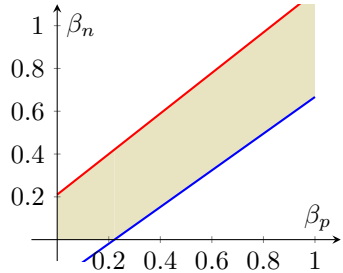
\begin{figure}[!h]
\centering
\begin{tikzpicture}[scale=0.6]
    \begin{axis}[
    axis line style={->},
    axis lines=middle,
    xlabel=$\beta_{p}$,
    ylabel=$\beta_{n}$,
    xmin=-0.1, xmax=1.1,
    ymin=-0.1, ymax=1.1,
    axis on top=true,
    domain=0:1,
    ]
    \addplot[draw=red,thick,name path=A] {(0.9*x+0.2)/0.95}; 
    \addplot[draw=blue,thick,name path=B] {(0.9*x-0.2)/1.05}; 
    \addplot[name path=C, draw=none] {0};
    \addplot[olive!20] fill between[of=A and C,soft clip={domain=0:0.2/0.9}];
    \addplot[olive!20] fill between[of=A and B,soft clip={domain=0.2/0.9:1}];
    \end{axis}
\end{tikzpicture}
\vspace{-.4cm}
\caption{Recovery region for $\beta_{p}, \beta_{n} \in (0, 1]$ given by the constraints $r \left((1-\nu)\beta_{p} - (1-\epsilon)\beta_{n}\right) + \epsilon (1-r) \geq 0$ and $r \left((1+\epsilon)\beta_{n} - (1-\nu)\beta_{p}\right) + \epsilon (1-r) \geq 0$ as in Theorem \ref{thm:eo-robust-recovery}, when $r=0.2$, $\nu=0.1$, and $\epsilon=0.05$. Even for arbitrary distributions and hypothesis classes, optimal and fair classifiers can be recovered from extreme under-representation and for a large range of data biases using just equal opportunity constraints.}
\label{fig:eps-robust}
\end{figure}

\section{Recovering from Time-Varying Data Bias} \label{sec:time_varying}
Data biases arise commonly in machine learning pipelines where data changes for downstream applications and over time. As an application of our techniques, we demonstrate how we can obtain recovery guarantees in data bias pipelines, i.e., when at every time step, we obtain a new shifted distribution. We model time-varying data bias as repeated applications of single-step data bias model (e.g., Example \ref{eg:blumStangl} used in previous sections).

Let $\tilde{D}_{t}$ be the biased data distribution obtained from an original distribution $D$, when the data bias model described in Example \ref{eg:blumStangl} gets applied repeatedly $t$ times with possibly different bias parameters $(\beta_{p,i}, \beta_{n,i}, \nu_{i})$ at $i$-th time step, and let $c_{i} = \dfrac{\beta_{n,i}}{\beta_{p,i}}$. Since the composition of linear fractional transformations remains a linear fractional transformation, we obtain the following generalization for how the regression function changes over time.
\begin{proposition} \label{prop:time-eta-tilde}
Let $(X, A, Y)$ denote a random data point from any given distribution $D$ and let $(X, A, \tilde{Y}_{t})$ be a random data point from its corresponding biased distribution $\tilde{D}_{t}$ after applying the multi-stage time-varying data bias model described above. Let $\eta(x, a) = \prob{Y=1|X=x, A=a}$ and $\tilde{\eta}_{t}(x, a) = \prob{\tilde{Y}_{t}=1|X=x, A=a}$. Then

\[
\tilde{\eta}_{t}(x, 0) = \frac{\eta(x, 0)}{{\displaystyle \sum_{i=1}^{t} \left(\dfrac{1 - c_{i}}{1 - \nu_{i}} 
\prod_{j=i+1}^{t} \dfrac{c_{j}}{1 - \nu_{j}}\right) \eta + \prod_{i=1}^{t} \dfrac{c_{i}}{1 - \nu_{i}}}},
\]
where $\displaystyle \prod_{j=t+1}^{t} \dfrac{c_{j}}{1 - \nu_{j}} \overset{\mathrm{def}}{=} 1$.
\end{proposition}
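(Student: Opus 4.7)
The plan is to prove the formula by induction on $t$, leveraging the single-step regression-function transformation established in Example \ref{eg:blumStangl} (equivalently Proposition \ref{prop:blum_stangl_example}) together with the fact that the composition of linear fractional transformations is again a linear fractional transformation. Throughout I will write $\eta = \eta(x,0)$ to declutter notation and denote the conjectured form by $\tilde{\eta}_t = \eta/(S_t \eta + P_t)$, where
\begin{equation*}
S_t = \sum_{i=1}^{t} \left(\frac{1-c_i}{1-\nu_i} \prod_{j=i+1}^{t} \frac{c_j}{1-\nu_j}\right), \qquad P_t = \prod_{i=1}^{t} \frac{c_i}{1-\nu_i},
\end{equation*}
using the paper's empty-product convention $\prod_{j=t+1}^{t} \cdot = 1$.

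For the base case $t=1$, Proposition \ref{prop:blum_stangl_example} gives $\tilde{\eta}_1 = (1-\nu_1)\eta / ((1-c_1)\eta + c_1)$; dividing numerator and denominator by $(1-\nu_1)$ yields exactly $\eta/(S_1 \eta + P_1)$ with $S_1 = (1-c_1)/(1-\nu_1)$ and $P_1 = c_1/(1-\nu_1)$. For the inductive step, since $\tilde{D}_{t+1}$ is obtained from $\tilde{D}_t$ by one further application of the bias model with parameters $(\beta_{p,t+1}, \beta_{n,t+1}, \nu_{t+1})$, Proposition \ref{prop:blum_stangl_example} applied with $\tilde{\eta}_t$ in the role of $\eta$ gives
\begin{equation*}
\tilde{\eta}_{t+1} = \frac{\tilde{\eta}_t}{\tfrac{1-c_{t+1}}{1-\nu_{t+1}} \tilde{\eta}_t + \tfrac{c_{t+1}}{1-\nu_{t+1}}}.
\end{equation*}
Substituting the inductive hypothesis $\tilde{\eta}_t = \eta/(S_t \eta + P_t)$ and clearing the inner denominator by multiplying numerator and denominator by $S_t \eta + P_t$ produces
\begin{equation*}
\tilde{\eta}_{t+1} = \frac{\eta}{\left(\tfrac{1-c_{t+1}}{1-\nu_{t+1}} + \tfrac{c_{t+1}}{1-\nu_{t+1}} S_t\right) \eta + \tfrac{c_{t+1}}{1-\nu_{t+1}} P_t}.
\end{equation*}

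All that remains is a bookkeeping identity matching the coefficients to $S_{t+1}$ and $P_{t+1}$. The constant term is immediate: $\tfrac{c_{t+1}}{1-\nu_{t+1}} P_t = P_{t+1}$ by definition. For the $\eta$-coefficient, I split $S_{t+1}$ by isolating its $i=t+1$ summand, which under the empty-product convention contributes $\tfrac{1-c_{t+1}}{1-\nu_{t+1}}$, and I factor $\tfrac{c_{t+1}}{1-\nu_{t+1}}$ out of each remaining $i \leq t$ summand to recover $\tfrac{c_{t+1}}{1-\nu_{t+1}} S_t$; summing these reproduces the coefficient above, closing the induction. I do not expect a substantive obstacle here — the only thing to be careful about is the index shift $\prod_{j=i+1}^{t} \to \prod_{j=i+1}^{t+1}$ when peeling off the $j=t+1$ factor from each term of $S_{t+1}$ and the empty-product convention at $i=t+1$, both of which are notational rather than conceptual.
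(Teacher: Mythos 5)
Your proof is correct and matches the paper's approach exactly: the paper states that the result follows by simple induction on $t$ using the single-step linear-fractional transformation from Proposition \ref{prop:blum_stangl_example} and the closure of linear fractional transformations under composition, which is precisely what you carry out. The base case, the inductive substitution, and the coefficient-matching (including the empty-product convention and index shift) are all handled correctly.
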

Note that $\tilde{D}_{t}$ conditioned on $A=1$ remains unchanged, and therefore, $\tilde{\eta}_{t}(x, 1) = \eta(x, 1)$, since we are looking at the data bias model in Example \ref{eg:blumStangl}. The proof of Proposition \ref{prop:time-eta-tilde} is by simple induction on $t$. As a result of Proposition \ref{prop:time-eta-tilde}, $\tilde{\eta}_{t}(x,a)$ can be expressed as another single-step data bias model that directly transforms $\eta(x, a)$ into $\tilde{\eta}_{t}(x, a)$ using a linear fractional transformation with $P=0, Q=0, R = \displaystyle \sum_{i=1}^{t} \left(\dfrac{1 - c_{i}}{1 - \nu_{i}} \displaystyle \prod_{j=i+1}^{t} \dfrac{c_{j}}{1 - \nu_{j}}\right), \text{ and } S = \displaystyle \prod_{i=1}^{t} \dfrac{c_{i}}{1 - \nu_{i}}$. The above $P, Q, R, S$ obey the conditions in Proposition \ref{prop:order-preserving}, so the corresponding linear fractional transformation is order-preserving.
\vspace{-.2cm}
\subsection{Repeated Data Bias \& Infinite Time Horizon} \label{subsec: uniform-time-varying}
As a warm-up, we first study a simpler case where the bias parameters do not change at each time step, i.e., $\beta_{p,i} = \beta_{p}$, $\beta_{n,i} = \beta_{n}$, and $\nu_{i} = \nu$, for all $i \in [t]$). Equivalently, the same data bias model with the same bias parameters gets applied repeatedly $t$ times. We work with the original distribution $D$ described as in Theorem \ref{thm:blumStangl-reprove} for the ease of analysis, and assume that $\prob{A=0} = r$, $\prob{A=1} = 1-r$ and $\prob{Y=1|A=0} = \prob{Y=1|A=1} = q$. First, we show Theorem \ref{thm:uniform-time-varying-eo} about when the optimal equal opportunity classifier $\tilde{h}_{EO,t}$ on the biased distribution $\tilde{D}_{t}$ can recover $h^{*}$ for the infinite time horizon as $t \rightarrow \infty$, and the necessary conditions on the data distribution and bias parameters to allow that. The proof of Theorem \ref{thm:uniform-time-varying-eo} is given in Appendix \ref{appndx:proofs-time-varying}.

\begin{theorem} \label{thm:uniform-time-varying-eo}
Let $\tilde{D}_{t}$ be the biased distribution obtained by applying the bias model in Example \ref{eg:blumStangl} repeatedly $t$ times with $\beta_{p,i} = \beta_{p}$, $\beta_{n,i} = \beta_{n}$, and $\nu_{i} = \nu$, for all $i \in [t]$ on a given distribution $D$ defined as in Theorem \ref{thm:blumStangl-reprove}. Let $h^{*}$ be the Bayes optimal classifier on $D$ and let $\tilde{h}_{EO,t}$ be the optimal equal opportunity classifier on $\tilde{D}_{t}$.
\begin{itemize}
\vspace{-.3cm}
\item If $\beta_{n} < 1$ then as $t \rightarrow \infty$, we have $\eta(x,0) \rightarrow 0$, for all $x \in \mathcal{X}$. Thus, we cannot have $\tilde{h}_{EO,t} \equiv h^{*}$ as $t \rightarrow \infty$.
\vspace{-.2cm}
\item If $\beta_{n} = 1$ and $\tilde{h}_{EO,t} \equiv h^{*}$ as $t \rightarrow \infty$, then the data distribution and bias parameters must satisfy
\vspace{-.2cm}
\[
1 - \dfrac{(1-2\delta)}{(1-\delta) r} < \dfrac{\nu \beta_{p}}{1-\beta_{p}(1 - \nu)} < 1 + \dfrac{(1 - 2\delta)}{\delta r}.
\]
\end{itemize}
\end{theorem}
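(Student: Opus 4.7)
The plan is to apply Proposition~\ref{prop:time-eta-tilde} in the uniform parameter setting, collapse the resulting expression via a geometric-series identity, and then analyze the optimal equal opportunity classifier on $\tilde{D}_{t}$ in the limit $t \to \infty$ for each bullet separately. Substituting $c_{i} = c := \beta_{n}/\beta_{p}$ and $\nu_{i} = \nu$ into Proposition~\ref{prop:time-eta-tilde} makes $\prod_{j} c_{j}/(1-\nu_{j})$ equal to $\kappa^{t}$ with $\kappa := c/(1-\nu)$, and the nested sum becomes a geometric series with ratio $\kappa$, yielding (for $\eta := \eta(x,0)$ and $\kappa \neq 1$)
\[
\tilde{\eta}_{t}(x, 0) \;=\; \frac{\eta}{\dfrac{1-c}{1-\nu}\cdot\dfrac{\kappa^{t}-1}{\kappa-1}\,\eta \;+\; \kappa^{t}}.
\]
Moreover, since the bias in Example~\ref{eg:blumStangl} never converts a $Y = 0$ sample into $\tilde{Y} = 1$ at any stage, $\prob{Y = 1 \mid \tilde{Y}_{t} = 1, A = 0} = 1$ at every $t$, and Proposition~\ref{prop:tilde-tpr-tnr} then gives $\widetilde{TPR}_{a,t}(h) = TPR_{a}(h)$ for every $h$, $a$, and $t$. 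Consequently, the equal opportunity constraint on $\tilde{D}_{t}$ coincides with equal opportunity on $D$ at every stage.

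For the first bullet, because $c_{i}$ and $\nu_{i}$ are constant in $i$, the $t$-step biased regression function is itself an order-preserving single-step LFT of $\eta$ (via Proposition~\ref{prop:order-preserving} applied to the effective parameters $(P_{t}, Q_{t}, R_{t}, S_{t})$ identified in the paragraph after Proposition~\ref{prop:time-eta-tilde}). When $\beta_{n} < 1$, inspection of the denominator above shows that the $\kappa^{t}$ and $\kappa^{t}/(\kappa-1)$ contributions dominate, forcing $\tilde{\eta}_{t}(x, 0) \to 0$ for every $x \in \mathcal{X}$ as $t \to \infty$. The multi-step analog of Proposition~\ref{prop:biased-eo-threshold} then shows that any Lagrange multiplier $\lambda^{*}$ satisfying the equal opportunity constraint forces the corresponding threshold $t_{0}$ (on $\eta$) outside the interval $(\delta, 1-\delta)$ in the limit, so $\tilde{h}_{EO, t} \not\equiv h^{*}$ as $t \to \infty$.

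For the second bullet, with $\beta_{n} = 1$ I would specialize the closed form above and the multi-step analog of Proposition~\ref{prop:biased-eo-threshold} to obtain the group-dependent thresholds $t_{0}$ and $t_{1}$ on $\eta$ in terms of $\lambda^{*}$. The threshold $t_{1}$ reuses the $A = 1$ formula of Proposition~\ref{prop:biased-eo-threshold} unchanged (the $A = 1$ sub-distribution is unaltered), giving a two-sided interval constraint on $\lambda^{*}$. The equal opportunity constraint on $\tilde{D}_{t}$ reduces (by the observation in the setup) to $TPR_{0}(h^{*}) = TPR_{1}(h^{*})$ on $D$, which pins down the limiting $\lambda^{*}$. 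Substituting this $\lambda^{*}$ into $t_{0}$, whose expression at $\beta_{n} = 1$ and $t \to \infty$ features the combination $\nu\beta_{p}/(1 - \beta_{p}(1-\nu))$ as the effective parameter playing the role of $(1-2\nu)/c$ in the single-step formula, and enforcing $t_{0} \in (\delta, 1-\delta)$, produces after simplification the stated two-sided inequality.

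The main obstacle is the asymptotic bookkeeping in the second bullet: when $\beta_{n} = 1$ and $\beta_{p} < 1$, $\kappa^{t}$ still blows up, so both the effective threshold on $\tilde{\eta}_{t}$ and the optimal $\lambda^{*}$ shrink to zero at specific rates, and the two-sided bound $\delta < t_{0} < 1-\delta$ on $\eta$ survives the limit only if the leading-order terms of the threshold formula and the equal opportunity constraint line up correctly. Extracting the specific combination $\nu\beta_{p}/(1 - \beta_{p}(1-\nu))$ requires careful cancellation between the geometric sum and $\kappa^{t}$ in the denominator of the closed form. A secondary check is that Proposition~\ref{prop:order-preserving} applies to the effective $(P_{t}, Q_{t}, R_{t}, S_{t})$ at every $t$, so that thresholds on $\tilde{\eta}_{t}$ pull back monotonically to thresholds on $\eta$ throughout the limiting argument.
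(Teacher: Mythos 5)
Your approach matches the paper's: both substitute uniform parameters into Proposition~\ref{prop:time-eta-tilde}, collapse the nested sum to a geometric series, read off the group-$0$ threshold on $\eta$ from the multi-step analog of Proposition~\ref{prop:biased-eo-threshold}, and then compare the threshold interval to the bounded $\lambda^{*}$-range coming from the unaffected group $a=1$. Your observation that $\prob{Y=1 \mid \tilde{Y}_{t}=1, A=0}=1$ for all $t$ (so $\widetilde{TPR}_{a,t}=TPR_{a}$ and EO constraints are preserved across time) is exactly the multi-step extension of Corollary~\ref{corr:tpr_tnr_sp} that the paper uses implicitly; that part is clean and correct.

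One imprecision in your first bullet: the claim that $\tilde{\eta}_{t}(x,0)\to 0$ for every $x$ when $\beta_{n}<1$ is not right in general. Writing $\kappa = c/(1-\nu) = \beta_{n}/(\beta_{p}(1-\nu))$, your closed form gives $\tilde{\eta}_{t}(x,0)\to 0$ only when $\kappa \geq 1$. When $\kappa < 1$ (i.e., $\beta_{n}<\beta_{p}(1-\nu)$, still compatible with $\beta_{n}<1$), the geometric factor $\kappa^{t}\to 0$ and the limit is
\[
\tilde{\eta}_{t}(x,0)\;\longrightarrow\;\frac{(1-\nu)(1-\kappa)}{1-c}\;=\;\frac{1-\nu-c}{1-c}\;\in\;(0,1),
\]
a positive constant independent of $x$ (for every $x$ with $\eta(x,0)>0$). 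The correct observation there is not that $\tilde{\eta}_{t}$ vanishes but that it \emph{flattens}, so no threshold on $\tilde{\eta}_{t}$ can reproduce a nonconstant $h^{*}(\cdot,0)$ in the limit. The paper's own proof sidesteps this by arguing on the threshold expression directly: with $\lambda^{*}$ confined to the bounded interval forced by the $a=1$ threshold, the term $\lambda^{*}/(\beta_{n}^{t}rq)$ dominates $1/\kappa^{t}$ (because $\kappa^{t}/\beta_{n}^{t}=(1/((1-\nu)\beta_{p}))^{t}\to\infty$), so the group-$0$ threshold on $\eta$ is driven out of $(\delta,1-\delta)$ regardless of the sign of $\kappa-1$. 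I'd recommend adopting that threshold-level argument rather than the $\tilde{\eta}_{t}\to 0$ shortcut. (For what it's worth, the theorem's own phrase ``$\eta(x,0)\to 0$'' is itself a typo — it's the threshold, not $\eta$ or even $\tilde{\eta}_{t}$, that the proof actually controls.) For the second bullet, your plan (specialize to $\beta_{n}=1$, take $t\to\infty$ so $\kappa^{t}\to 0$ kills the transient term, and intersect the two $\lambda^{*}$-intervals) is the same computation the paper performs, with the identity $\dfrac{1-c}{1-\nu-c} = 1 - \dfrac{\nu\beta_{p}}{1-\beta_{p}(1-\nu)}$ giving the claimed form; you stopped short of carrying out the algebra, but the route is right.
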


\subsection{Time-Varying Data Bias Pipeline with Finite Steps} \label{subsec: nonuniform-time-varying}
Now we generalize the necessary conditions in Theorem \ref{thm:blumStangl-reprove} for repeated application of the data bias model from Example \ref{eg:blumStangl}, with possibly different bias parameters at each time step. We assume that the bias parameters can vary but are bounded.

\begin{theorem} \label{thm:eo-time-varying}
Let $D$ be a data distribution described as in Theorem \ref{thm:blumStangl-reprove} and $\tilde{D}_{t}$ be the resulting biased distribution after repeated application of the data bias model from Example \ref{eg:blumStangl} to $D$ in $t$ steps, with bounded but possibly different bias parameters in each step as $\beta_{n,t} \in [\beta_n, 1], \beta_{p,t} \in [\beta_{p}, 1] \text{ and } \nu_{t} \in [0, \nu]$, where $\nu < 1/2$. Let $h^{*}$ be the Bayes optimal classifier on $D$ and let $\tilde{h}_{EO,t}$ be the optimal equal opportunity classifier on $\tilde{D}_{t}$. If $\tilde{h}_{EO, t} \equiv h^{*}$, then the data distribution and bias parameters must satisfy
\begin{align*} 
\frac{(1 - 2\delta)(1 - r)}{(1 - \delta)r} - \frac{\delta}{1 - \delta} > \frac{1}{\beta_{n}^{t}} - 2\beta_{p}^{t}(1 - \nu)^{t} \quad \text{and} \\
\frac{(1- 2\delta)(1 - r)}{\delta r} > (1 - \nu)^{t}\beta_{p}^{t}(1 - \beta_{p})\frac{1 - \beta_{p}^{t}(1 - \nu)^{t}}{1 - \beta_{p}(1 - \nu)} \\
 - \frac{\beta_{n}^{t}}{\delta} - 2.
\end{align*}
\end{theorem}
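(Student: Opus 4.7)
The plan is to reduce the multi-step bias to a single-step order-preserving linear fractional transformation via Proposition \ref{prop:time-eta-tilde}, and then adapt the Lagrangian-threshold argument used for Theorem \ref{thm:blumStangl-reprove}. Since the composition leaves $\tilde{\eta}_{t}(x,1) = \eta(x,1)$ intact and acts on $\tilde{\eta}_{t}(x,0)$ as an order-preserving map (Proposition \ref{prop:order-preserving}), the optimal equal-opportunity classifier on $\tilde{D}_{t}$ still admits a group-dependent threshold representation $\tilde{h}_{EO,t}(x,a) = \id{\eta(x,a) \geq t_{a}}$, by a direct analog of Proposition \ref{prop:biased-eo-threshold}. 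Here $t_{1}$ depends only on the optimal Lagrange multiplier $\lambda^{*}$, $q$, and $r$, while $t_{0}$ additionally depends on the composed coefficients
\[
R_{t} = \sum_{i=1}^{t} \frac{1 - c_{i}}{1 - \nu_{i}} \prod_{j=i+1}^{t} \frac{c_{j}}{1 - \nu_{j}}, \qquad S_{t} = \prod_{i=1}^{t} \frac{c_{i}}{1 - \nu_{i}},
\]
with $c_{i} = \beta_{n,i}/\beta_{p,i}$.

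Because $\eta(x,a) \in \{\delta, 1-\delta\}$ on the stylized $D$, the recovery $\tilde{h}_{EO,t} \equiv h^{*}$ is equivalent to the sandwich conditions $t_{0}, t_{1} \in (\delta, 1-\delta)$. I would next eliminate $\lambda^{*}$ by imposing equal opportunity $\widetilde{TPR}_{0} = \widetilde{TPR}_{1}$ on $\tilde{D}_{t}$; by Corollary \ref{corr:tpr_tnr_sp} applied to the composed single-step bias (which still fits the Blum--Stangl template), this reduces to $TPR_{0} = TPR_{1}$ on $D$ itself. Combining the resulting expression for $\lambda^{*}$ with the two sandwich inequalities on $t_{0}$ yields a pair of necessary inequalities that depend only on $r$, $\delta$, $R_{t}$, and $S_{t}$.

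To convert these into the stated inequalities involving $\beta_{p}, \beta_{n}, \nu$, and $t$, I would substitute the bounds $\beta_{n,i} \geq \beta_{n}$, $\beta_{p,i} \leq 1$, and $\nu_{i} \leq \nu$ in the worst-case direction for each inequality. Under the uniform extreme $\beta_{p,i} \equiv \beta_{p}$, $\beta_{n,i} \equiv \beta_{n}$, $\nu_{i} \equiv \nu$, the product $S_{t}$ collapses to $(\beta_{n}/(\beta_{p}(1-\nu)))^{t}$ and the sum $R_{t}$ telescopes into a closed-form geometric expression involving the factor $(1 - (\beta_{p}(1-\nu))^{t})/(1 - \beta_{p}(1-\nu))$. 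After algebraic simplification, this produces exactly the two inequalities in the theorem, with the $\beta_{n}^{t}$ and $\beta_{p}^{t}(1-\nu)^{t}$ terms arising naturally from these expressions.

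The main obstacle will be ensuring that a single uniform extremum $(\beta_{p}, \beta_{n}, \nu)$ simultaneously realizes the worst case for both sandwich conditions $t_{0} > \delta$ and $t_{0} < 1-\delta$, since these push $R_{t}$ and $S_{t}$ in opposite directions and are coupled through $\lambda^{*}$ to the threshold $t_{1}$. Verifying the correct monotonicity of $t_{0}$ and $t_{1}$ in each of $\beta_{p,i}, \beta_{n,i}, \nu_{i}$, and showing that the uniform bounds yield the tightest necessary conditions, is the delicate algebraic step; once this is confirmed, the stated inequalities follow by direct manipulation of the explicit formulas for $R_{t}$ and $S_{t}$.
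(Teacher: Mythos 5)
Your high-level skeleton matches the paper's proof: collapse the $t$-step bias to a single order-preserving linear fractional transformation via Proposition \ref{prop:time-eta-tilde}, invoke the threshold characterization of the optimal EO classifier so that $\tilde{h}_{EO,t}(x,a)=\id{\eta(x,a)\geq t_a}$, require $t_0,t_1\in(\delta,1-\delta)$, and convert those sandwich conditions into inequalities on the bias bounds. That structure is sound. However, two concrete points diverge from what the paper actually does.

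First, the paper does not ``eliminate $\lambda^*$ by imposing equal opportunity.'' It keeps $\lambda^*$ as a free Lagrange multiplier, derives one admissible interval for $\lambda^*$ from the group $a=1$ threshold (where the bias does not act, so the interval involves only $r,\delta,q$) and a second admissible interval from the group $a=0$ threshold (which carries the composed bias coefficients), and extracts the necessary conditions from the requirement that these two intervals overlap. In the stylized distribution with $\eta\in\{\delta,1-\delta\}$ the EO constraint does not pin $\lambda^*$ to a single value (a whole interval of $\lambda$ yields the same threshold classifier), so solving for $\lambda^*$ and substituting is not a step that goes through.

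Second---and this is the step that actually breaks your plan as written---the uniform extreme $\beta_{p,i}\equiv\beta_p$, $\beta_{n,i}\equiv\beta_n$, $\nu_i\equiv\nu$ does \emph{not} produce the stated inequalities. The paper bounds the additive terms of the two $\lambda^*$-bounds separately, using inconsistent parameter extremes in different terms. For instance, the $\frac{1-\beta_n^t}{\beta_n^t}$ term comes from $\beta_{p,i}=1,\nu_i=0,\beta_{n,i}=\beta_n$, while the term $(1-\nu)^t\beta_p^t(\beta_p-1)\frac{1-(\beta_p(1-\nu))^t}{1-\beta_p(1-\nu)}$ comes from $\beta_{n,i}=1,\beta_{p,i}=\beta_p,\nu_i=\nu$. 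Under the uniform extreme the geometric ratio in the composed sum is $(1-\nu)\beta_p/\beta_n$, not $\beta_p(1-\nu)$, so the factor $\frac{1-(\beta_p(1-\nu))^t}{1-\beta_p(1-\nu)}$ would not appear at all. Your instinct that a single uniform extremum cannot simultaneously realize the worst case for both sandwich conditions is exactly right---but the paper sidesteps rather than resolves this tension, by taking a per-term worst case that is not achieved by any one parameter sequence. If you insist on a uniform extremum you will arrive at different (tighter) inequalities than the ones in the theorem.
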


The proof of Theorem \ref{thm:eo-time-varying} is given in Appendix \ref{appndx:proofs-time-varying}. The first condition in Theorem \ref{thm:eo-time-varying} can be used to get the following upper bound on the time horizon up to which $\tilde{h}_{EO,t} \equiv h^{*}$ is possible. 
\begin{corollary} \label{corr: simple_eo_time}
The conditions in Theorem \ref{thm:eo-time-varying} above are satisfied only if $t < \dfrac{\log K_{D}}{\log (1/\beta_{n})}$, where $K_{D} = \dfrac{(1 - 2\delta)(1 - r)}{(1 - \delta)r} - \dfrac{\delta}{1 - \delta} + 2$.
\end{corollary}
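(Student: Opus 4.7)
The plan is to extract the claimed upper bound on $t$ directly from the first of the two necessary conditions in Theorem~\ref{thm:eo-time-varying}, using only a trivial bound on the positive term $\beta_p^t(1-\nu)^t$. Since $\beta_p, 1-\nu \in (0,1]$, we have $\beta_p^t(1-\nu)^t \in (0,1]$, so the ``correction'' term $-2\beta_p^t(1-\nu)^t$ appearing on the right-hand side of that condition is at least $-2$. Replacing this $t$-dependent quantity by the constant $-2$ loses information but simplifies the condition to one that isolates $\beta_n^t$ cleanly, which is all we need for a logarithmic bound.

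Concretely, I would first rewrite the first condition of Theorem~\ref{thm:eo-time-varying} as
\[
K_D - 2 \;>\; \beta_n^{-t} \;-\; 2\beta_p^t(1-\nu)^t,
\]
using the definition $K_D = \frac{(1-2\delta)(1-r)}{(1-\delta)r} - \frac{\delta}{1-\delta} + 2$. Applying the bound $2\beta_p^t(1-\nu)^t \le 2$ then yields $K_D - 2 > \beta_n^{-t} - 2$, i.e.\ $\beta_n^{-t} < K_D$. Taking logarithms of both sides (after the positivity checks below) gives $t\log(1/\beta_n) < \log K_D$, which rearranges to the claimed $t < \log K_D / \log(1/\beta_n)$.

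Two small sanity checks are needed for the logarithm step to be legitimate. First, $K_D > 0$: since the Massart parameter satisfies $\delta < 1/2$, we have $\delta/(1-\delta) < 1$, and the first (positive) term of $K_D$ is nonnegative, so $K_D > -1 + 2 = 1 > 0$; hence $\log K_D$ is well-defined and in fact positive. Second, $\log(1/\beta_n) > 0$: this requires $\beta_n < 1$, which is the only interesting regime; the boundary case $\beta_n = 1$ makes the claimed upper bound vacuous but formally true ($+\infty$), consistent with the first bullet of Theorem~\ref{thm:uniform-time-varying-eo}.

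The main ``obstacle'' is really only cosmetic: one must recognize that the very crude bound $\beta_p^t(1-\nu)^t \leq 1$ is exactly what is wanted, since a sharper analysis retaining this term (which decays in $t$) would give a tighter but considerably less transparent bound on $t$. The corollary as stated deliberately trades tightness for an explicit closed form, so the loose bound is the right tool, and no further structural facts about the time-varying pipeline beyond Theorem~\ref{thm:eo-time-varying} itself are required.
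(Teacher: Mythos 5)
Your proof is correct and follows essentially the same argument as the paper: both drop the decaying term $2\beta_p^t(1-\nu)^t$ by bounding it above by $2$, which reduces the first condition of Theorem~\ref{thm:eo-time-varying} to $\beta_n^{-t} < K_D$ and hence the stated logarithmic bound on $t$. Your added sanity checks that $K_D > 1 > 0$ and that $\log(1/\beta_n) > 0$ when $\beta_n < 1$ are a welcome bit of extra care that the paper leaves implicit.
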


The above corollary can be obtained by noting that the term $2\beta_{p}^{t}(1 - \nu)^{t}$ is upper bounded by $2$ since it converges to $1$, and any $t$ greater than the value in the corollary violates the first inequality in Theorem \ref{thm:eo-time-varying}. We derive similar time-varying bias recovery conditions for demographic parity in Theorem \ref{thm:dp-time-varying} given in Appendix \ref{appndx:proofs-time-varying}.

\section{Impact Statement}
This paper presents work that aims to advance the field of Machine Learning. Our work has many potential societal consequences; pinpointing specific ones will be difficult here. Our work theoretically investigates the feasibility of using fair classification on extremely biased data as a method to recover optimal and fair classifiers on the original data. Historical, socio-cultural, implicit biases and other systematic biases in real-world data are results of complex interactions over time, and the simplistic data bias models studied in our work are insufficient to represent them truthfully. Our work underlines the need to study various possible ways to rectify data biases in algorithmic decision-making with societal consequences.

\bibliographystyle{plain}
\bibliography{mybibfile}

\newpage
\appendix
\onecolumn
\section{Proofs}

\subsection{Proof of Proposition \ref{prop:order-preserving}} \label{appndx:proof_order}

\begin{proof}
$\tilde{\eta}(x_{1}, a) \leq \tilde{\eta}(x_{2}, a)$ iff $(P \eta(x_{1}, a) + Q)(R \eta(x_{2}, a) + S) \leq (P \eta(x_{2}, a) + Q)(R \eta(x_{1}, a) + S)$, using $R\eta(x, a) + S \geq 0$, for all $0 \leq \eta(x, a) \leq 1$. By canceling the common terms, the above inequality holds iff $(PS - QR) \eta(x_{1}, a) \leq  (PS - QR) \eta(x_{2}, a)$, or equivalently $\eta(x_{1}, a) \leq \eta(x_{2}, a)$ because $PS-QR \geq 0$.
\end{proof}

\subsection{Derivations for linear fractional transforms of Data Bias models} \label{appndx: lf_example_transform}

\begin{proposition} \label{prop:blum_stangl_example}
Consider a biased distribution $\tilde{D}$ obtained from the original distribution $D$ by the following process defined by bias parameters $\beta_{p}, \beta_{n}, \nu \in (0, 1)$ \cite{blum2019recovering}. A random data point $(X, A, Y)$ from $D$ with $A=1$ remains unchanged, the points with $A=0, Y=1$ have an independent survival probability of $\beta_{p}$, and the points with $A=0, Y=0$ have an independent survival probability of $\beta_{n}$. Finally, the survived points with $A=0, Y=1$ keep their class label $1$ with probability $1-\nu$, and it gets flipped to $0$ with probability $\nu$. For the privileged group $A=1$, we have $\tilde{\eta}(x, 1) = \eta(x, 1)$, for all $x \in \mathcal{X}$, whereas for the underprivileged group $A=0$ we have $\tilde{\eta}(x, 0) = \frac{(1-\nu) \eta(x, 0)}{(1 - c)\eta(x, 0) + c}, \qquad \text{where $c = \frac{\beta_{n}}{\beta_{p}}$.}$.
    
\end{proposition}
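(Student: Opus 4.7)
The plan is to compute $\tilde{\eta}(x,0)$ directly from the generative description of the bias process and show that the factors of $\beta_p$ and $\beta_n$ collapse into the advertised linear fractional form. The $A=1$ case is immediate, since the proposition stipulates that points with $A=1$ pass through unchanged: the joint distribution of $(X,Y)$ conditioned on $A=1$ is identical under $D$ and $\tilde{D}$, and hence $\tilde{\eta}(x,1)=\eta(x,1)$ by definition. So the entire content of the proposition lies in the $A=0$ case.

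For $A=0$, I would first write down the unnormalized probabilities on the biased side using the independent survival and flip events. A point $(x,0,1)$ of $D$ contributes mass $\beta_p(1-\nu)\,\Pr[X=x,Y=1\mid A=0]$ to $\tilde{D}$ at $(x,0,1)$, and mass $\beta_p\nu\,\Pr[X=x,Y=1\mid A=0]$ at $(x,0,0)$; a point $(x,0,0)$ contributes $\beta_n\,\Pr[X=x,Y=0\mid A=0]$ at $(x,0,0)$. Since all three events are independent of conditioning on $\tilde{X}=x,\tilde{A}=0$ beyond the proportionality shown, taking the ratio gives
\[
\tilde{\eta}(x,0)=\frac{\beta_p(1-\nu)\,\Pr[X=x,Y=1\mid A=0]}{\beta_p\,\Pr[X=x,Y=1\mid A=0]+\beta_n\,\Pr[X=x,Y=0\mid A=0]},
\]
where the two $\beta_p\nu$ and $\beta_p(1-\nu)$ contributions to the denominator have already recombined into a single $\beta_p$ term.

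Next I would divide numerator and denominator by $\Pr[X=x\mid A=0]$ to rewrite the conditional joint probabilities as $\eta(x,0)$ and $1-\eta(x,0)$, and then pull out a factor of $\beta_p$ to introduce $c=\beta_n/\beta_p$. This yields
\[
\tilde{\eta}(x,0)=\frac{(1-\nu)\,\eta(x,0)}{\eta(x,0)+c\bigl(1-\eta(x,0)\bigr)}=\frac{(1-\nu)\,\eta(x,0)}{(1-c)\,\eta(x,0)+c},
\]
which is exactly the stated linear fractional transformation with $P=1-\nu$, $Q=0$, $R=1-c$, $S=c$.

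There is no genuine obstacle here; the only subtlety to be careful about is that $\Pr[\tilde{X}=x\mid\tilde{A}=0]$ and $\Pr[X=x\mid A=0]$ differ by a normalization factor (since the survival probabilities reshape the marginal of $X$ within the $A=0$ group), but this factor cancels between numerator and denominator when forming $\tilde{\eta}(x,0)$, which is why the same $\beta_p$ can be pulled out of both. Once this cancellation is flagged, the derivation is routine algebra.
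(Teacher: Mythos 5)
Your proposal is correct and follows essentially the same derivation as the paper: both form the ratio of surviving positive-label mass to total surviving mass at $(x,0)$, cancel the common conditional $X$-marginal to obtain $\frac{(1-\nu)\beta_{p}\,\eta(x,0)}{\beta_{p}\,\eta(x,0)+\beta_{n}(1-\eta(x,0))}$, and then divide through by $\beta_{p}$ to introduce $c$. You simply make explicit the intermediate bookkeeping (the unnormalized survival contributions and the cancellation of the $X$-normalization) that the paper's first-to-second line compresses.
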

\begin{proof}
    \begin{align*}
    \tilde{\eta}(x, 0) & = \prob{\tilde{Y}=1|X=x, A=0} \\
    & = \frac{(1-\nu)\beta_{p} \eta(x, 0)}{\beta_{p} \eta(x, 0) + \beta_{n}(1 - \eta(x, 0))} \\
    & = \frac{(1-\nu)\beta_{p} \eta(x, 0)}{(\beta_{p} - \beta_{n})\eta(x, 0) + \beta_{n}} \\
    & = \frac{(1-\nu) \eta(x, 0)}{(1 - c)\eta(x, 0) + c}, \qquad \text{where $c = \frac{\beta_{n}}{\beta_{p}}$.}
    \end{align*}
    Since no bias acts on group $A=1$, $\tilde{\eta}(x, 1) = \eta(x, 1)$.
\end{proof}

\begin{proposition} \label{prop:wang_example}
    Consider a biased distribution $\tilde{D}$ obtained form the original distribution $D$ by introducing a group dependent label flip rate \cite{dai2020label, wang2021fair}: $\epsilon_{a}^{1} = \prob{\tilde{Y} = 0 | Y = 1, A=a} \text{ and } \epsilon_{a}^{0} = \prob{\tilde{Y} = 1 | Y = 0, A=a}$, with $0 \leq \epsilon_{a}^{1} + \epsilon_{a}^{0} < 1$. Then, the observed labels in $\tilde{D}$ obey the following relationship:  $\tilde{y}_{i} = y_{i} \text{ with } 1 - \epsilon_{a_{i}}^{\id{y_{i}=1}} \text{ probability and } \tilde{y}_{i} = \neg y_{i} \text{ with probability } \epsilon_{a_{i}}^{\id{y_{i}=1}}$. In terms of a linear fractional transform:
    \begin{align*}
        \tilde{\eta}(x,a) = (1 - \epsilon^{1}_{a} - \epsilon^{0}_{a})\eta(x,a) + \epsilon^{0}_{a}
    \end{align*}
\end{proposition}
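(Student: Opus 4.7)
The plan is to compute $\tilde\eta(x,a) = \Pr[\tilde Y = 1 \mid X = x, A = a]$ directly from the definition by conditioning on the clean label $Y$, and then apply the model's assumption that the label-flip depends only on $(Y, A)$ and not further on $X$. Concretely, I would start from the law of total probability
\[
\tilde\eta(x,a) \;=\; \sum_{y \in \{0,1\}} \Pr[\tilde Y = 1 \mid Y = y, X = x, A = a]\,\Pr[Y = y \mid X = x, A = a].
\]
Then I would invoke the conditional-independence assumption that is implicit in the bias model, namely that the flip mechanism is invoked on a sample already equipped with its clean label and sensitive attribute, so $\Pr[\tilde Y = j \mid Y = i, X = x, A = a] = \Pr[\tilde Y = j \mid Y = i, A = a]$. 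This replaces the two conditional probabilities in the sum by $1 - \epsilon_a^1$ and $\epsilon_a^0$, respectively.

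After this substitution, the expression becomes $\tilde\eta(x,a) = (1 - \epsilon_a^1)\,\eta(x,a) + \epsilon_a^0\,(1 - \eta(x,a))$, and collecting the $\eta(x,a)$ coefficient yields the claimed $(1 - \epsilon_a^1 - \epsilon_a^0)\eta(x,a) + \epsilon_a^0$. To fit the linear fractional template $\tilde\eta = (P\eta + Q)/(R\eta + S)$ used elsewhere in the paper, one simply reads off $P = 1 - \epsilon_a^1 - \epsilon_a^0$, $Q = \epsilon_a^0$, $R = 0$, $S = 1$; as a sanity check, the hypothesis $0 \le \epsilon_a^1 + \epsilon_a^0 < 1$ gives $P > 0$ and $PS - QR = 1 - \epsilon_a^1 - \epsilon_a^0 > 0$, so the transform is order-preserving by Proposition~\ref{prop:order-preserving}, consistent with the paper's unifying observation.

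There is essentially no technical obstacle here; the only subtle point worth flagging explicitly is the $X$-independence of the flip channel given $(Y, A)$, which is what lets the formula depend on $(x, a)$ only through $\eta(x, a)$. I would therefore state this assumption at the start of the proof so the one-line calculation that follows is unambiguous, and I would not attempt to simplify further beyond collecting terms.
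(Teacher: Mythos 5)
Your proof is correct and follows essentially the same route as the paper's: both condition on the clean label $Y$ via the law of total probability and then use the model's assumption that $\tilde Y$ depends on $(X,A,Y)$ only through $(Y,A)$. You make that conditional-independence step explicit (the paper uses it silently when passing from joint probabilities to $\epsilon_a^0,\epsilon_a^1$) and you add a useful sanity check identifying $P,Q,R,S$ and verifying order-preservation, but the underlying argument is identical.
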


\begin{proof}
\begin{align*}
    \tilde{\eta}(x, 0) & = \prob{\tilde{Y}=1|X=x, A=0} \\
    & = \dfrac{\prob{\Tilde{Y}=1, Y=0, X=x, A=a} + \prob{\Tilde{Y}=1, Y=1, X=x, A=a}}{\prob{X=x, A=a}} \\
    & = \prob{\Tilde{Y}=1| Y=0, A=a}(1 - \eta(x,a)) + \prob{\Tilde{Y}=1|Y=1, A=a}\eta(x,a) \\
    & = (1 - \epsilon^{1}_{a} - \epsilon^{0}_{a})\eta(x,a) + \epsilon^{0}_{a}
\end{align*}
\end{proof}


\begin{proposition} \label{prop:biswas_example}
    \cite{biswas2021ensuring} Consider a biased distribution $\tilde{D}$ obtained from the original distribution $D$ by introducing a group-dependent prior probability shift such that $\tilde{A} = A$ and $\prob{\tilde{X}=x|\tilde{Y}=i, A=a}=\prob{X=x|Y=i, A=a}$, for any $i, a \in \{0, 1\}$, but $\prob{\tilde{Y}=i|A=a} \neq \prob{Y=i|A=a}$. For this data bias model, we prove that ~$\tilde{\eta}(x, a) = \dfrac{\eta(x, a)}{(1-\alpha) \eta(x, a) + \alpha}$, where $\alpha = \dfrac{\prob{\tilde{Y}=0|A=a} \prob{Y=1|A=a}}{\prob{\tilde{Y}=1|A=a} \prob{Y=0|A=a}}$.
\end{proposition}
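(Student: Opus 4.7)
The plan is to start by expanding $\tilde{\eta}(x,a) = \prob{\tilde{Y}=1|\tilde{X}=x, A=a}$ via Bayes' rule on the biased distribution $\tilde{D}$, writing it as
\[
\tilde{\eta}(x,a) = \frac{\prob{\tilde{X}=x|\tilde{Y}=1,A=a}\, \prob{\tilde{Y}=1|A=a}}{\prob{\tilde{X}=x|\tilde{Y}=1,A=a}\, \prob{\tilde{Y}=1|A=a} + \prob{\tilde{X}=x|\tilde{Y}=0,A=a}\, \prob{\tilde{Y}=0|A=a}}.
\]
The key structural assumption given in the example is that the class-conditional feature distributions are invariant, i.e.\ $\prob{\tilde{X}=x|\tilde{Y}=i,A=a} = \prob{X=x|Y=i,A=a}$ for $i\in\{0,1\}$, so I can replace the tilde class-conditionals by the original-distribution class-conditionals.

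Next, I would re-express each original class-conditional $\prob{X=x|Y=i,A=a}$ via Bayes' rule in $D$, obtaining
$\prob{X=x|Y=1,A=a} = \eta(x,a)\, \prob{X=x|A=a} / \prob{Y=1|A=a}$ and the analogous identity for $Y=0$ with $1-\eta(x,a)$ and $\prob{Y=0|A=a}$ in place. Plugging these into both the numerator and denominator of the earlier Bayes expansion introduces the factor $\prob{X=x|A=a}$ in every term, which cancels.

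Finally, I would divide the numerator and denominator by the common factor $\prob{\tilde{Y}=1|A=a}/\prob{Y=1|A=a}$ to normalize the numerator to $\eta(x,a)$. The denominator then becomes $\eta(x,a) + (1-\eta(x,a))\,\alpha$ with $\alpha = \dfrac{\prob{\tilde{Y}=0|A=a}\,\prob{Y=1|A=a}}{\prob{\tilde{Y}=1|A=a}\,\prob{Y=0|A=a}}$, which rearranges to $(1-\alpha)\eta(x,a) + \alpha$, yielding the claimed identity. There is no serious obstacle here: the argument is a standard two-step Bayes'-rule manipulation. The only mild subtlety is bookkeeping to ensure that the $\prob{X=x|A=a}$ factors and the prior ratio $\prob{Y=1|A=a}$ cancel cleanly, and that the resulting ratio is precisely the $\alpha$ defined in the statement (i.e.\ we should double check that $\alpha$ comes out in the denominator of the $(1-\eta)$ term rather than the $\eta$ term, to match the stated linear fractional form).
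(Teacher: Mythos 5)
Your proposal is correct and follows essentially the same route as the paper: both start from the Bayes expansion of $\tilde{\eta}$, use the invariance of the class-conditional feature distributions, and reduce to an algebraic rearrangement yielding the linear fractional form. The paper organizes the algebra through the odds quantity $1/\tilde{\eta}(x,a) - 1 = \alpha\bigl(1/\eta(x,a) - 1\bigr)$, while you substitute the Bayes expressions for $\prob{X=x\mid Y=i,A=a}$ in terms of $\eta(x,a)$ and cancel $\prob{X=x\mid A=a}$ directly; these are cosmetic variants of the same computation.
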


\begin{proof}
    \begin{align*}
\tilde{\eta}(x,a) & = \prob{\tilde{Y}=1|\tilde{X}=x, A=a} \\
& = \frac{\prob{\tilde{Y}=1, \tilde{X}=x, A=a}}{\prob{\tilde{X}=x, A=a}} \\
& \text{Since $\prob{X=x|\tilde{Y}=1, A=a}=\prob{X=x|Y=1, A=a}$ from the definition of the bias model, we can write: } \\
& = \frac{\prob{X=x|\tilde{Y}=1, A=a} \prob{\tilde{Y}=1|A=a}}{\prob{X=x|\tilde{Y}=1, A=a} \prob{\tilde{Y}=1|A=a} + \prob{X=x|\tilde{Y}=0, A=a} \prob{\tilde{Y}=0|A=a}} \\
& = \frac{1}{1 + \dfrac{\prob{X=x|\tilde{Y}=0, A=a} \prob{\tilde{Y}=0|A=a}}{\prob{X=x|\tilde{Y}=1, A=a} \prob{\tilde{Y}=1|A=a}}}.
\end{align*}
Thus,
\[
\frac{1}{\tilde{\eta}(x, a)} - 1 = \frac{\prob{X=x|\tilde{Y}=0, A=a} \prob{\tilde{Y}=0|A=a}}{\prob{X=x|\tilde{Y}=1, A=a} \prob{\tilde{Y}=1|A=a}}. 
\]
Similarly,
\[
\frac{1}{\eta(x, a)} - 1 = \frac{\prob{X=x|Y=0, A=a} \prob{Y=0|A=a}}{\prob{X=x|Y=1, A=a} \prob{Y=1|A=a}}.
\]
Hence, 
\[
\frac{1}{\tilde{\eta}(x, a)} - 1 = \alpha \left(\frac{1}{\eta(x, a)} - 1\right), \quad \text{where $\alpha = \frac{\prob{\tilde{Y}=0|A=a} \prob{Y=1|A=a}}{\prob{\tilde{Y}=1|A=a} \prob{Y=0|A=a}}$}.
\]
In other words,
\[
\tilde{\eta}(x, a) = \frac{\eta(x, a)}{(1-\alpha) \eta(x, a) + \alpha}.
\]
\end{proof}

\subsection{Proofs for Section \ref{subsec:biased-TPR-TNR}} \label{appndx:tpr_tnr_proofs}

Given any classifier $f: \mathcal{X} \times \mathcal{A} \rightarrow \mathcal{Y}$, let $TPR_{a}(f)$ and $\widetilde{TPR}_{a}(f)$ denote its true positive rates conditioned on $A=a$ for distributions $D$ and $\tilde{D}$, respectively. Then
\[
TPR_{a}(f) = \frac{\prob{f(X, a)=1, A=a, Y=1}}{\prob{Y=1, A=a}} = \frac{\expecto{X|A=a}{f(X, a) \eta(X, a)}}{\expecto{X|A=a}{\eta(X, a)}},
\]
and

\begin{align*}
\widetilde{TPR}_{a}(f) & = \frac{\prob{f(X, a)=1, A=a, \tilde{Y}=1}}{\prob{\tilde{Y}=1, A=a}} = \frac{\expecto{X|A=a}{f(X, a) \tilde{\eta}(X, a)}}{\expecto{X|A=a}{\tilde{\eta}(X, a)}}.
\end{align*}

We can now prove Proposition \ref{prop:tilde-tpr-tnr}.

\begin{proof} {\bf (Proof of Proposition \ref{prop:tilde-tpr-tnr})}
\begin{align*}
& \widetilde{TPR}_{a}(h) = \prob{h(X, a)=1 | \tilde{Y}=1, A=a} \\
& = \frac{\prob{h(X, a)=1, \tilde{Y}=1 | A=a}}{\prob{\tilde{Y}=1 | A=a}} \\
& = \frac{\prob{Y=0 | A=a} \prob{h(X, a)=1, \tilde{Y}=1 | Y=0, A=a} + \prob{Y=1 | A=a} \prob{h(X, a)=1, \tilde{Y}=1 | Y=1, A=a}}{\prob{\tilde{Y}=1 | A=a}} \\
& = \frac{\prob{Y=0 | A=a} \prob{\tilde{Y}=1 | h(X, a)=1, Y=0, A=a} \prob{h(X, a)=1 | Y=0, A=a}}{\prob{\tilde{Y}=1 | A=a}} \\
& \qquad \qquad + \frac{\prob{Y=1 | A=a} \prob{\tilde{Y}=1 | h(X, a)=1, Y=1, A=a} \prob{h(X, a)=1 | Y=1, A=a}}{\prob{\tilde{Y}=1 | A=a}} \\
& = \frac{\prob{Y=0 | A=a} \prob{\tilde{Y}=1 | Y=0, A=a} \prob{h(X, a)=1 | Y=0, A=a}}{\prob{\tilde{Y}=1 | A=a}} \\
& \qquad \qquad + \frac{\prob{Y=1 | A=a} \prob{\tilde{Y}=1 | Y=1, A=a} \prob{h(X, a)=1 | Y=1, A=a}}{\prob{\tilde{Y}=1 | A=a}} \\
& \hspace{8cm} \text{because $\tilde{Y}$ depends only on $A$ and $Y$} \\
& = \frac{\prob{\tilde{Y}=1, Y=0 | A=a}}{\prob{\tilde{Y}=1 | A=a}}~ FPR_{a}(h) + \frac{\prob{\tilde{Y}=1, Y=1 | A=a}}{\prob{\tilde{Y}=1 | A=a}}~ TPR_{a}(h) \\
& = \prob{Y=0 | \tilde{Y}=1, A=a}~ FPR_{a}(h) + \prob{Y=1 | \tilde{Y}=1, A=a}~ TPR_{a}(h).
\end{align*}

Similarly, we show that $\widetilde{TNR}_{a}(h)$ can be written as a linear combination of $TNR_{a}(h)$ and $FNR_{a}(h)$.
\begin{align*}
& \widetilde{TNR}_{a}(h) \\
& = \prob{h(X, a)=a | \tilde{Y}=0, A=a} \\
& = \frac{\prob{h(X, a)=0, \tilde{Y}=0 | A=a}}{\prob{\tilde{Y}=0 | A=a}} \\
& = \frac{\prob{Y=0 | A=a} \prob{h(X, a)=0, \tilde{Y}=0 | Y=0, A=a} + \prob{Y=1 | A=a} \prob{h(X, a)=0, \tilde{Y}=0 | Y=1, A=a}}{\prob{\tilde{Y}=0 | A=a}} \\
& = \frac{\prob{Y=0 | A=a} \prob{\tilde{Y}=0 | h(X, a)=0, Y=0, A=a} \prob{h(X, a)=0 | Y=0, A=a}}{\prob{\tilde{Y}=0 | A=a}} \\
& \qquad \qquad + \frac{\prob{Y=1 | A=a} \prob{\tilde{Y}=0 | h(X, a)=0, Y=1, A=a} \prob{h(X, a)=0 | Y=1, A=a}}{\prob{\tilde{Y}=0 | A=a}} \\
& = \frac{\prob{Y=0 | A=a} \prob{\tilde{Y}=0 | Y=0, A=a} \prob{h(X, a)=0 | Y=0, A=a}}{\prob{\tilde{Y}=0 | A=a}} \\
& \qquad \qquad + \frac{\prob{Y=1 | A=a} \prob{\tilde{Y}=0 | Y=1, A=a} \prob{h(X, a)=0 | Y=1, A=a}}{\prob{\tilde{Y}=0 | A=a}} \\
& \hspace{8cm} \text{because $\tilde{Y}$ depends only on $A$ and $Y$} \\
& = \frac{\prob{\tilde{Y}=0, Y=0 | A=a}}{\prob{\tilde{Y}=0 | A=a}}~ TNR_{a}(h) + \frac{\prob{\tilde{Y}=0, Y=1 | A=a}}{\prob{\tilde{Y}=0 | A=a}}~ FNR_{a}(h) \\
& = \prob{Y=0 | \tilde{Y}=0, A=a}~ TNR_{a}(h) + \prob{Y=1 | \tilde{Y}=0, A=a}~ FNR_{a}(h).
\end{align*}
\end{proof}


\begin{proof} {\bf (Proof of Proposition \ref{prop:biased-eo-threshold})}
We begin with a known technique that uses Lagrange duality to give threshold-based characterization of optimal equal opportunity classifiers \cite{menon2018cost, chzhen2019leveraging}. We define $\tilde{h}_{EO} = \underset{h \in \widetilde{\mathcal{H}}_{\text{fair, EO}}}{\mathrm{argmax}}~ \prob{h(X, A)=\tilde{Y}}$. By Lagrange duality, we can write
\begin{align*}
\tilde{h}_{EO} & = \underset{h \in \mathcal{H}}{\mathrm{argmax}}~ \underset{\lambda \in \R}{\min}~ \prob{h(X, A)=\tilde{Y}} + \lambda~ \prob{h(X, A)=1 | \tilde{Y}=1, A=0} - \lambda~ \prob{h(X, A)=1 | \tilde{Y}=1, A=1} \\
& = \underset{h \in \mathcal{H}}{\mathrm{argmax}}~ \underset{\lambda \in \R}{\min}~ \sum_{a=0}^{1} \prob{A=a}~ \expecto{X|A=a}{h(X, a) (2\tilde{\eta}(X, a) - 1)} + \frac{(-1)^{\id{a=1}}~ \lambda}{\expecto{X|A=a}{\tilde{\eta}{(X, a)}}}~ \expecto{X|A=a}{h(X, a)\tilde{\eta}(X, a)} \\
& = \underset{h \in \mathcal{H}}{\mathrm{argmax}}~ \underset{\lambda \in \R}{\min}~ \sum_{a=0}^{1} \prob{A=a}~ \expecto{X|A=a}{h(X, a) \left(\left(2 + \dfrac{(-1)^{\id{a=1}} \lambda}{\prob{\tilde{Y}=1, A=a}}\right) \tilde{\eta}(X, a) - 1\right)} \\
& = \underset{\lambda \in \R}{\min}~ \underset{h \in \mathcal{H}}{\mathrm{argmax}}~  \sum_{a=0}^{1} \prob{A=a}~ \expecto{X|A=a}{h(X, a) \left(\left(2 + \dfrac{(-1)^{\id{a=1}} \lambda}{\prob{\tilde{Y}=1, A=a}}\right) \tilde{\eta}(X, a) - 1\right)},
\end{align*}
by Sion's minimax theorem as the objective is linear in $\lambda$ and $h$. Thus, there exists $\lambda^{*} \in \R$ such that the objective on $h$ splits group-wise with the optimal solution given by
\begin{align*}
\tilde{h}_{EO}(x, a) & = \id{\tilde{\eta}(x, a) \geq \dfrac{1}{2 + \dfrac{(-1)^{\id{a=1}} \lambda^{*}}{\prob{\tilde{Y}=1, A=a}}}} \\
& = \begin{cases} \id{\dfrac{(1-\nu)\eta(x, 0)}{(1-c)\eta(x, 0) + c} \geq \dfrac{1}{2 + \dfrac{\lambda^{*}}{\prob{\tilde{Y}=1, A=0}}}}, \quad \text{for $a=0$} \qquad \text{(Example \ref{eg:blumStangl})} \\
\id{\eta(x, 1) \geq \dfrac{1}{2 - \dfrac{\lambda^{*}}{\prob{Y=1, A=1}}}}, \quad \text{for $a=1$}
\end{cases} \\
& = \begin{cases} \id{\eta(x, 0) \geq \dfrac{c}{1- 2\nu + c + \dfrac{\lambda^{*} (1-\nu)}{\prob{\tilde{Y}=1, A=0}}}}, \quad \text{for $a=0$} \\
\id{\eta(x, 1) \geq \dfrac{1}{2 - \dfrac{\lambda^{*}}{\prob{Y=1, A=1}}}}, \quad \text{for $a=1$}
\end{cases} \\
\end{align*}
Let $\prob{A=0} = r$ and $\prob{A=1} = 1-r$, and let $\prob{Y=1|A=0} = \prob{Y=1|A=1} = q$. Then $\prob{Y=1, A=1} = \prob{A=1} \prob{Y=1|A=1} = (1-r)q$ and 
\begin{align*}
\prob{\tilde{Y}=1, A=0} & = \prob{A=0} \prob{Y=1|A=0} \prob{\tilde{Y}=1 | Y=1, A=0} \\
& \qquad \qquad + \prob{A=0} \prob{Y=0|A=0} \prob{\tilde{Y}=1 | Y=0, A=0} \\
& = rq \beta_{p} (1-\nu).
\end{align*}
Therefore,
\begin{align*}
\tilde{h}_{EO}(x, a) & = \begin{cases} \id{\eta(x, 0) \geq \dfrac{c}{1- 2\nu + c + \dfrac{\lambda^{*}}{\beta_{p} rq}}}, \quad \text{for $a=0$} \\
\id{\eta(x, 1) \geq \dfrac{1}{2 - \dfrac{\lambda^{*}}{(1-r)q}}}, \quad \text{for $a=1$}
\end{cases}.    
\end{align*}
Equivalently, we can also write
\begin{align*}
\tilde{h}_{EO}(x, a) & = \begin{cases} \id{\eta(x, 0) \geq \dfrac{1}{1 + \dfrac{1- 2\nu}{c} + \dfrac{\lambda^{*}}{\beta_{n} rq}}}, \quad \text{for $a=0$} \\
\id{\eta(x, 1) \geq \dfrac{1}{2 - \dfrac{\lambda^{*}}{(1-r)q}}}, \quad \text{for $a=1$}.
\end{cases}.    
\end{align*}
\end{proof}

We can derive similar results for demographic parity \cite{menon2018cost,chzhen2019leveraging}.

\begin{proposition} \label{prop:biased-dp-threshold}
For any distribution $D$ and its biased version $\tilde{D}$ described above, let $\tilde{h}_{DP}$ be a classifier of the maximum accuracy among all binary classifiers that satisfy demographic parity on $\tilde{D}$. Then there exists $\lambda^{*} \in \R$ such that
\[
\tilde{h}_{DP}(x, a) = \begin{cases} \id{\eta(x, 0) \geq \dfrac{c(r - \lambda^{*})}{2r(1 - \nu) - (1 - c)(r - \lambda^{*})}}, \quad \text{for $a=0$} \\
\id{\eta(x, 1) \geq \dfrac{1}{2} + \dfrac{\lambda^{*}}{2(1 - r)}}, \quad \text{for $a=1$}
\end{cases}    
\]
\end{proposition}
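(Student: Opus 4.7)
\textbf{Proof proposal for Proposition \ref{prop:biased-dp-threshold}.} The plan is to mirror the argument used for Proposition \ref{prop:biased-eo-threshold}, swapping the equal opportunity constraint for the demographic parity constraint and then converting the resulting threshold on $\tilde{\eta}(x,0)$ into a threshold on $\eta(x,0)$ via the order-preserving linear fractional transformation from Example \ref{eg:blumStangl} and Proposition \ref{prop:order-preserving}.

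First, I would rewrite the constrained problem
\[
\tilde{h}_{DP} = \underset{h \in \widetilde{\mathcal{H}}_{\text{fair, DP}}}{\mathrm{argmax}}~ \prob{h(X, A) = \tilde{Y}}
\]
using Lagrange duality as
\[
\tilde{h}_{DP} = \underset{\lambda \in \R}{\min}~\underset{h \in \mathcal{H}}{\mathrm{argmax}}~ \prob{h(X,A)=\tilde{Y}} + \lambda\bigl(\prob{h(X,A)=1 \mid A=0} - \prob{h(X,A)=1 \mid A=1}\bigr),
\]
where Sion's minimax theorem applies because the objective is linear in $\lambda$ and in $h$ (viewed as a $[0,1]$-valued function on $\mathcal{X}\times\mathcal{A}$). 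Writing accuracy as $\text{const} + \sum_{a} \prob{A=a}\,\expecto{X\mid A=a}{h(X,a)(2\tilde{\eta}(X,a)-1)}$ and noting $\prob{h(X,a)=1 \mid A=a} = \expecto{X\mid A=a}{h(X,a)}$, the Lagrangian decouples across $a$ at the optimum $\lambda^{*}$. Grouping and using $\prob{A=0}=r$, $\prob{A=1}=1-r$, the inner maximization yields the pointwise thresholds
\[
\tilde{h}_{DP}(x,0) = \id{\tilde{\eta}(x,0) \geq \tfrac{r-\lambda^{*}}{2r}}, \qquad \tilde{h}_{DP}(x,1) = \id{\eta(x,1) \geq \tfrac{1}{2} + \tfrac{\lambda^{*}}{2(1-r)}},
\]
where for $a=1$ I use the fact that $\tilde{\eta}(x,1)=\eta(x,1)$ under the bias model of Example \ref{eg:blumStangl}. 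This already matches the claimed formula on the privileged group.

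For the underprivileged group, I would substitute $\tilde{\eta}(x,0) = \dfrac{(1-\nu)\eta(x,0)}{(1-c)\eta(x,0)+c}$ from Example \ref{eg:blumStangl} into the inequality $\tilde{\eta}(x,0) \geq \tfrac{r-\lambda^{*}}{2r}$. Because the denominator $(1-c)\eta(x,0)+c$ is nonnegative for $\eta(x,0)\in[0,1]$ and the transformation is order-preserving (Proposition \ref{prop:order-preserving}), cross-multiplying and collecting terms in $\eta(x,0)$ gives
\[
\eta(x,0) \geq \frac{c(r-\lambda^{*})}{2r(1-\nu) - (1-c)(r-\lambda^{*})},
\]
which is precisely the stated threshold. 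Combining the two cases produces the claimed characterization of $\tilde{h}_{DP}$.

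I do not expect a genuine obstacle here; the Lagrangian argument is standard and the only mildly delicate part is making sure the linear fractional transformation is order-preserving before cross-multiplying, which is exactly the role of Proposition \ref{prop:order-preserving}. One should also verify that the denominator $2r(1-\nu)-(1-c)(r-\lambda^{*})$ is positive for the optimal $\lambda^{*}$, which follows from the fact that $\tilde{\eta}(x,0)\leq 1-\nu < 1$ so that the threshold on $\tilde{\eta}(x,0)$ chosen by the optimum must itself lie in $[0,1)$, making the rearrangement valid.
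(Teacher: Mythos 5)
Your proposal matches the paper's proof essentially verbatim: Lagrange duality plus Sion's minimax yields the group-wise threshold $\tilde{\eta}(x,a) \geq \tfrac{1}{2} - \tfrac{(-1)^{\id{a=1}}\lambda^*}{2\prob{A=a}}$, which for $a=0$ is then converted to a threshold on $\eta(x,0)$ by substituting the linear fractional form of $\tilde{\eta}(x,0)$ from Example \ref{eg:blumStangl} and clearing the (positive) denominator. Your closing remark about the sign of $2r(1-\nu)-(1-c)(r-\lambda^*)$ is a reasonable sanity check that the paper leaves implicit, but the route and the algebra are the same as the paper's.
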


\begin{proof}
Similar to previous work that gives a threshold-based characterization of optimal demographic parity classifiers using Lagrange duality \cite{menon2018cost}, we use the same idea on the biased distribution.
$\tilde{h}_{DP} = \underset{h \in \widetilde{\mathcal{H}}_{\text{fair, DP}}}{\mathrm{argmax}}~ \prob{h(X, A)=\tilde{Y}}$. By Lagrange duality, we can write
\begin{align*}
\tilde{h}_{DP} & = \underset{h \in \mathcal{H}}{\mathrm{argmax}}~ \underset{\lambda \in \R}{\min}~ \prob{h(X, A)=\tilde{Y}} + \lambda~ \prob{h(X, A)=1 | A=0} - \lambda~ \prob{h(X, A)=1 | A=1} \\
& = \underset{h \in \mathcal{H}}{\mathrm{argmax}}~ \underset{\lambda \in \R}{\min}~ \sum_{a=0}^{1} \prob{A=a}~ \expecto{X|A=a}{h(X, a) (2\tilde{\eta}(X, a) - 1)} + (-1)^{\id{a=1}}~ \lambda~ \expecto{X|A=a}{h(X, a)} \\
& = \underset{h \in \mathcal{H}}{\mathrm{argmax}}~ \underset{\lambda \in \R}{\min}~ \sum_{a=0}^{1} \prob{A=a}~ \expecto{X|A=a}{h(X, a) \left(2 \tilde{\eta}(X, a) - 1 + \dfrac{(-1)^{\id{a=1}}~ \lambda}{\prob{A=a}}\right)} \\
& = \underset{\lambda \in \R}{\min}~ \underset{h \in \mathcal{H}}{\mathrm{argmax}}~  \sum_{a=0}^{1} \prob{A=a}~ \expecto{X|A=a}{h(X, a) \left(2 \tilde{\eta}(X, a) - 1 + \dfrac{(-1)^{\id{a=1}}~ \lambda}{\prob{A=a}}\right)},
\end{align*}
by Sion's minimax theorem as the objective is linear in $\lambda$ and $h$. 
Thus, there exists some optimal $\lambda^{*} \in \R$ such that the objective on $h$ splits group-wise with the optimal solution given by
\begin{align*}
\tilde{h}_{DP}(x, a) & = \id{\tilde{\eta}(x, a) \geq \frac{1}{2} - \frac{(-1)^{\id{a=1}} \lambda^{*}}{2\prob{A=a}}} \\
& = \begin{cases} \id{\dfrac{(1-\nu)\eta(x, 0)}{(1-c)\eta(x, 0) + c} \geq \dfrac{1}{2} - \dfrac{\lambda^{*}}{2\prob{A=0}}}, \quad \text{for $a=0$} \quad \text{(Example \ref{eg:blumStangl})} \\
\id{\eta(x, 1) \geq \dfrac{1}{2} + \dfrac{\lambda^{*}}{2\prob{A=1}}}, \quad \text{for $a=1$}
\end{cases} \\
& = \begin{cases} \id{\eta(x, 0) \geq \dfrac{c(r - \lambda^{*})}{2r(1 - \nu) - (1 - c)(r - \lambda^{*})}}, \quad \text{for $a=0$} \\
\id{\eta(x, 1) \geq \dfrac{1}{2} + \dfrac{\lambda^{*}}{2(1 -r)}}, \quad \text{for $a=1$}
\end{cases} \\
\end{align*}
\end{proof}

\subsection{Proofs for Section \ref{subsec:massart}} \label{appndx:blumStanglProofs}

For the distribution described in Section \ref{subsec:massart}, the following result holds:

\begin{lemma} \label{thm:eta-lemma}
Let $h: \mathcal{X} \times \mathcal{A} \rightarrow \{0, 1\}$ be a deterministic classifier. Let $Y|X=x, A=a$ be a random variable that takes value $h(x, a)$ with probability $1-\delta(x, a)$ and $\neg h(x, a)$ with probability $\delta(x, a)$, for some $\delta(x, a) \leq \delta < 1/2$. Let $\eta(x, a) = \prob{Y=1|X=x, A=a}$ and let $g(x, a)$ be a threshold classifier given by $g(x, a) = \id{\eta(x, a) \geq t_{a}}$, using group-dependent thresholds $t_{0}$ and $t_{1}$, respectively. If $t_{0}, t_{1} \in (\delta, 1-\delta)$ then $g \equiv h \equiv h^{*}$, where $h^{*}$ is the Bayes optimal classifier for the joint distribution on $\mathcal{X} \times \mathcal{A} \times \mathcal{Y}$.
\end{lemma}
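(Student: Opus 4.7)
My plan is to observe that because $Y \mid X=x, A=a$ is just a noisy copy of $h(x,a)$ with flip probability at most $\delta$, the regression function $\eta(x,a)$ is pinned to one of two narrow intervals depending on $h(x,a)$, and then verify that any threshold strictly between $\delta$ and $1-\delta$ correctly separates these intervals.

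Concretely, I would first compute $\eta(x,a)$ directly from the noise model. If $h(x,a)=1$, then by definition of the Massart noise $Y=1$ with probability $1-\delta(x,a)$, so $\eta(x,a) = 1-\delta(x,a) \in [1-\delta, 1]$. Symmetrically, if $h(x,a)=0$, then $Y=1$ with probability $\delta(x,a)$, so $\eta(x,a) = \delta(x,a) \in [0, \delta]$. The key structural fact is therefore that $\eta(x,a)$ never takes a value in the open interval $(\delta, 1-\delta)$.

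Next, I would use this dichotomy twice. For $g$: since $t_a \in (\delta, 1-\delta)$, the comparison $\eta(x,a) \geq t_a$ holds iff $\eta(x,a) \geq 1-\delta$, which by the above is equivalent to $h(x,a)=1$; hence $g(x,a) = h(x,a)$ for every $(x,a)$. For $h^\ast$: since $\delta < 1/2 < 1-\delta$, the threshold $1/2$ also sits in the forbidden gap, so $h^\ast(x,a) = \mathbb{I}(\eta(x,a) \geq 1/2) = \mathbb{I}(h(x,a)=1) = h(x,a)$. Combining the two identities gives $g \equiv h \equiv h^\ast$.

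There is essentially no obstacle here: the proof reduces to a one-line case analysis once the range of $\eta$ is pinned down, and the hypothesis $\delta < 1/2$ is exactly what guarantees that both $1/2$ and the allowed thresholds $t_0, t_1$ fall inside the gap $(\delta, 1-\delta)$ that $\eta$ avoids. The only thing to be careful about is writing the interval for $\eta$ as closed on the appropriate side ($[1-\delta,1]$ and $[0,\delta]$) so that strict inequalities $t_a > \delta$ and $t_a < 1-\delta$ produce the correct (non-strict) comparison against $\eta$.
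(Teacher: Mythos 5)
Your proof is correct and follows essentially the same approach as the paper: both compute $\eta(x,a) = \delta(x,a)$ or $1-\delta(x,a)$ depending on whether $h(x,a)=0$ or $1$, and then observe that any threshold in $(\delta,1-\delta)$ — including $1/2$ — must classify identically to $h$. Your explicit framing via the ``forbidden gap'' that $\eta$ avoids is a slightly cleaner way to state the same case analysis.
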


\begin{proof}
It is folklore that the (group-aware) Bayes optimal classifier is given by $h^{*}(x, a) = \id{\eta(x, a) \geq 1/2}$ \cite{zeng2022fair}. One of the ways by which we can recover the Bayes Optimal classifier with the biased distribution $\tilde{D}$ is when the Bayes optimal classifier on the original distribution $D$ does not change around the vicinity of $1/2$. Consider the following two cases for $h(x, a)$. If $h(x, a)=0$, then $Y|X=x, A=a$ takes value $0$ with probability $\delta(x, a) \leq \delta$, and hence, $\id{\eta(x, a) \geq t_{a}} = \id{\delta(x, a) \geq t_{a}} = \id{\delta(x, a) \geq 1/2} = 0$, for $\delta(x, a) \leq \delta < 1/2$ and $t_{a} \in (\delta, 1-\delta)$. On the other hand, if $h(x, a)=1$, then $Y|X=x, A=a$ takes value $1$ with probability $1 - \delta(x, a) \geq 1-\delta$, and hence, $\id{\eta(x, a) \geq t_{a}} = \id{1 - \delta(x, a) \geq t_{a}} = \id{1 - \delta(x, a) \geq 1/2} = 1$, for $\delta(x, a) \leq \delta < 1/2$ and $t_{a} \in (\delta, 1-\delta)$. Therefore, $g \equiv h \equiv h^{*}$.
\end{proof}

We now describe the proof of Theorem \ref{thm:dp-recovery}. The proof for Theorem \ref{thm:blumStangl-reprove} is clubbed with the proof of Theorem \ref{thm:blum_stangl_eo_massart} later in this section.

\begin{proof} {\bf (Proof of Theorem \ref{thm:dp-recovery})}
We use the characterization optimal demographic parity classifier obtained in Proposition \ref{prop:biased-dp-threshold} for $\tilde{D}$, i.e., there exists $\lambda^{*} \in \R$ such that
\[
\tilde{h}_{DP}(x, a) = \begin{cases} \id{\eta(x, 0) \geq \dfrac{c(r - \lambda^{*})}{2r(1 - \nu) - (1 - c)(r - \lambda^{*})}}, \quad \text{for $a=0$} \\
\id{\eta(x, 1) \geq \dfrac{1}{2} + \dfrac{\lambda^{*}}{2(1 - r)}}, \quad \text{for $a=1$}.
\end{cases}    
\]

\noindent {\bf Step 1: If the given conditions on the bias parameters hold, then there exists a $\lambda \in \R$ such that $h_{\lambda} = h^{*}$.} 
From Lemma \ref{thm:eta-lemma}, we know that the threshold on $\eta(x, 1)$ lies in the interval $(\delta, 1-\delta)$ if and only if $\delta < \dfrac{1}{2} + \dfrac{\lambda^{*}}{2(1 - r)}  < 1 - \delta$, or equivalently, $(2\delta - 1)(1 - r) < \lambda^{*} < (1 - 2\delta)(1 - r)$. Similarly, the threshold on $\eta(x, 0)$ lies in the interval $(\delta, 1- \delta)$ if and only if $\delta < \dfrac{c(r - \lambda^{*})}{2r(1 - \nu) - (1 - c)(r - \lambda^{*})} < 1 - \delta$, or equivalently,
\[
r\left(1 - \dfrac{2(1- \delta)(1 - \nu)}{c + (1 - c)(1 - \delta)}\right) < \lambda^{*} < r\left(1 - \dfrac{2\delta(1 - \nu)}{\delta(1 - c) + c}\right).
\]
There exists $\lambda^{*}$ that simultaneously satisfies both sets of constraints given above if and only if
\begin{align*}
r\left(1 - \dfrac{2(1- \delta)(1 - \nu)}{c + (1 - c)(1 - \delta)}\right) < (1 - 2\delta)(1 - r) \text{ and }
(2\delta - 1)(1 - r) < r\left(1 - \dfrac{2\delta(1 - \nu)}{\delta(1 - c) + c} \right).
\end{align*}
Using $c=\beta_{n}/\beta_{p}$, the above conditions can be simplified as
\begin{align*}
\beta_{p}(1 - \delta)(1 - 2\delta - 2r(\nu - \delta)) + \delta\beta_{n}(1 - 2\delta -2r(1 - \delta)) > 0  \text{ and }\\
\beta_{p}\delta(1 - 2r(1 - \nu) - 2\delta(1 - r)) + (1 - \delta)\beta_{n}(1 - 2\delta(1 - r)) > 0
\end{align*}

\noindent {\bf Step 2: If the given conditions on the bias parameters hold, then there cannot exist any $\lambda \in \R$ such that both the group-wise thresholds applied to $\eta(x, a)$ in $h_{\lambda}$ are at most $\delta$, or both the thresholds are at least $1-\delta$. (Thresholds lying on the same side)}
The threshold on $\eta(x, 1)$ is at most $\delta$ if and only if $\dfrac{1}{2} + \dfrac{\lambda}{2(1 - r)} \leq \delta$, or equivalently, $\lambda \leq (2\delta - 1)(1 - r)$. Similarly, the threshold on $\eta(x, 0)$ is at most  $\delta$ if and only if $\dfrac{c(r - \lambda)}{2r(1 - \nu) - (1 - c)(r - \lambda)} \leq \delta$, or equivalently, $\lambda \geq r\left( 1 - \dfrac{2\delta\beta_{p}(1 - \nu)}{\beta_{n}(1 - \delta) + \delta\beta_{p}} \right)$. If both were to hold simultaneously, then
$r\left( 1 - \dfrac{2\delta\beta_{p}(1 - \nu)}{\beta_{n}(1 - \delta) + \delta\beta_{p}} \right) \leq (2\delta - 1)(1 - r)$, or equivalently, $\beta_{p}\delta(1 - 2r(1 - \nu) - 2\delta(1 - r)) + (1 - \delta)\beta_{n}(1 - 2\delta(1 - r)) \leq 0$, which would violate the second condition in our theorem. 

On the other hand, the threshold on $\eta(x, 1)$ is at least interval $1-\delta$ if and only if $\dfrac{1}{2} + \dfrac{\lambda}{2(1 - r)} \geq 1 - \delta$, or equivalently, $\lambda \geq (1  - 2\delta)(1 - r)$. Similarly, the threshold on $\eta(x, 0)$ is at least $\dfrac{c(r - \lambda)}{2r(1 - \nu) - (1 - c)(r - \lambda)} \geq 1 - \delta$, or equivalently, $\lambda \leq r\left( 1 - \dfrac{2(1 - \delta)(1 - \nu)}{c + (1 - c)(1 - \delta)} \right)$. If both were to hold simultaneously, then $(1  - 2\delta)(1 - r) \leq r\left( 1 - \dfrac{2(1 - \delta)(1 - \nu)}{c + (1 - c)(1 - \delta)} \right)$, or equivalently, $\beta_{p}(1 - \delta)(1 - 2\delta - 2r(\nu - \delta)) + \delta\beta_{n}(1 - 2\delta -2r(1 - \delta)) \leq 0$, which would violate the first condition in our theorem. \\

 \noindent {\bf Step 3: For any $\lambda$, if the group-wise thresholds on $\eta(x, a)$ in $h_{\lambda}$ have one of them at most $\delta$ and another at least $1-\delta$, then $h_{\lambda}$ cannot satisfy demographic parity unless $h_{\lambda} = h^{*}$. (Thresholds lying on opposite sides)} We know that $h^{*}$ satisfies demographic parity. Let $PR_{a}(h_{\lambda}) = \prob{h_{\lambda} = 1 | A = a}$ (group positive rate). For demographic parity, we require $PR_{0} = PR_{1}$. Suppose the threshold of $h_{\lambda}$ on $\eta(x,0) (\text{call it } t_{0})$ and the threshold on $\eta(x,1) (\text{call it } t_{1})$ are separated by either $\delta$ or $1 - \delta$. WLOG assume $t_{0} < t_{1}$. Suppose $t_{0} \leq \delta \leq t_{1}$. Since there is no input $(x, a)$ with $\eta(x, a) \in (\delta, 1-\delta)$, we get $PR_{0}(\tilde{h}_{\lambda}) \geq PR_{0}(h^{*})$ and $PR_{1}(h^{*}) \geq PR_{1}(\tilde{h}_{\lambda})$. Since $h^{*}$ satisfies demographic parity on the original distribution $D$, we have $PR_{0}(h^{*}) = PR_{1}(h^{*})$. If $h_{\lambda}$ satisfies demographic parity on the biased distribution $\tilde{D}$, we have $\widetilde{PR}_{0}(h_{\lambda}) = \widetilde{PR}_{1}(h_{\lambda})$, and since $\widetilde{PR}_{a}(h_{\lambda}) = PR_{a}(h_{\lambda})$ by Corollary 3.6, it implies $PR_{0}(h_{\lambda}) = PR_{1}(h_{\lambda})$. 
Combining the two observations above, we get $PR_{0}(h_{\lambda}) = PR_{0}(h^{*}) = PR_{1}(h^{*}) = PR_{1}(h_{\lambda})$, which is possible only if $h_{\lambda} \equiv h^{*}$ (as both are group-wise threshold classifiers). 
The other case $t_{0} \leq 1-\delta \leq t_{1}$ can be argued similarly. \\

\end{proof}


Now, we give a complete proof of Theorem \ref{thm:blum_stangl_eo_massart}. Note that Theorem \ref{thm:blum_stangl_eo_massart} implies Theorem \ref{thm:blumStangl-reprove} as i.i.d. noise is a special case of Massart noise when we have $\delta(x, a) = \delta$, for all $(x, a) \in \mathcal{X} \times \mathcal{A}$. Thus, Theorem \ref{thm:blum_stangl_eo_massart} is a stronger statement than Theorem \ref{thm:blumStangl-reprove} (the original recovery theorem of Blum \& Stangl \cite{blum2019recovering}) while having the same necessary and sufficient conditions on the data and bias parameters. 

\begin{proof}{\bf (Proof of Theorem \ref{thm:blum_stangl_eo_massart})} Let
\begin{align*}
\tilde{h}_{\lambda}(x, a) & = \begin{cases} \id{\eta(x, 0) \geq \dfrac{1}{1 + \dfrac{1- 2\nu}{c} + \dfrac{\lambda}{\beta_{n} rq}}}, \quad \text{for $a=0$} \\
\id{\eta(x, 1) \geq \dfrac{1}{2 - \dfrac{\lambda}{(1-r)q}}}, \quad \text{for $a=1$}.
\end{cases}    
\end{align*}
From the characterization of optimal equal opportunity classifier shown earlier, there exists some optimal $\lambda^{*} \in \R$ such that $\tilde{h}_{EO} = h_{\lambda^{*}}$. We will show that if the conditions in Theorem 4.1 of Blum-Stangl hold, then the only possible choices left for $\lambda^{*} \in \R$ are the ones that give $h_{\lambda^{*}} = h^{*}$.

\noindent {\bf Step 1: If the given conditions on the bias parameters hold, then there exists a $\lambda \in \R$ such that $\tilde{h}_{\lambda} = h^{*}$.} The threshold on $\eta(x, 1)$ lies in the interval $(\delta, 1-\delta)$ if and only if $\dfrac{1}{1-\delta} < 2 - \dfrac{\lambda}{(1-r)q}  < \dfrac{1}{\delta}$, or equivalently, $\dfrac{-(1 - 2\delta)(1-r) q}{\delta} < \lambda < \dfrac{(1-2\delta)(1-r) q}{1-\delta}$. Similarly, the threshold on $\eta(x, 0)$ lies in the interval $(\delta, 1- \delta)$ if and only if $\dfrac{1}{1-\delta} < 1 + \dfrac{1-2\nu}{c} + \dfrac{\lambda}{\beta_{n} r q} < \dfrac{1}{\delta}$, or equivalently,
\[
\beta_{n}rq \left(\frac{\delta}{1-\delta} - \frac{1-2\nu}{c}\right) < \lambda < \beta_{n} rq \left(\frac{1-\delta}{\delta} - \frac{1-2\nu}{c}\right).
\]
There exists $\lambda$ that simultaneously satisfies both sets of constraints given above if and only if
\begin{align*}
\beta_{n}rq \left(\frac{\delta}{1-\delta} - \frac{1-2\nu}{c}\right) & < \dfrac{(1-2\delta)(1-r)q}{1-\delta} \\
\dfrac{-(1 - 2\delta)(1-r)q}{\delta} & < \beta_{n} rq \left(\frac{1-\delta}{\delta} - \frac{1-2\nu}{c}\right).
\end{align*}
Using $c=\beta_{n}/\beta_{p}$, the above conditions can be rewritten as
\begin{align*}
(1-r)(1-2\delta) + r\left((1-\delta) \beta_{p} (1-2\nu) - \delta \beta_{n}\right) & > 0 \quad \text{and} \\
(1-r)(1-2\delta) + r\left((1-\delta) \beta_{n} - \delta \beta_{p} (1-2\nu)\right) & > 0.
\end{align*}

\noindent {\bf Step 2: If the given conditions on the bias parameters hold, then there cannot exist any $\lambda \in \R$ such that both the group-wise thresholds applied to $\eta(x, a)$ in $h_{\lambda}$ are at most $\delta$, or both the thresholds are at least $1-\delta$. (Thresholds lying on the same side)}
The threshold on $\eta(x, 1)$ is at most $\delta$ if and only if $2 - \dfrac{\lambda}{(1-r)q}  \geq \dfrac{1}{\delta}$, or equivalently, $\dfrac{-(1 - 2\delta)(1-r) q}{\delta} \geq \lambda$. Similarly, the threshold on $\eta(x, 0)$ is at most  $\delta$ if and only if $1 + \dfrac{1-2\nu}{c} + \dfrac{\lambda}{\beta_{n} r q} \geq \dfrac{1}{\delta}$, or equivalently, $\lambda \geq \beta_{n} rq \left(\dfrac{1-\delta}{\delta} - \dfrac{1-2\nu}{c}\right)$. If both were to hold simultaneously, then
$\dfrac{-(1 - 2\delta)(1-r) q}{\delta} \geq \beta_{n} rq \left(\dfrac{1-\delta}{\delta} - \dfrac{1-2\nu}{c}\right)$, or equivalently, $(1-r)(1-2\delta) + r\left((1-\delta) \beta_{n} - \delta \beta_{p} (1-2\nu)\right) \leq 0$, which would violate the second condition in Theorem 4.1 of Blum-Stangl. 

On the other hand, the threshold on $\eta(x, 1)$ is at least interval $1-\delta$ if and only if $\dfrac{1}{1-\delta} \geq 2 - \dfrac{\lambda}{(1-r)q}$, or equivalently, $\lambda \geq \dfrac{(1-2\delta)(1-r) q}{1-\delta}$. Similarly, the threshold on $\eta(x, 0)$ is at least $1- \delta$ if and only if $\dfrac{1}{1-\delta} \geq 1 + \dfrac{1-2\nu}{c} + \dfrac{\lambda}{\beta_{n} r q}$, or equivalently, $\beta_{n}rq \left(\dfrac{\delta}{1-\delta} - \dfrac{1-2\nu}{c}\right) \geq \lambda$. If both were to hold simultaneously, then $\beta_{n}rq \left(\dfrac{\delta}{1-\delta} - \dfrac{1-2\nu}{c}\right) \geq \dfrac{(1-2\delta)(1-r) q}{1-\delta}$, or equivalently, $(1-r)(1-2\delta) + r\left((1-\delta) \beta_{p} (1-2\nu) - \delta \beta_{n}\right) \leq 0$, which would violate the first condition in Theorem 4.1 of Blum-Stangl. \\


 \noindent {\bf Step 3: For any $\lambda$, if the group-wise thresholds on $\eta(x, a)$ in $\tilde{h}_{\lambda}$ have one of them at most $\delta$ and another at least $1-\delta$, then $\tilde{h}_{\lambda}$ cannot satisfy equal opportunity unless $\tilde{h}_{\lambda} = h^{*}$. (Thresholds lying on opposite sides)} We know that $h^{*}$ satisfies equal opportunity. Suppose the threshold of $h_{\lambda}$ on $\eta(x,0) (\text{call it } t_{0})$ and the threshold on $\eta(x,1) (\text{call it } t_{1})$ are separated by either $\delta$ or $1 - \delta$. WLOG assume $t_{0} < t_{1}$. Suppose $t_{0} \leq \delta \leq t_{1}$. Since there is no input $(x, a)$ with $\eta(x, a) \in (\delta, 1-\delta)$, we get $TPR_{0}(\tilde{h}_{\lambda}) \geq TPR_{0}(h^{*})$ and $TPR_{1}(h^{*}) \geq TPR_{1}(\tilde{h}_{\lambda})$. 
Since $h^{*}$ satisfies equal opportunity on the original distribution $D$, we have $TPR_{0}(h^{*}) = TPR_{1}(h^{*})$.
If $\tilde{h}_{\lambda}$ satisfies equal opportunity on the biased distribution $\tilde{D}$, we have $\widetilde{TPR}_{0}(\tilde{h}_{\lambda}) = \widetilde{TPR}_{1}(\tilde{h}_{\lambda})$, and since $\widetilde{TPR}_{a}(\tilde{h}_{\lambda}) = TPR_{a}(\tilde{h}_{\lambda})$ by Corollary 3.6, it implies $TPR_{0}(\tilde{h}_{\lambda}) = TPR_{1}(\tilde{h}_{\lambda})$. 
Combining the two observations above, we get $TPR_{0}(\tilde{h}_{\lambda}) = TPR_{0}(h^{*}) = TPR_{1}(h^{*}) = TPR_{1}(\tilde{h}_{\lambda})$, which is possible only if $\tilde{h}_{\lambda} \equiv h^{*}$ (as both are group-wise threshold classifiers). 
The other case $t_{0} \leq 1-\delta \leq t_{1}$ can be argued similarly. \\

\noindent {\bf Combing Steps 1, 2, and 3 to complete the proof:} Finally, from Steps 1, 2, and 3 together, it is clear that if the conditions in Theorem 4.1 of Blum-Stangl are met then the only possible classifiers $\tilde{h}_{\lambda}$ are the ones for which $\tilde{h}_{\lambda} = h^{*}$, and they satisfy equal opportunity on $\tilde{D}$. So under these conditions, the optimal $\lambda^{*}$ (whatever it may be) gives $\tilde{h}_{EO} = \tilde{h}_{\lambda^{*}} = h^{*}$. \\
\end{proof}

\subsection{Proofs for Section \ref{sec:reject_option}} 
\label{appndx:rejectionProofs}

\begin{proof} {\bf(Proof of Proposition \ref{prop:opt-rej})}

The optimal reject option classifier $g: \mathcal{X} \times \mathcal{A} \rightarrow \{0, 1, \bot\}$ with rejection penalty $\delta$ can be thought of as a pair of binary classifiers $\rho$ and $h$ in $\mathcal{H}$ such that:

\[
\rho(x, a) = \begin{cases} 1, \quad \text{if $g(x, a) = \bot$} \\ 0, \quad \text{if $g(x, a) \neq \bot$}
\end{cases}
\]
and $g(x, a) = h(x, a)$ whenever $\rho(x, a) = 0$ (or equivalently, $g(x, a) \neq \bot$). The optimal reject option classifier is defined as
\[
h^{\text{rej}} = \underset{g \in \mathcal{H}^{\text{rej}}}{\mathrm{argmin}}~ \prob{g(X, A) \neq Y, g(X, A) \neq \bot} + \delta~ \prob{g(X, A) = \bot},
\]
where $\mathcal{H}^{\text{rej}}$ be the hypothesis class of all reject option classifiers $g: \mathcal{X} \times \mathcal{A} \rightarrow \{0, 1, \bot\}$. Equivalently, it can be re-written as:

\begin{align*}
(\rho^{*}, h^{*}) & = \underset{\rho \in \mathcal{H}}{\mathrm{argmin}}~ \underset{h \in \mathcal{H}}{\mathrm{argmin}}~ \prob{h(X, A) \neq Y, \rho(X, A)=0} + \delta~ \prob{\rho(X, A)=1} \\ 
& = \underset{\rho \in \mathcal{H}}{\mathrm{argmin}}~ \underset{h \in \mathcal{H}}{\mathrm{argmin}}~ 1 - \prob{\rho(X,A)=1} - \prob{h(X,A)=Y, \rho(X,A)=0} + \delta~ \prob{\rho(X,A)=1} \\ 
& = \underset{\rho \in \mathcal{H}}{\mathrm{argmax}}~ \underset{h \in \mathcal{H}}{\mathrm{argmax}}~ \prob{h(X,A)=Y, \rho(X,A)=0} + (1-\delta)~ \prob{\rho(X,A)=1} \\
& = \underset{\rho \in \mathcal{H}}{\mathrm{argmax}}~ \underset{h \in \mathcal{H}}{\mathrm{argmax}}~ \sum_{a=0}^{1} \prob{A=a}~ \expecto{X|A=a}{(1-\rho(X,a))\left(h(X,a)\eta(X,a) + (1-h(X,a))(1-\eta(X,a))\right) + (1-\delta)\rho(X,a)} \\
& = \underset{\rho \in \mathcal{H}}{\mathrm{argmax}}~
\sum_{a=0}^{1} \prob{A=a}~ \expecto{X|A=a}{(1-\delta)~ \rho(X,a) + (1-\rho(X,a)) \left(1-\eta(X,a) + \underset{h \in \mathcal{H}}{\mathrm{argmax}}~ h(X,a)(2\eta(X,a)-1)\right)}. 
\end{align*}
We can now focus on obtaining the optimal functions for each group. For any $\rho \in \mathcal{H}$, solving the inner optimization gives the optimal solution $h^{\text{rej}}(x,a) = \id{\eta(x,a) \geq \frac{1}{2}}$ whenever $\rho(x,a) \neq 1$. Thus, the outer optimization can be written as
\[
\rho^{*} = \underset{\rho \in \mathcal{H}}{\mathrm{argmax}}~
\expecto{X|A=a}{(1-\delta)~ \rho(X,a) + (1-\rho(X,a)) \left(1-\eta(X,a) + \id{\eta(X,a) \geq \frac{1}{2}}(2\eta(X,a)-1)\right)}.
\]
The above maximization can be solved point-wise for each $x \in \mathcal{X}$ in group $A=a$. Whenever $\eta(x,a) \geq 1/2$ for an input $x$ from group $a$, the function inside the expectation becomes $(1-\delta)\rho(x,a) + (1-\rho(x,a))\eta(x,a)$, which is maximized by $\rho^{*}(x,a) = \id{\eta(x,a) \in [1/2, 1-\delta)}$. On the other hand, whenever $\eta(x,a) < 1/2$ for an input $x,a$, the function inside the expectation becomes $(1-\delta)\rho(x,a) + (1-\rho(x,a)(1-\eta(x,a))$, which is maximized by $\rho^{*}(x,a) = \id{\eta(x,a) \in (\delta, 1/2)}$. Thus, overall we can write $\rho^{*}(x,a) = \id{\eta(x,a) \in (\delta, 1-\delta)}$.

\end{proof}

\begin{proof} 
{\bf (Proof of Theorem \ref{thm:reject_recovery})} From Proposition \ref{prop:biased-eo-threshold}, we can write:
\begin{align*}
\tilde{h}_{\lambda}(x, a) & = \begin{cases} \id{\eta(x, 0) \geq \dfrac{1}{1 + \dfrac{1- 2\nu}{c} + \dfrac{\lambda}{\beta_{n} rq}}}, \quad \text{for $a=0$} \\
\id{\eta(x, 1) \geq \dfrac{1}{2 - \dfrac{\lambda}{(1-r)q}}}, \quad \text{for $a=1$}.
\end{cases}    
\end{align*}

We will now attempt to recover a classifier that matches the predictions of $h^{\text{rej}}$, whenever $h^{\text{rej}}(x, a) \neq \bot$, but incurs some penalty while trying to label the points when $h^{\text{rej}}(x, a) = \bot$.
From the characterization of the optimal equal opportunity classifier shown earlier, there exists some optimal $\lambda^{*} \in \R$ such that $\tilde{h}_{EO}=h^{\text{rej}}$, whenever $h^{\text{rej}}(x, a) \neq \bot$. Such a classifier will not change whenever it lies in the rejection interval $(\delta, 1 - \delta)$.

\noindent {\bf Step 1: If the given conditions on the bias parameters hold, then there exists a $\lambda \in \R$ such that $\tilde{h}_{\lambda} = h^{\text{rej}}$, whenever $h^{\text{rej}}(x, a) \neq \bot$.} The threshold on $\eta(x, 1)$ lies in the interval $(\delta, 1-\delta)$ if and only if $\dfrac{1}{1-\delta} < 2 - \dfrac{\lambda}{(1-r)q}  < \dfrac{1}{\delta}$, or equivalently, $\dfrac{-(1 - 2\delta)(1-r) q}{\delta} < \lambda < \dfrac{(1-2\delta)(1-r) q}{1-\delta}$. Similarly, the threshold on $\eta(x, 0)$ lies in the interval $(\delta, 1- \delta)$ if and only if $\dfrac{1}{1-\delta} < 1 + \dfrac{1-2\nu}{c} + \dfrac{\lambda}{\beta_{n} r q} < \dfrac{1}{\delta}$, or equivalently,

\[
\beta_{n}rq \left(\frac{\delta}{1-\delta} - \frac{1-2\nu}{c}\right) < \lambda < \beta_{n} rq \left(\frac{1-\delta}{\delta} - \frac{1-2\nu}{c}\right).
\]
There exists $\lambda$ that simultaneously satisfies both sets of constraints given above if and only if
\begin{align*}
\beta_{n}rq \left(\frac{\delta}{1-\delta} - \frac{1-2\nu}{c}\right) & < \dfrac{(1-2\delta)(1-r)q}{1-\delta} \\
\dfrac{-(1 - 2\delta)(1-r)q}{\delta} & < \beta_{n} rq \left(\frac{1-\delta}{\delta} - \frac{1-2\nu}{c}\right).
\end{align*}
Using $c=\beta_{n}/\beta_{p}$, the above conditions can be rewritten as
\begin{align*}
(1-r)(1-2\delta) + r\left((1-\delta) \beta_{p} (1-2\nu) - \delta \beta_{n}\right) & > 0 \quad \text{and} \\
(1-r)(1-2\delta) + r\left((1-\delta) \beta_{n} - \delta \beta_{p} (1-2\nu)\right) & > 0.
\end{align*}

\noindent {\bf Step 2: If the given conditions on the bias parameters hold, then there cannot exist any $\lambda \in \R$ such that both the group-wise thresholds applied to $\eta(x, a)$ in $h_{\lambda}$ are at most $\delta$, or both the thresholds are at least $1-\delta$. (Thresholds lying on the same side)}
The threshold on $\eta(x, 1)$ is at most $\delta$ if and only if $2 - \dfrac{\lambda}{(1-r)q}  \geq \dfrac{1}{\delta}$, or equivalently, $\dfrac{-(1 - 2\delta)(1-r) q}{\delta} \geq \lambda$. Similarly, the threshold on $\eta(x, 0)$ is at most  $\delta$ if and only if $1 + \dfrac{1-2\nu}{c} + \dfrac{\lambda}{\beta_{n} r q} \geq \dfrac{1}{\delta}$, or equivalently, $\lambda \geq \beta_{n} rq \left(\dfrac{1-\delta}{\delta} - \dfrac{1-2\nu}{c}\right)$. If both were to hold simultaneously, then
$\dfrac{-(1 - 2\delta)(1-r) q}{\delta} \geq \beta_{n} rq \left(\dfrac{1-\delta}{\delta} - \dfrac{1-2\nu}{c}\right)$, or equivalently, $(1-r)(1-2\delta) + r\left((1-\delta) \beta_{n} - \delta \beta_{p} (1-2\nu)\right) \leq 0$, which would violate the second condition in Step 1. 

On the other hand, the threshold on $\eta(x, 1)$ is at least interval $1-\delta$ if and only if $\dfrac{1}{1-\delta} \geq 2 - \dfrac{\lambda}{(1-r)q}$, or equivalently, $\lambda \geq \dfrac{(1-2\delta)(1-r) q}{1-\delta}$. Similarly, the threshold on $\eta(x, 0)$ is at least $1- \delta$ if and only if $\dfrac{1}{1-\delta} \geq 1 + \dfrac{1-2\nu}{c} + \dfrac{\lambda}{\beta_{n} r q}$, or equivalently, $\beta_{n}rq \left(\dfrac{\delta}{1-\delta} - \dfrac{1-2\nu}{c}\right) \geq \lambda$. If both were to hold simultaneously, then $\beta_{n}rq \left(\dfrac{\delta}{1-\delta} - \dfrac{1-2\nu}{c}\right) \geq \dfrac{(1-2\delta)(1-r) q}{1-\delta}$, or equivalently, $(1-r)(1-2\delta) + r\left((1-\delta) \beta_{p} (1-2\nu) - \delta \beta_{n}\right) \leq 0$, which would violate the first condition in Step 1. \\

\noindent {\bf Step 3: For any $\lambda$, if the group-wise thresholds on $\eta(x, a)$ in $h_{\lambda}$ have one of them at most $\delta$ and another at least $1-\delta$, then $h_{\lambda}$ cannot satisfy equal opportunity unless $h_{\lambda}=h^{\text{rej}}$, whenever $h^{\text{rej}}(x, a) \neq \bot$. (Thresholds lying on opposite sides)} 
From our assumption, we know that $h^{\text{rej}}$ satisfies equal opportunity whenever $h^{\text{rej}}(x, a) \neq \bot$. Suppose the threshold of $h_{\lambda}$ on $\eta(x,0) (\text{call it } t_{0})$ and the threshold on $\eta(x,1) (\text{call it } t_{1})$ are separated by either $\delta$ or $1 - \delta$. WLOG assume $t_{0} < t_{1}$. Suppose $t_{0} \leq \delta \leq t_{1}$. Since there is no input $(x, a)$ with $\eta(x, a) \in (\delta, 1-\delta)$, we get $TPR_{0}(h_{\lambda}) \geq TPR_{0}(h^{*})$ and $TPR_{1}(h^{*}) \geq TPR_{1}(h_{\lambda})$. 
Since $h^{*}$ satisfies equal opportunity on the original distribution $D$, we have $TPR_{0}(h^{*}) = TPR_{1}(h^{*})$.
If $h_{\lambda}$ satisfies equal opportunity on the biased distribution $\tilde{D}$, we have $\widetilde{TPR}_{0}(h_{\lambda}) = \widetilde{TPR}_{1}(h_{\lambda})$, and since $\widetilde{TPR}_{a}(h_{\lambda}) = TPR_{a}(h_{\lambda})$ by Corollary 3.6, it implies $TPR_{0}(h_{\lambda}) = TPR_{1}(h_{\lambda})$. 
Combining the two observations above, we get $TPR_{0}(h_{\lambda}) = TPR_{0}(h^{*}) = TPR_{1}(h^{*}) = TPR_{1}(h_{\lambda})$, which is possible only if $h_{\lambda} \equiv h^{*}$ (as both are group-wise threshold classifiers). 
The other case $t_{0} \leq 1-\delta \leq t_{1}$ can be argued similarly. \\

\noindent {\bf Combing Steps 1, 2, and 3 to complete the proof:} Finally, from Steps 1, 2, and 3 together, it is clear that when the above conditions are met, then the only possible classifiers $\tilde{h}_{\lambda}$ are the ones for which $\tilde{h}_{\lambda}=h^{\text{rej}}$, whenever $h^{\text{rej}}(x, a) \neq \bot$, and they satisfy equal opportunity on $\tilde{D}$. Furthermore, by our assumption, $h^{\text{rej}}$ is EO-fair. 

\end{proof}

\subsection{Proofs for Section \ref{sec:robust}} \label{appndx:proof_eo-robust-recovery}

Definition \ref{def:eps-robust-property} of $\epsilon$-robustness of $h^{*}$ can be rewritten as
\begin{equation*}
    \begin{split}
         &h^{*}(x)=\underset{h \in \mathcal{H}}{\mathrm{argmax}}~\prob{h(X, A) = Y'} \\
& = \underset{h \in \mathcal{H}}{\mathrm{argmax}}~ \expecto{(X, A)}{h(X, A) (2\eta'(X, A)-1)} \\
& = \underset{h \in \mathcal{H}}{\mathrm{argmax}}~ \sum_{a=0}^{1} \prob{A=a} \expecto{X|A=a}{h(X, a) (2\eta'(X, a)-1)}\\
& = \underset{h \in \mathcal{H}}{\mathrm{argmax}}~ \sum_{a=0}^{1} \prob{A=a} \underset{X|A=a}{\mathrm{E}} \Bigg[h(X, a)
\left(2~ \frac{P_{a} \eta(X, a) + Q_{a}}{R_{a} \eta(X, a) + S_{a}} -1\right)\Bigg] \\
& = \underset{h \in \mathcal{H}}{\mathrm{argmax}}~ \sum_{a=0}^{1} \prob{A=a}\underset{X|A=a}{\mathrm{E}}\Bigg[h(X, a)
\frac{(2P_{a} - R_{a}) \eta(X, a) + (2Q_{a} - S_{a})}{R_{a} \eta(X, a) + S_{a}}\Bigg],
    \end{split}
\end{equation*}
for any $S_{a} = 1, |R_{a}| \leq \epsilon, |Q_{a}| \leq \epsilon, 1-\epsilon \leq P_{a} \leq 1+\epsilon$, and $P_{a}S_{a} - Q_{a}R_{a} \geq 0$.

\begin{proof} {\bf(Proof of Theorem \ref{thm:eo-robust-recovery})}
Let $\tilde{h}_{EO} = \underset{h \in \widetilde{\mathcal{H}}_{\text{fair}}}{\text{argmax}}~ \prob{h(X, A)=\tilde{Y}}$. By Lagrange duality, we can write
\begin{align*}
\tilde{h}_{EO} & = \underset{h \in \mathcal{H}}{\mathrm{argmax}}~ \underset{\lambda \in \R}{\min}~ \prob{h(X, A)=\tilde{Y}} + \lambda~ \prob{h(X, A)=1 | \tilde{Y}=1, A=0} - \lambda~ \prob{h(X, A)=1 | \tilde{Y}=1, A=1} \\
& = \underset{h \in \mathcal{H}}{\mathrm{argmax}}~ \underset{\lambda \in \R}{\min}~ \sum_{a=0}^{1} \prob{A=a}~ \expecto{X|A=a}{h(X, a) (2\tilde{\eta}(X, a) - 1)} + \frac{(-1)^{\id{a=1}}~ \lambda}{\expecto{X|A=a}{\tilde{\eta}{(X, a)}}}~ \expecto{X|A=a}{h(X, a)\tilde{\eta}(X, a)} \\
& = \underset{h \in \mathcal{H}}{\mathrm{argmax}}~ \underset{\lambda \in \R}{\min}~ \sum_{a=0}^{1} \prob{A=a}~ \expecto{X|A=a}{h(X, a) (2\tilde{\eta}(X, a) - 1)} + \frac{(-1)^{\id{a=1}}~ \lambda}{\prob{\tilde{Y}=1|A=a}}~ \expecto{X|A=a}{h(X, a)\tilde{\eta}(X, a)} \\
& = \underset{h \in \mathcal{H}}{\mathrm{argmax}}~ \underset{\lambda \in \R}{\min}~ \sum_{a=0}^{1} \prob{A=a}~ \expecto{X|A=a}{h(X, a) \left(\left(2 + \dfrac{(-1)^{\id{a=1}} \lambda}{\prob{\tilde{Y}=1, A=a}}\right) \tilde{\eta}(X, a) - 1\right)} \\
& = \underset{h \in \mathcal{H}}{\mathrm{argmax}}~ \underset{\lambda \in \R}{\min}~ r~ \expecto{X|A=0}{h(X, 0) \left(\left(2 + \frac{\lambda}{\beta_{p} (1-\nu)rq}\right) \frac{(1-\nu)\eta(X, 0)}{(1-c) \eta(X, 0) + c} - 1\right)} \\
& \qquad \qquad \qquad \qquad + (1-r)~ \expecto{X|A=1}{h(X, 1) \left(\left(2 - \frac{\lambda}{(1-r) q}\right) \eta(X, 1) - 1\right)} \\
& = \underset{h \in \mathcal{H}}{\mathrm{argmax}}~ \underset{\lambda \in \R}{\min}~ r~ \expecto{X|A=0}{h(X, 0)~ \frac{\left(2(1-\nu) + \dfrac{\lambda}{\beta_{p} rq} - (1-c)\right) \eta(X, 0) - c}{(1-c) \eta(X, 0) + c}} \\
& \qquad \qquad \qquad \qquad + (1-r)~ \expecto{X|A=1}{h(X, 1) \left(\left(2 - \dfrac{\lambda}{(1-r)q}\right) \eta(X, 1) - 1\right)} \\
& = \underset{h \in \mathcal{H}}{\mathrm{argmax}}~ \underset{\lambda \in \R}{\min}~ r~ \expecto{X|A=0}{h(X, 0)~ \frac{\left(1 - 2\nu + c + \dfrac{\lambda}{\beta_{p} rq} \right) \eta(X, 0) - c}{(1-c) \eta(X, 0) + c}} \\
& \qquad \qquad \qquad \qquad + (1-r)~ \expecto{X|A=1}{h(X, 1) \left(\left(2 - \dfrac{\lambda}{(1-r)q}\right) \eta(X, 1) - 1\right)} \\
& = \underset{h \in \mathcal{H}}{\mathrm{argmax}}~ \underset{\lambda \in \R}{\min}~ r~ \expecto{X|A=0}{h(X, 0)~ \frac{\left(1 + \dfrac{1- 2\nu}{c} + \dfrac{\lambda}{\beta_{n} rq}\right) \eta(X, 0) - 1}{\dfrac{1-c}{c} \eta(X, 0) + 1}} \\
& \qquad \qquad \qquad \qquad + (1-r)~ \expecto{X|A=1}{h(X, 1) \left(\left(2 - \dfrac{\lambda}{(1-r)q}\right) \eta(X, 1) - 1\right)}, \qquad \text{using $c = \dfrac{\beta_{n}}{\beta_{p}}$}.
\end{align*}
Now plugging in 
\begin{align*}
\frac{(2P_{0} - R_{0}) \eta(X, 0) + (2Q_{0} - S_{0})}{R_{0} \eta(X, 0) + S_{0}} & = \frac{\left(1 + \dfrac{1- 2\nu}{c} + \dfrac{\lambda}{\beta_{n} rq}\right) \eta(X, 0) - 1}{\dfrac{1-c}{c} \eta(X, 0) + 1} \qquad \text{and} \\
\frac{(2P_{1} - R_{1}) \eta(X, 1) + (2Q_{1} - S_{1})}{R_{1} \eta(X, 1) + S_{1}} & = \left(2 - \frac{\lambda}{(1-r) q}\right) \eta(X, 1) - 1,
\end{align*}
we derive
\begin{alignat*}{4}
P_{0} & = \dfrac{1-\nu}{c} + \dfrac{\lambda}{2 \beta_{n} rq}, \qquad && Q_{0} = 0, \qquad && R_{0} = \dfrac{1-c}{c}, \qquad && S_{0} = 1, \\
P_{1} & = 1 - \dfrac{\lambda}{2(1-r)q}, \qquad && Q_{1} = 0, \qquad && R_{1} = 0, \qquad && S_{1} = 1.
\end{alignat*}
\noindent {\bf Step 1: If the given conditions on the bias parameters hold, then there exists a $\lambda \in \R$ such that $1-\epsilon \leq P_{0}, P_{1} \leq 1+\epsilon$, and therefore, $h_{\lambda} = h^{*}$.}
They satisfy our $\epsilon$-robustness conditions if and only if
\begin{align*}
|R_{0}| & = \left|\dfrac{1-c}{c}\right| = \left|\frac{\beta_{p} - \beta_{n}}{\beta_{n}}\right| \leq \epsilon, \\
1-\epsilon \leq P_{0} & = \dfrac{1 - \nu}{c} + \dfrac{\lambda}{2\beta_{n} rq} \leq 1+\epsilon \\
1-\epsilon \leq P_{1} & = 1 - \dfrac{\lambda}{2(1-r)q} \leq 1+\epsilon.
\end{align*}
The above conditions also imply $P_{a}S_{a} - Q_{a}R_{a} \geq 0$, for $a \in \{0, 1\}$, so we do not need to include these  separately. The first inequality is equivalent to $(1-\epsilon)\beta_{n} \leq \beta_{p} \leq (1+\epsilon)\beta_{n}$. The second and third inequalities above are equivalent to $2rq \left((1-\epsilon)\beta_{n} - (1-\nu)\beta_{p}\right) \leq \lambda \leq 2rq \left((1+\epsilon)\beta_{n} - (1-\nu)\beta_{p}\right)$ and $-2\epsilon (1-r)q \leq \lambda \leq 2\epsilon (1-r)q$, respectively. There exists a $\lambda \in \R$ that satisfies the above two conditions on $\lambda$ simultaneously if and only if
\[
2rq \left((1-\epsilon)\beta_{n} - (1-\nu)\beta_{p}\right) \leq 2\epsilon (1-r)q \quad \text{and} \quad
-2\epsilon (1-r)q \leq 2rq \left((1+\epsilon)\beta_{n} - (1-\nu)\beta_{p}\right),
\]
or equivalently,
\[
r \left((1-\nu)\beta_{p} - (1-\epsilon)\beta_{n}\right) + \epsilon (1-r) \geq 0 \quad \text{and} \quad r \left((1+\epsilon)\beta_{n} - (1-\nu)\beta_{p}\right) + \epsilon (1-r) \geq 0.
\]
\noindent {\bf Step 2: If the given conditions on the bias parameters hold, then there cannot exist any $\lambda \in \R$ such that $\max\{P_{0}, P_{1}\} < 1-\epsilon$ or $\min\{P_{0} , P_{1}\} > 1+\epsilon$.} Suppose $\max\{P_{0}, P_{1}\} < 1-\epsilon$. Then 
\[
\dfrac{1 - \nu}{c} + \dfrac{\lambda}{2\beta_{n} rq} < 1-\epsilon \quad \text{and} \quad 1 - \dfrac{\lambda}{2(1-r)q} < 1-\epsilon.    
\]
Thus, $\lambda < 2rq \left((1-\epsilon)\beta_{n} - (1-\nu)\beta_{p}\right)$ and $\lambda > 2\epsilon(1-r)q$. This implies $2rq \left((1-\epsilon)\beta_{n} - (1-\nu)\beta_{p}\right) > 2\epsilon(1-r)q$, or equivalently, $r \left((1-\nu)\beta_{p} - (1-\epsilon)\beta_{n}\right) + \epsilon (1-r) < 0$, violating one of the conditions in the theorem. The case of $\min\{P_{0} , P_{1}\} > 1+\epsilon$ can be argued similarly.

\noindent {\bf Step 3: If the given conditions on the bias parameters hold, then an optimal $\lambda^{*} \in \R$ cannot correspond to $P_{0} < 1-\epsilon$ and $P_{1} > 1+\epsilon$ (or vice versa).} Suppose $\lambda^{*} \in \R$ satisfies $P_{0} < 1-\epsilon$ and $P_{1} > 1+\epsilon$, then
\[
P_{0} = \dfrac{1 - \nu}{c} + \dfrac{\lambda^{*}}{2\beta_{n} rq} < 1-\epsilon \quad \text{and} \quad P_{1} = 1 - \dfrac{\lambda^{*}}{2(1-r)q} > 1+\epsilon,
\]
which implies that $\lambda^{*} < 2rq \left((1-\epsilon)\beta_{n} - (1-\nu)\beta_{p}\right)$ and $\lambda^{*} < -2\epsilon(1-r)q$.
We know that
\[
\tilde{h}_{EO} = \underset{h \in \mathcal{H}}{\mathrm{argmax}}~ \prob{h(X, A)=\tilde{Y}} + \lambda^{*}~ \prob{h(X, A)=1 | \tilde{Y}=1, A=0} - \lambda^{*}~ \prob{h(X, A)=1 | \tilde{Y}=1, A=1}.
\]
Since we consider group-aware classification, when we know the optimal $\lambda^{*}$, the optimal equal opportunity classifier $\tilde{h}_{EO}$ on the biased distribution $\tilde{D}$ can be obtained by optimizing separately on the two groups. Note that $\tilde{h}_{EO}$ maximizes $\prob{h(X, A)=\tilde{Y}} + \lambda^{*}~ \prob{h(X, A)=1 | \tilde{Y}=1, A=0} - \lambda^{*}~ \prob{h(X, A)=1 | \tilde{Y}=1, A=1}$ over $h \in \mathcal{H}$, for the optimal $\lambda^{*} \in \R$. Since we consider group-aware classifiers, $\tilde{h}_{EO}$ maximizes the following objective on the underprivileged group $a=0$. 
\begin{align*}
& \prob{h(X, A)=\tilde{Y} | A=0} + \frac{\lambda^{*}}{\prob{A=0}}~ \prob{h(X, A)=1 | \tilde{Y}=1, A=0} \\
& = \prob{\tilde{Y}=1 | A=0}~ \widetilde{TPR}_{0}(h) + \prob{\tilde{Y}=0 | A=0}~ \widetilde{TNR}_{0}(h) + \frac{\lambda^{*}}{r}~ \widetilde{TPR}_{0}(h) \\ 
& = \prob{\tilde{Y}=1 | A=0}~ TPR_{0}(h) + \prob{\tilde{Y}=0, Y=0 | A=0}~ TNR_{0}(h) + \prob{\tilde{Y}=0, Y=1 | A=0}~ FNR_{0}(h) \\
& \qquad \qquad \qquad \qquad + \frac{\lambda^{*}}{r}~ TPR_{0}(h) \qquad \qquad \qquad \text{using Proposition \ref{prop:tilde-tpr-tnr}} \\
& = \frac{q\beta_{p}(1-\nu)}{q\beta_{p} + (1-q)\beta_{n}}~ TPR_{0}(h) + \frac{(1-q)\beta_{n}}{q\beta_{p} + (1-q)\beta_{n}}~ TNR_{0}(h) + \frac{q\beta_{p}\nu}{q\beta_{p} + (1-q)\beta_{n}}~ FNR_{0}(h) + \frac{\lambda^{*}}{r}~ TPR_{0}(h) \\
& = \left(\frac{q\beta_{p}(1-2\nu)}{q\beta_{p} + (1-q)\beta_{n}} + \frac{\lambda^{*}}{r}\right)~ TPR_{0}(h) + \frac{(1-q)\beta_{n}}{q\beta_{p} + (1-q)\beta_{n}}~ TNR_{0}(h) + \frac{q\beta_{p}\nu}{q\beta_{p} + (1-q)\beta_{n}} \\
& \hspace{9cm} \text{because $FNR_{0}(h) = 1 - TPR_{0}(h)$}. 
\end{align*}
Thus, $\tilde{h}_{EO}$ essentially maximizes a weighted linear combination of $TPR_{0}(h)$ and $TNR_{0}(h)$ over $h \in \mathcal{H}$, where the ratios of the weights for $TPR_{0}(h)$ and $TNR_{0}(h)$, respectively, is 
\begin{align*}
\frac{q\beta_{p}(1-2\nu)}{(1-q)\beta_{n}} + \frac{\lambda^{*} (q\beta_{p} + (1-q)\beta_{n})}{r(1-q)\beta_{n}} & \leq \frac{q (1-2\nu)}{(1-q) c} + \frac{\lambda^{*}}{r(1-q)\beta_{n}} \qquad \text{using $\beta_{p}, \beta_{n} \leq 1$} \\
& = \frac{q}{1-q} \left(\frac{1-2\nu}{c} + \frac{\lambda^{*}}{\beta_{n} rq}\right) \\
& = \frac{q}{1-q} \left(\frac{2(1-\nu)}{c} + \frac{\lambda^{*}}{\beta_{n}rq} - \frac{1}{c}\right) \\
& < \frac{q}{1-q} \left(2(1-\epsilon) - \frac{1}{c}\right) \qquad \text{using $P_{0} = \dfrac{1 - \nu}{c} + \dfrac{\lambda^{*}}{2\beta_{n} rq} < 1-\epsilon$} \\
& \leq \frac{q}{1-q}~ (1-\epsilon) \qquad \qquad \text{using $\left|\frac{1-c}{c}\right| \leq \epsilon$, and hence, $\frac{1}{c} \geq 1-\epsilon$}
\end{align*}
Hence, $TPR_{0}(\tilde{h}_{EO}) \leq TPR_{0}(h^{*})$. Similarly, $\tilde{h}_{EO}$ maximizes the following objective on the group $a=1$. 
\begin{align*}
& \prob{h(X, A)=\tilde{Y} | A=1} - \frac{\lambda^{*}}{\prob{A=1}}~ \prob{h(X, A)=1 | \tilde{Y}=1, A=1} \\
& = \prob{\tilde{Y}=1 | A=1}~ \widetilde{TPR}_{1}(h) + \prob{\tilde{Y}=0 | A=1}~ \widetilde{TNR}_{1}(h) - \frac{\lambda^{*}}{1-r}~ \widetilde{TPR}_{1}(h) \\ 
& = q \left(1 - \frac{\lambda^{*}}{(1-r)q}\right)~ TPR_{1}(h) + (1-q)~ TNR_{1}(h).
\end{align*}
Thus, $\tilde{h}_{EO}$ essentially maximizes a weighted linear combination of $TPR_{1}(h)$ and $TNR_{1}(h)$ over $h \in \mathcal{H}$, where the ratio of the weights for $TPR_{1}(h)$ and $TNR_{1}(h)$, respectively, is
\[
\frac{q}{1-q} \left(1 - \frac{\lambda^{*}}{(1-r)q}\right) > \frac{q}{1-q}~ (1+2\epsilon) \qquad \qquad \text{using $P_{1} = 1 - \frac{\lambda^{*}}{2(1-r)q} > 1+\epsilon$}.
\]
Hence, $TPR_{1}(\tilde{h}_{EO}) \geq TPR_{1}(h^{*})$. 

Combining the two observation $TPR_{0}(\tilde{h}_{EO}) \leq TPR_{0}(h^{*})$ and $TPR_{1}(\tilde{h}_{EO}) \geq TPR_{1}(h^{*})$ above and that $h^{*}$ satisfies equal opportunity, we get $TPR_{0}(\tilde{h}_{EO}) \leq TPR_{0}(h^{*}) = TPR_{1}(h^{*}) \leq TPR_{1}(\tilde{h}_{EO})$. However, $\tilde{h}_{EO} \in \widetilde{\mathcal{H}}_{\text{fair, EO}} = \mathcal{H}_{\text{fair, EO}}$ by Corollary \ref{corr:tpr_tnr_sp}, so we must have $TPR_{0}(\tilde{h}_{EO}) = TPR_{0}(h^{*}) = TPR_{1}(h^{*}) = TPR_{1}(\tilde{h}_{EO})$. This means that $\tilde{h}_{EO}$ would also maximize accuracy on the original distribution $D$, and therefore, by the $\epsilon$-robustness property, we must have $\tilde{h}_{EO} \equiv h^{*}$.

The case of $P_{0} > 1+\epsilon$ and $P_{1} < 1-\epsilon$ can be argued similarly.
\end{proof}

\subsection{Proofs for Section \ref{sec:time_varying}} \label{appndx:proofs-time-varying}

\begin{proof} {\bf(Proof for Theorem \ref{thm:uniform-time-varying-eo})} We start with the Bayes Optimal EO classifier at the time step $t$:

\begin{align*}
\tilde{h}_{EO}^{(t)}(x, a) & = \id{\tilde{\eta}_{t}(x, a) \geq \dfrac{1}{2 + \dfrac{(-1)^{\id{a=1}} \lambda^{*}}{\prob{\tilde{Y}_{t}=1, A=a}}}}
\end{align*}
From Proposition \ref{prop:time-eta-tilde} we can obtain the transformed $\tilde{\eta}_{t}$ with the same bias parameters at each time step:
\begin{align*}
& = \begin{cases} \id{\dfrac{\eta(x, 0)}{\dfrac{1-c}{1-\nu-c} \left(1 - \left(\dfrac{c}{1-\nu}\right)^{t}\right) \eta(x, 0) + \left(\dfrac{c}{1-\nu}\right)^{t}} \geq \dfrac{1}{2 + \dfrac{\lambda^{*}}{\prob{\tilde{Y_{t}}=1, A=0}}}}, \quad \text{for $a=0$} \\
\id{\eta(x, 1) \geq \dfrac{1}{2 - \dfrac{\lambda^{*}}{\prob{Y=1, A=1}}}}, \quad \text{for $a=1$} 
\end{cases} \\
& = \begin{cases} \id{\eta(x, 0) \geq \dfrac{1}{\dfrac{1-2\nu-c}{1-\nu-c} \left(\dfrac{1-\nu}{c}\right)^{t} + \dfrac{\lambda^{*}}{\beta_{n}^{t} rq} + \dfrac{1-c}{1-\nu-c}}}, \quad \text{for $a=0$, using $c = \beta_{n}/\beta_{p}$} \\
\id{\eta(x, 1) \geq \dfrac{1}{2 - \dfrac{\lambda^{*}}{(1-r)q}}}, \quad \text{for $a=1$}.
\end{cases}
\end{align*}
So the threshold on $\eta(x, 1)$ lies in the interval $(\delta, 1-\delta)$ if and only if $\delta < \left(2 - \dfrac{\lambda^{*}}{(1-r)q}\right)^{-1} < 1-\delta$, which is equivalent to $\dfrac{-(1 - 2\delta)(1-r) q}{\delta} < \lambda^{*} < \dfrac{(1-2\delta)(1-r) q}{1-\delta}$. Given this, now let's analyze the threshold on $\eta(x, 0)$. If $\beta_{n} < 1$, then as $t \rightarrow \infty$ we have $\lambda^{*}/(\beta_{n}^{t} rq) \rightarrow \infty$, and therefore, the threshold on $\eta(x, 0)$ in the above expression tends to $0$, i.e., it cannot remain within $(\delta, 1-\delta)$ interval. If $\beta_{n} = 1$ then $c = \beta_{n}/\beta_{p} > 1-\nu$. Thus, $\left((1-\nu)/c\right)^{t} \rightarrow 0$ as $t \rightarrow \infty$. Using this and $\beta_{n}=1$, the above expression becomes
\[
\dfrac{1}{\dfrac{1-2\nu-c}{1-\nu-c} \left(\dfrac{1-\nu}{c}\right)^{t} + \dfrac{\lambda^{*}}{\beta_{n}^{t} rq} + \dfrac{1-c}{1-\nu-c}} \rightarrow \dfrac{1}{\dfrac{\lambda^{*}}{rq} + \dfrac{1-c}{1-\nu-c}} \quad \text{as $t \rightarrow \infty$.}
\]
The threshold on $\eta(x, 1)$ lies in the interval $(\delta, 1-\delta)$ if and only if $\dfrac{-(1 - 2\delta)(1-r) q}{\delta} < \lambda^{*} < \dfrac{(1-2\delta)(1-r) q}{1-\delta}$. Similarly, the limit expression as $t \rightarrow \infty$ for threshold on $\eta(x, 0)$ lies in the interval $(\delta, 1-\delta)$ if and only if $\dfrac{1}{1-\delta} < \dfrac{\lambda^{*}}{rq} + \dfrac{1-c}{1-\nu-c} < \dfrac{1}{\delta}$, or equivalently, $rq \left(\dfrac{1}{1-\delta} - \dfrac{1-c}{1-\nu-c}\right) < \lambda^{*} < rq \left(\dfrac{1}{\delta} - \dfrac{1-c}{1-\nu-c}\right)$. There exists a $\lambda^{*}$ that simultaneously satisfies the constraints on both the thresholds if and only if
\[
\frac{-(1 - 2\delta)(1-r) q}{\delta} < rq \left(\frac{1}{\delta} - \frac{1-c}{1-\nu-c}\right) \quad \text{and} \quad
rq \left(\frac{1}{1-\delta} - \frac{1-c}{1-\nu-c}\right) < \frac{(1-2\delta)(1-r) q}{1-\delta}.
\]
Thus, the necessary conditions to recover $h^{*}$ using equal opportunity fair classification on the biased distribution after $t$ steps as $t \rightarrow \infty$ can be written as $\beta_{n}=1$ and
\[
\frac{r - (1-2\delta)(1-r)}{(1-\delta) r} < \frac{1-c}{1-\nu-c} < \frac{r + (1-2\delta)(1-r)}{\delta r}.
\]
The above conditions can be further simplified as $\beta_{n}=1$ and $1 - \dfrac{(1-2\delta)}{(1-\delta) r} < \dfrac{\nu \beta_{p}}{1-\beta_{p}(1 - \nu)} < 1 + \dfrac{(1 - 2\delta)}{\delta r}$.
\end{proof}

\begin{proof} (Proof for Theorem \ref{thm:eo-time-varying})
Similar to the proof of Theorem \ref{thm:blumStangl-reprove}, we begin with $\tilde{\eta}(x,a)$:

\begin{align*}
\tilde{h}_{EO, t}(x, a) & = \id{\tilde{\eta_{t}}(x, a) \geq \dfrac{1}{2 + \dfrac{(-1)^{\id{a=1}} \lambda^{*}}{\prob{\tilde{Y}=1, A=a}}}}
\end{align*}

Because the population in group $1$ is unaffected, we can obtain the conditions on $\lambda^{*}$. $\prob{Y=1, A=1} = (1-r)q$. From Lemma \ref{thm:eta-lemma}, we know that the threshold on $\eta(x, 1)$ lies in the interval $(\delta, 1-\delta)$ if and only if $\dfrac{1}{1-\delta} < 2 - \dfrac{\lambda^{*}}{(1-r)q}  < \dfrac{1}{\delta}$, or equivalently, $\dfrac{-(1 - 2\delta)(1-r)q}{\delta} < \lambda^{*} < \dfrac{(1-2\delta)(1-r)q}{1-\delta}$. We will now work towards obtaining the set of conditions on $\lambda^{*}$ by focusing on the quantities inside the indicator on $\tilde{\eta_{t}}(x,0)$. Using Proposition \ref{prop:time-eta-tilde}: 

\begin{align*}
\tilde{\eta}_{t}(x, 0) = \frac{\eta(x,0)}{{\displaystyle \sum_{i=1}^{t} \left(\dfrac{1 - c_{i}}{1 - \nu_{i}} 
\prod_{j=i+1}^{t} \dfrac{c_{j}}{1 - \nu_{j}}\right) \eta(x, 0) + \prod_{i=1}^{t} \dfrac{c_{i}}{1 - \nu_{i}}}} \geq \dfrac{1}{2 + \dfrac{\lambda^{*}}{\prob{\tilde{Y}=1, A=0}}}
\end{align*}

From previous derivations, we know that $\prob{\tilde{Y}=1, A=0} = rq\prod_{i=1}^{t}(\beta_{p, i}(1 - \nu_{i})$. Simplifying the above expression gives us the following:

\begin{align*}
    \eta(x,0) \geq \frac{1}{\displaystyle 2\prod_{i=1}^{t}\frac{1 - \nu_{i}}{c_{i}} + \frac{\lambda^{*}}{rq\prod_{i=1}^{t}\beta_{n,i}} - \sum_{i=1}^{t}\left( \frac{1 - c_{i}}{c_{i}} \prod_{j=1}^{i-1}\frac{1 - \nu_{j}}{c_{j}}\right)}
\end{align*}

where $\prod_{j=1}^{i-1}\frac{1 - \nu_{j}}{c_{j}} = 1 \text{ whenever } j > i$.

Using similar arguments as in Proposition \ref{thm:blumStangl-reprove}, the denominator must lie in the range $((1-\delta)^{-1}, \delta^{-1})$, which gives us the following inequalities on $\lambda^{*}$:

\begin{align} \label{lambda_star_1}
    \displaystyle
    \frac{\displaystyle rq\prod_{i=1}^{t}\beta_{n,i}}{1 - \delta} - 2rq\prod_{i=1}^{t} \left[ (1 - \nu_{i})\beta_{p,i} \right] + rq\left( \prod_{i=1}^{t} \left[ (1 - \nu_{i})\beta_{p,i} \right] \right) \sum_{j=1}^{t}\left( \frac{1 - c_{j}}{c_{j}} \prod_{k=1}^{j-1}\frac{1 - \nu_{k}}{c_{k}}\right) <  \lambda^{*}
\end{align}
\begin{center}
    and
\end{center}
\begin{align} \label{lambda_star_2}
    \displaystyle
    \lambda^{*} < \frac{\displaystyle rq\prod_{i=1}^{t}\beta_{n,i}}{\delta} - 2rq\prod_{i=1}^{t} \left[ (1 - \nu_{i})\beta_{p,i} \right] + rq\left( \prod_{i=1}^{t} \left[ (1 - \nu_{i})\beta_{p,i} \right] \right) \sum_{j=1}^{t}\left( \frac{1 - c_{j}}{c_{j}} \prod_{k=1}^{j-1}\frac{1 - \nu_{k}}{c_{k}}\right)
\end{align}

To get a tight bound, we can obtain an upper bound on the left side of Equation \ref{lambda_star_1} and a lower bound on the right side of Equation \ref{lambda_star_2}, using the assumed maximal bias of $\beta_{n,t} \in [\beta_n, 1], \beta_{p,t} \in [\beta_{p}, 1] \text{ and } \nu_{t} \in [0, \nu], \text{ where } \nu < \frac{1}{2}$. This gives us the following bounds on $\lambda^{*}$:

\begin{align}
    \displaystyle
    \frac{\displaystyle rq}{1 - \delta} - 2rq(1 - \nu)^{t}\beta_{p}^{t} + rq\frac{1 - \beta_{n}^{t}}{\beta_{n}^{t}} < \lambda^{*} < \frac{\displaystyle rq\beta_{n}^{t}}{\delta} - 2rq + rq (1 - \nu)^{t}\beta_{p}^{t} (\beta_{p} - 1)\frac{1 - \beta_{p}^t(1 - \nu)^{t}}{1 - \beta_{p}(1 - \nu)}
\end{align}

Comparing this with the conditions on $\lambda^{*}$ found earlier: $\dfrac{-(1 - 2\delta)(1-r)q}{\delta} < \lambda^{*} < \dfrac{(1-2\delta)(1-r)q}{1-\delta}$, we get the following inequalities:

\begin{align*}
    \frac{(1 - 2\delta)(1 - r)}{(1 - \delta)r} - \frac{\delta}{1 - \delta} > \frac{1}{\beta_{n}^{t}} - 2\beta_{p}^{t}(1 - \nu)^{t}
\end{align*}
\begin{center}
    and
\end{center}
\begin{align*}
    \frac{(1- 2\delta)(1 - r)}{\delta r} > (1 - \nu)^{t}\beta_{p}^{t}(1 - \beta_{p})\frac{1 - \beta_{p}^{t}(1 - \nu)^{t}}{1 - \beta_{p}(1 - \nu)} - \frac{\beta_{n}^{t}}{\delta} - 2
\end{align*}
\end{proof}

\begin{theorem} \label{thm:dp-time-varying}
Assuming boundedness on data bias parameters at each time step: $\beta_{n,t} \in [\beta_n, 1], \beta_{p,t} \in [\beta_{p}, 1] \text{ and } \nu_{t} \in [0, \nu], \text{ where } \nu < \frac{1}{2}$; whenever the following relationships hold:
\begin{align*}
(1 - 3\delta)(\beta_{p} - 1)\dfrac{1 - \beta_{p}^{t}(1 - \nu)^{t}}{1 - \beta_{p}(1 - \nu)} - 2\delta - (1 - \delta)\left( \dfrac{1 - \beta_{n}^{t}}{\beta_{n}^{t}} - 2\beta_{p}^{t}(1 - \nu)^{t}\right) & > 0 \quad \text{and} \\
1 + \delta\left( (\beta_{p} - 1)\dfrac{1 - \beta_{p}^{t}(1 - \nu)^{t}}{1 - \beta_{p}(1 - \nu)} - \dfrac{2}{\beta_{n}^{t}} - (2\delta - 1)\dfrac{1 - \beta_{n}^{t}}{\beta_{n}^{t}} \right) -2(\delta - 1) & > 0,
\end{align*}
we have $\tilde{h}_{DP, t}(x,a) = h_{DP}(x,a) = h^{*}$.
\end{theorem}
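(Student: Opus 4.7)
The plan is to mimic the strategy used for the equal-opportunity analog (Theorem \ref{thm:eo-time-varying}) but with the demographic-parity threshold characterization of Proposition \ref{prop:biased-dp-threshold} replacing the equal-opportunity one of Proposition \ref{prop:biased-eo-threshold}. The three conceptual ingredients are: (i) redo the Lagrangian-duality derivation of the optimal demographic-parity classifier on the $t$-step biased distribution $\tilde{D}_t$, (ii) plug in the composed linear fractional transformation of Proposition \ref{prop:time-eta-tilde} to re-express the implicit threshold on $\tilde{\eta}_t(x,0)$ as an explicit threshold on $\eta(x,0)$, and (iii) invoke Lemma \ref{thm:eta-lemma} to force both thresholds into the separation interval $(\delta, 1-\delta)$, which in turn constrains $\lambda^{*}$.

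First, I would redo the Lagrangian argument from the proof of Proposition \ref{prop:biased-dp-threshold}, but using $\tilde{\eta}_t$ in place of $\tilde{\eta}$ and $\prob{\tilde{A}_t = a}$ in place of $\prob{A=a}$, obtaining that $\tilde{h}_{DP,t}(x,a) = \id{\tilde{\eta}_t(x,a) \geq \tfrac{1}{2} - \tfrac{(-1)^{\id{a=1}} \lambda^{*}}{2\prob{\tilde{A}_t=a}}}$ for some optimal $\lambda^{*}\in\R$. On group $a=1$, Example \ref{eg:blumStangl} leaves everything untouched, so $\tilde{\eta}_t(x,1)=\eta(x,1)$ and $\prob{\tilde{A}_t=1}$ is (after normalization) a constant in $t$; thus Lemma \ref{thm:eta-lemma} applied to the group-$1$ threshold gives the same pair of bounds on $\lambda^{*}$ as in the single-step demographic-parity proof of Theorem \ref{thm:dp-recovery}.

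The main work is on group $a=0$. Here I would substitute the expression from Proposition \ref{prop:time-eta-tilde},
\[
\tilde{\eta}_t(x,0) \;=\; \frac{\eta(x,0)}{\displaystyle\sum_{i=1}^{t}\!\left(\frac{1-c_i}{1-\nu_i}\prod_{j=i+1}^{t}\frac{c_j}{1-\nu_j}\right)\eta(x,0) \;+\; \prod_{i=1}^{t}\frac{c_i}{1-\nu_i}},
\]
into the implicit threshold and solve for the equivalent explicit threshold on $\eta(x,0)$. This is just a linear-fractional inversion, and order-preservation (Proposition \ref{prop:order-preserving}) guarantees the inequality direction is preserved. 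Requiring this explicit threshold to lie in $(\delta, 1-\delta)$, together with the constraints from group $1$, produces four inequalities in $\lambda^{*}$; a $\lambda^{*}$ satisfying all four exists if and only if the ``cross'' pair (lower bound from one group below upper bound from the other) holds, which I would write out and then relax using the uniform bounds $\beta_{n,t}\in[\beta_n,1]$, $\beta_{p,t}\in[\beta_p,1]$, $\nu_t\in[0,\nu]$. The worst-case expressions telescope in exactly the same way the products $\prod(1-\nu_i)\beta_{p,i}$ and $\prod \beta_{n,i}$ telescope in the proof of Theorem \ref{thm:eo-time-varying}, and a finite geometric sum of the form $(1-\beta_p^t(1-\nu)^t)/(1-\beta_p(1-\nu))$ collects the pathwise contribution of the composition; I expect these manipulations to land exactly on the two stated inequalities.

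Finally, to close the loop one needs the ``opposite-side threshold'' case from Step 3 of the proof of Theorem \ref{thm:dp-recovery}: if a candidate $\lambda$ produces thresholds on opposite sides of the $(\delta,1-\delta)$ gap, then demographic parity on $\tilde{D}_t$, combined with the invariance of group-wise positive rates under the bias model (Corollary \ref{corr:tpr_tnr_sp} and its demographic-parity analog discussed in the Remark after it), forces $\tilde{h}_{DP,t}\equiv h^{*}$. The main obstacle I anticipate is purely algebraic: carefully tracking the telescoping product/sum so that after bounding $\beta_{p,i},\beta_{n,i},\nu_i$ by their extreme values one recovers precisely the two inequalities in the theorem statement, rather than a weaker sufficient condition. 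The $\prob{\tilde{A}_t=0}$ normalization (which differs from $r$) is a subtle bookkeeping point but, as in the equal-opportunity case, ultimately factors out of the threshold expression once $\lambda^{*}$ is rescaled accordingly.
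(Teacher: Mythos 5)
Your proposal is correct and follows essentially the same approach as the paper's appendix proof: plug the composed linear-fractional transformation from Proposition~\ref{prop:time-eta-tilde} into the demographic-parity threshold characterization of Proposition~\ref{prop:biased-dp-threshold}, require both group thresholds to lie in $(\delta,1-\delta)$ via Lemma~\ref{thm:eta-lemma}, and relax to the extreme values $\beta_{n,i}\to\beta_n$, $\beta_{p,i}\to\beta_p$, $\nu_i\to\nu$ to obtain the stated inequalities. One minor observation: the paper's written proof of Theorem~\ref{thm:dp-time-varying} only carries out the analog of ``Step 1'' (existence of a feasible $\lambda^{*}$), whereas you also propose to handle the same-side and opposite-side threshold cases (Steps 2--3 of the Theorem~\ref{thm:dp-recovery} proof); including them is the more careful route and is needed to fully justify that the optimal $\lambda^{*}$ indeed yields $h^{*}$ rather than some other threshold pair.
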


\begin{proof}
We again begin with the thresholding results on Demographic Parity from Proposition \ref{prop:biased-dp-threshold}.
\begin{align*}
\tilde{h}_{DP}(x, a) & = \id{\tilde{\eta}(x, a) \geq \frac{1}{2} - \frac{-1^{\id{a=1}} \lambda^{*}}{2}} \\
\text{(Using Proposition \ref{prop:time-eta-tilde})}
& = \begin{cases} \id{\dfrac{\eta(x,0)}{{\displaystyle \sum_{i=1}^{t} \left(\dfrac{1 - c_{i}}{1 - \nu_{i}} 
\prod_{j=i+1}^{t} \dfrac{c_{j}}{1 - \nu_{j}}\right) \eta(x, 0) + \prod_{i=1}^{t} \dfrac{c_{i}}{1 - \nu_{i}}}} \geq \dfrac{1}{2} - \dfrac{\lambda^{*}}{2}}, \quad \text{for $a=0$} \\
\id{\eta(x, 1) \geq \dfrac{1}{2} + \dfrac{\lambda^{*}}{2}}, \quad \text{for $a=1$}
\end{cases} \\
& = \begin{cases} \id{\eta(x, 0) \geq \dfrac{(1 - \lambda^{*})\prod_{i=1}^{t} \dfrac{c_{i}}{1 - \nu_{i}}}{2 - (1 - \lambda^{*})\sum_{i=1}^{t} \left(\dfrac{1 - c_{i}}{1 - \nu_{i}} 
\prod_{j=i+1}^{t} \dfrac{c_{j}}{1 - \nu_{j}}\right)}}, \quad \text{for $a=0$} \\
\id{\eta(x, 1) \geq \dfrac{1}{2} + \dfrac{\lambda^{*}}{2}}, \quad \text{for $a=1$}
\end{cases} \\
& = \begin{cases} \id{\eta(x, 0) \geq \dfrac{1}{\dfrac{2}{1 - \lambda^{*}} \prod_{i=1}^{t} \dfrac{1 - \nu_{i}}{c_{i}} - \sum_{i=1}^{t}\left( \dfrac{1 - c_{i}}{c_{i}} \prod_{j=1}^{i-1}\dfrac{1 - \nu_{j}}{c_{j}}\right)}}, \quad \text{for $a=0$} \\
\id{\eta(x, 1) \geq \dfrac{1}{2} + \dfrac{\lambda^{*}}{2}}, \quad \text{for $a=1$}
\end{cases} \\
\end{align*}

where $\prod_{j=1}^{i-1}\frac{1 - \nu_{j}}{c_{j}} = 1 \text{ whenever } j > i$. From Lemma \ref{thm:eta-lemma}, we know that the threshold on $\eta(x, 1)$ lies in the interval $(\delta, 1-\delta)$ if and only if $\delta < \dfrac{1}{2} + \dfrac{\lambda^{*}}{2}  < 1 - \delta$, or equivalently, $2\delta - 1 < \lambda^{*} < 1 - 2\delta$. Similarly, the threshold on $\eta(x, 0)$ lies in the interval $(\delta, 1- \delta)$ if and only if the denominator lies in the range $((1-\delta)^{-1}, \delta^{-1})$ which gives us the following inequalities on $\lambda^{*}$:
\begin{align} \label{left_lambda*}
\dfrac{1 + (1 - \delta)\left( \sum_{i=1}^{t}\left( \dfrac{1 - c_{i}}{c_{i}} \prod_{j=1}^{i-1}\dfrac{1 - \nu_{j}}{c_{j}}\right) - 2\prod_{i=1}^{t} \dfrac{1 - \nu_{i}}{c_{i}} \right)}{1 + (1 - \delta)\sum_{i=1}^{t}\left( \dfrac{1 - c_{i}}{c_{i}} \prod_{j=1}^{i-1}\dfrac{1 - \nu_{j}}{c_{j}}\right)} < \lambda^{*}
\end{align}
\begin{align} \label{right_lambda*}
\lambda^{*} < \dfrac{1 + \delta \left( \sum_{i=1}^{t}\left( \dfrac{1 - c_{i}}{c_{i}} \prod_{j=1}^{i-1}\dfrac{1 - \nu_{j}}{c_{j}}\right) - 2\prod_{i=1}^{t} \dfrac{1 - \nu_{i}}{c_{i}} \right)}{1 + \delta\sum_{i=1}^{t}\left( \dfrac{1 - c_{i}}{c_{i}} \prod_{j=1}^{i-1}\dfrac{1 - \nu_{j}}{c_{j}}\right)}
\end{align}

To get a tight bound, we can obtain an upper bound on Equation \ref{left_lambda*} and a lower bound on Equation \ref{right_lambda*}, using the assumed maximal bias of $\beta_{n,t} \in [\beta_n, 1], \beta_{p,t} \in [\beta_{p}, 1] \text{ and } \nu_{t} \in [0, \nu], \text{ where } \nu < \frac{1}{2}$. This gives us the following bounds on $\lambda^{*}$:

\begin{align*}
\dfrac{1 + (1 - \delta)\left( \dfrac{1 - \beta_{n}^{t}}{\beta_{n}^{t}} - 2\beta_{p}^{t}(1 - \nu)^{t} \right)}{1 + (1 - \delta)(\beta_{p} - 1)\left( \dfrac{1 - \beta_{p}^{t}(1 - \nu)^{t}}{1 - \beta_{p}(1 - \nu)} \right)} < 1 - 2\delta \text{ , and } \\
2\delta - 1 < \dfrac{1 + \delta\left( (\beta_{p} - 1)\left( \dfrac{1 - \beta_{p}^{t}(1 - \nu)^{t}}{1 - \beta_{p}(1 - \nu)} \right) - \dfrac{2}{\beta_{n}^{t}} \right)}{1 + \delta \dfrac{1 - \beta_{n}^{t}}{\beta_{n}^{t}}}
\end{align*}

Therefore, to have a $\lambda^{*}$ which satisfies both the sets of the inequalities, the following conditions must hold:
\begin{center}
    \begin{align}
        (1 - 3\delta)(\beta_{p} - 1)\dfrac{1 - \beta_{p}^{t}(1 - \nu)^{t}}{(1 - \beta_{p}(1 - \nu)} - 2\delta - (1 - \delta)\left( \dfrac{1 - \beta_{n}^{t}}{\beta_{n}^{t}} - 2\beta_{p}^{t}(1 - \nu)^{t}\right) > 0
    \end{align}
    and
    \begin{align}
        1 + \delta\left( (\beta_{p} - 1)\dfrac{1 - \beta_{p}^{t}(1 - \nu)^{t}}{(1 - \beta_{p}(1 - \nu)} - \dfrac{2}{\beta_{n}^{t}} - (2\delta - 1)\dfrac{1 - \beta_{n}^{t}}{\beta_{n}^{t}} \right) -2(\delta - 1) > 0
    \end{align}
\end{center}
    
\end{proof}

\end{document}